\icmltitlerunning{Asynchronous Coagent Networks}
\newtheorem{thm}{Theorem}[]
\newtheorem{lemma}{Lemma}[]
\newtheorem{prop}{Property}[]
\newtheorem{cor}{Corollary}[]
\newtheorem{assumption}{Assumption}
\newcommand{\pre}{\text{pre}}
\newcommand{\post}{\text{post}}
\newcommand{\all}{\text{all}}
\newcommand{\ith}{i^\text{th}}
\begin{document}

\twocolumn[
\icmltitle{Asynchronous Coagent Networks}



\icmlsetsymbol{equal}{*}

\begin{icmlauthorlist}
\icmlauthor{James E. Kostas}{equal,um}
\icmlauthor{Chris Nota}{equal,um}
\icmlauthor{Philip S. Thomas}{um}
\end{icmlauthorlist}

\icmlaffiliation{um}{College of Information and Computer Sciences, University of Massachusetts, Amherst, MA, USA}

\icmlcorrespondingauthor{James E. Kostas}{jekostas@umass.edu}

\icmlkeywords{Machine Learning, ICML}

\vskip 0.3in
]



\printAffiliationsAndNotice{\icmlEqualContribution} 

\begin{abstract}
\emph{Coagent policy gradient algorithms} (CPGAs) are reinforcement learning algorithms for training a class of stochastic neural networks called \emph{coagent networks}.
In this work, we prove that CPGAs converge to locally optimal policies.
Additionally, we extend prior theory to encompass \emph{asynchronous} and \emph{recurrent} coagent networks.
These extensions facilitate the straightforward design and analysis of hierarchical reinforcement learning algorithms like the option-critic, and eliminate the need for complex derivations of customized learning rules for these algorithms.
\end{abstract}
\section{Introduction}
\emph{Reinforcement learning} (RL) policies are often represented by \emph{stochastic neural networks} (SNNs).
SNNs are networks where the outputs of some units are not deterministic functions of the units' inputs.
Several classes of algorithms from various branches of RL research, such as those using options \citep{sutton1999between} or hierarchical architectures \citep{bacon2017option}, can be formulated as using SNN policies (see Section \ref{sec:relatedwork} for more examples).
In this paper we study the problem of deriving learning rules for RL agents with SNN policies.
%

\emph{Coagent networks} are one formulation of SNN policies for RL agents \citep{Thomas2011}. 
Coagent networks are comprised of conjugate agents, or \emph{coagents}; each coagent is an RL algorithm learning and acting cooperatively with the other coagents in its network.
In this paper, we focus specifically on the case where each coagent is a policy gradient RL algorithm, and call the resulting algorithms \emph{coagent policy gradient algorithms} (CPGAs).
Intuitively, CPGAs cause each individual coagent to view the other coagents as part of the environment.
That is, individual coagents learn and interact with the combination of both the environment and the other coagents as if this combination was a single environment.


Typically, algorithm designers using SNNs must create specialized training algorithms for their architectures and prove the correctness of these algorithms.
Coagent policy gradient algorithms (CPGAs) provide an alternative: they allow algorithm designers to easily derive stochastic gradient update rules for the wide variety of policy architectures that can be represented as SNNs.
To analyze a given policy architecture (SNN), one must simply identify the inputs and outputs of each coagent in the network.
The theory in this paper then immediately provides a policy gradient (stochastic gradient ascent) learning rule for each coagent, providing a simple mechanism for obtaining convergent update rules for complex policy architectures.
This process is applicable to SNNs across several branches of RL research.

This paper also extends that theory and the theory in prior work to encompass \emph{recurrent} and \emph{asynchronous} coagent networks.
A network is \emph{recurrent} if it contains cycles.
A network is \emph{asynchronous} if units in the neural network do not execute simultaneously or at the same rate.
The latter property allows distributed implementations of large neural networks to operate asynchronously.
Additionally, a coagent network's capacity for \emph{temporal abstraction} (learning, reasoning, and acting across different scales of time and task) may be enhanced, not just through the network topology, but by designing networks where different coagents execute at different rates.
Furthermore, these extensions facilitate the straightforward design and analysis of hierarchical RL algorithms like the option-critic.

The contributions of this paper are: \textbf{1)} a complete and formal proof of a key CPGA result on which this paper relies (prior work provides an informal and incomplete proof), \textbf{2)} a generalization of the CPGA framework to handle asynchronous recurrent networks, \textbf{3)} empirical support of our theoretical claims regarding the
gradients of asynchronous CPGAs, 
and \textbf{4)} a proof that asynchronous CPGAs generalize the option-critic framework, which serves as a demonstration of how CPGAs eliminate the need for the derivation of custom learning rules for architectures like the option-critic.
Our simple mechanistic approach to gradient derivation for the option-critic is a clear example of the usefulness of the coagent framework to any researcher or algorithm designer creating or analyzing stochastic networks for RL.

\section{Related Work}
\label{sec:relatedwork}
\citet{klopf1982hedonistic} theorized that traditional models of classical and operant conditioning could be explained by modeling biological neurons as \emph{hedonistic}, that is, seeking excitation and avoiding inhibition.
The ideas motivating coagent networks bear a deep resemblance to Klopf's proposal.

Stochastic neural networks have applications dating back at least to Marvin Minsky's \emph{stochastic neural analog reinforcement calculator}, built in 1951 \citep{russell2016artificial}.
Research of stochastic learning automata continued this work \citep{narendra1989learning}; one notable example is the \emph{adaptive reward-penalty} learning rule for training stochastic networks \citep{barto1985learning}.
Similarly, \citet{williams1992simple} proposed the well-known REINFORCE algorithm with the intent of training stochastic networks. 
Since then, REINFORCE has primarily been applied to deterministic networks.
However, \citet{Thomas2011b} proposed CPGAs for RL, building on the original intent of \citet{williams1992simple}.

Since their introduction, CPGAs have proven to be a practical tool for defining and improving RL agents.
CPGAs have been used to discover ``motor primitives'' in simulated robotic control tasks and to solve RL problems with high-dimensional action spaces \citep{Thomas2012}.
They are the RL precursor to the more general stochastic computation graphs.
%

The formalism of \emph{stochastic computation graphs} was proposed to describe networks with a mixture of stochastic and deterministic nodes, with applications to supervised learning, unsupervised learning, and RL \citep{schulman2015gradient}.
Several recently proposed approaches fit into the formalism of stochastic networks, but the relationship has frequently gone unnoticed.
One notable example is the \emph{option-critic} architecture \citep{bacon2017option}.
The option-critic provides a framework for learning \emph{options} \citep{sutton1999between}, a type of high-level and temporally extended action, and how to choose between options. Below, we show that this framework is a special case of coagent networks.  
Subsequent work, such as the \emph{double actor-critic architecture for learning options} \citep{NIPS2019_8475} and the \emph{hierarchical option-critic} \citep{NIPS2018_8243} also fit within and can be informed by the coagent framework.

%

The Horde architecture \citep{Sutton2011} is similar to the coagent framework in that it consists of independent RL agents working cooperatively.
However, the Horde architecture does not cause the collection of all agents to follow the gradient of the expected return with respect to each agent's parameters---the property that allows us to show the convergence of CPGAs.

%

CPGAs can be viewed as a special case of \emph{multi-agent reinforcement learning} (MARL), the application of RL in settings where multiple agents exist and interact.
However, MARL differs from CPGAs in that MARL agents typically have separate manifestations within the environment; additionally, the objectives of the MARL agents may or may not be aligned.
Working within the MARL framework, researchers have proposed a variety of principled algorithms for cooperative multi-agent learning \citep{guestrin2002coordinated, zhang2007conditional, liu2014optimal}.
An overview of MARL is given by \citet{bucsoniu2010multi}.

\section{Background}
\label{sec:background}
We consider an MDP, $M=(\mathcal S, \mathcal A, \mathcal R, P, R, d_0,\gamma)$, where 
$\mathcal S$ is the finite set of possible \emph{states}, 
$\mathcal A$ is the finite set of possible \emph{actions},
and $\mathcal R$ is the finite set of possible \emph{rewards} (although this work extends to MDPs where these sets are infinite and uncountable, the assumption that they are finite simplifies notation).
Let $t \in \{0,1,2,\dots\}$ denote the time step. 
$S_t$, $A_t$, and $R_t$ are the state, action, and reward at time $t$, and are random variables that take values in $\mathcal S$, $\mathcal A$, and $\mathcal R$, respectively. 
$P{:} \mathcal S {\times} \mathcal A {\times} \mathcal S {\to} [0, 1]$ is the \emph{transition function}, given by $P(s,a,s')\coloneqq\Pr(S_{t+1}{=}s'|S_t{=}s,A_t{=}a)$.
$R{:} \mathcal S {\times} \mathcal A {\times} \mathcal S {\times} \mathcal R {\to} [0, 1]$ is the \emph{reward distribution}, given by $R(s,a,s',r)\coloneqq\Pr(R_t{=}r|S_t{=}s,A_t{=}a,S_{t+1}{=}s')$.
The \emph{initial state distribution}, $d_0{:}\mathcal S \to [0, 1]$, is given by $d_0(s)\coloneqq\Pr(S_0{=}s)$.
The reward discount parameter is $\gamma \in [0,1]$.
An \emph{episode} is a sequence of states, actions, and rewards, starting from $t {=} 0$ and continuing indefinitely.
We assume that the discounted sum of rewards over an episode is finite.

A policy, $\pi:\mathcal S \times \mathcal A \to [0,1]$, is a stochastic method of selecting actions, such that $\pi(s,a)\coloneqq\Pr(A_t{=}a|S_t{=}s)$.
A \emph{parameterized policy} is a policy that takes 
a parameter vector $\theta \in \mathbb R^n$.
Different parameter vectors result in 
different policies. 
More formally, we redefine the symbol $\pi$ to denote a parameterized policy, $\pi:\mathcal S \times \mathcal A \times \mathbb R^n \to [0,1]$, such that for all $\theta \in \mathbb R^n$, $\pi(\cdot,\cdot,\theta)$ is a policy.
We assume that $\partial \pi(s,a, \theta) / \partial \theta$ exists for all $s \in \mathcal{S}, a \in \mathcal{A}, \text{ and } \theta \in \mathbb R^n$.
An agent's goal is typically to find a policy that maximizes the \emph{objective function}
$
J(\pi)\coloneqq\mathbf{E}\left [ \sum_{t=0}^\infty \gamma^t R_t \middle| \pi \right ],
$
where conditioning on $\pi$ denotes that, for all $t$, $A_{t} \sim \pi(S_{t},\cdot)$.
The state-value function, $v^\pi:\mathcal S \to \mathbb R$, is defined as $ v^\pi(s)\coloneqq\mathbf{E}\left [\sum_{k=0}^\infty \gamma^k R_{t+k}\middle |S_t{=}s, \pi\right ].
$
The discounted return, $G_t$, is defined as $G_t \coloneqq \sum_{k=0}^\infty\gamma^{k} R_{t+k}$.
We 
denote the objective function for a policy that has parameters $\theta$ as $J(\theta)$, and condition probabilities on $\theta$ to denote that the parameterized policy uses parameter vector $\theta$.

A \emph{coagent network} is a parameterized policy that consists of an acyclic network of nodes (coagents), which do not share parameters.
Each coagent can have several inputs that may include the state at time $t$, a noisy and incomplete observation of the state at time $t$, and/or the outputs of other coagents.
When considering the $\ith$ coagent, $\theta$ can be partitioned into two vectors, $\theta_i \in \mathbb R^{n_i}$ (the parameters used by the $\ith$ coagent) and $\bar \theta_i \in \mathbb R^{n-n_i}$ (the parameters used by all other coagents). 
From the point of view of the $i^\text{th}$ coagent, $A_t$ is produced from $S_t$ in three stages:
execution of the nodes prior to the $i^\text{th}$ coagent (nodes whose outputs are required to compute the input to the $i^\text{th}$ coagent), execution of the $i^\text{th}$ coagent, and execution of the remaining nodes in the network to produce the final action. 
This process is depicted graphically in Figure \ref{fig:CoagentDiagram_cropped} and described in detail below.

First, we define a parameterized distribution $\pi^\pre_i(S_t, \cdot, \bar \theta_i)$ to capture how the previous coagents in the network produce their outputs given the current state. 
The output of the previous coagents is a random variable, which we denote by $U^\pre_t$, and which takes continuous and/or discrete values in some set $\mathcal U^\pre$. $U^\pre_t$ is sampled from the distribution $\pi^\pre_i(S_t, \cdot, \bar \theta_i)$.
Next, the $\ith$ coagent takes $S_t$ and $U^\pre_t$ as input. We denote this input, $(S_t, U_t^\pre)$, as $X_t$ (or $X^i_t$ if it is not unambiguously referring to the $\ith$ coagent), and refer to $X_t$ as the \emph{local state}.
Given this input, the coagent produces the output $U^i_t$ (below, when it is unambiguously referring to the output of the $\ith$ coagent, we make the $i$ implicit and write $U_t$).
The conditional distribution of $U^i_t$ is given by the $\ith$ coagent's policy, 
$\pi_i(X_t,\cdot, \theta_i)$. 
Although we allow the $\ith$ coagent's output to depend directly on $S_t$, it may be parameterized to only depend on $U_t^\text{pre}$.
Finally, $A_t$ is sampled according to a distribution $\pi_i^\post(X_t,U^i_t, \cdot, \bar \theta_i)$, which captures how the subsequent coagents in the network produce $A_t$.
Below, we sometimes make $\bar \theta_i$ and $\theta_i$ implicit and write the three policy functions as $\pi^\pre_i(S_t, \cdot)$, $\pi_i(X_t,\cdot)$, and $\pi_i^\post(X_t,U^i_t, \cdot)$.
Figure \ref{fig:Causal} depicts the setup that we have described and makes relevant independence properties clear.
%
\begin{figure}
    \centering
    \includegraphics[width=.40\textwidth]{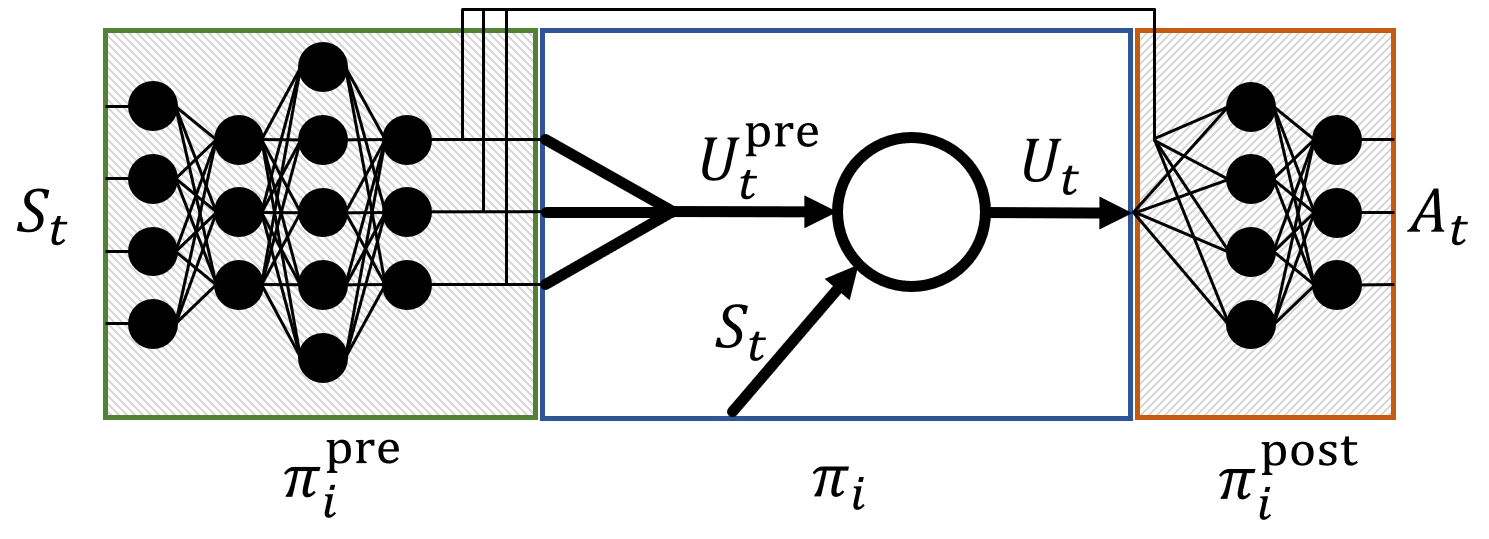}
    \caption{
       Diagram of the three-step process for action generation for a fully connected feedforward network (we do not require the network to have this structure).
       The circle in the middle denotes the $i^\text{th}$ coagent. First, preceding nodes are executed to compute the inputs to this coagent. Second, the coagent uses these inputs to produce its output, $U_t$. Third, the remainder of the network is executed to produce an action.
       For each coagent, all inputs are optional.  That is, our approach encompasses architectures where $S_t$ and/or components of $U^\pre_t$ are not provided to some coagents.
    }
    \label{fig:CoagentDiagram_cropped}
\end{figure}
\section{The Coagent Policy Gradient Theorem}
\label{sec:acorrect}
Consider what would happen if the $\ith$ coagent ignored all of the complexity in this problem setup and simply learned and interacted with the combination of both the environment and the other coagents as if this combination was a single environment.
From the $\ith$ coagent's point of view, the actions would be $U_t$, the rewards would remain $R_t$, and the state would be $X_t$ (that is, $S_t$ and $U_t^\text{pre}$). Note that the coagent may ignore components of this local state, such as the $\mathcal S$ component.
Each coagent could naively implement an unbiased policy gradient algorithm, like REINFORCE \citep{williams1992simple}, based only on these inputs and outputs.
We refer to the expected update in this setting as the \emph{local policy gradient}, $\Delta_i$, for the $\ith$ coagent. 
Formally, the local policy gradient of the $\ith$ coagent is:
$$
    \Delta_i(\theta_i) \coloneqq \mathbf{E} \left[ \sum_{t=0}^\infty \gamma^t G_t \frac{\partial \ln\left ( \pi_i\left ( X_t, U_t, \theta_i \right ) \right )}{\partial \theta_i} \middle | \theta \right].
$$
The local policy gradient should not be confused with $\nabla J(\theta)$, which we call the \emph{global policy gradient}, or with $\nabla J(\theta)$'s $\ith$ component, $[\partial J(\theta)/\partial \theta_i]^\intercal$
(notice that $\nabla J(\theta) = \big [ \frac{\partial J(\theta)}{\partial \theta_1}^\intercal , \frac{\partial J(\theta)}{\partial \theta_2}^\intercal,\dotsc,\frac{\partial J(\theta)}{\partial \theta_m}^\intercal\big ]^\intercal$, where $m$ is the number of coagents).
Unlike a network following $\nabla J(\theta)$ or a coagent following its corresponding component of $\nabla J(\theta)$, a coagent following a local policy gradient faces a non-stationary sequence of partially-observable MDPs as the other coagents (part of its environment) update and learn.
One could naively design an algorithm that uses this local policy gradient and simply hope for good results, but without theoretical analysis, this hope would not be justified.

%
The \emph{coagent policy gradient theorem} (CPGT) justifies this approach: If $\theta$ is fixed and all coagents update their parameters following unbiased estimates, $\widehat \Delta_i(\theta_i)$, of their local policy gradients, then the entire network will follow an unbiased estimator of $\nabla J(\theta)$. 
For example, if every coagent performs the following update simultaneously at the end of each episode, then the entire network will be performing stochastic gradient ascent on $J$ (without using backpropagation): 
\begin{align}
    \theta_i \gets \theta_i + \alpha \sum_{t=0}^\infty \gamma^t G_t \left(\frac{\partial \ln\left ( \pi_i\left ( X_t, U_t, \theta_i \right ) \right )}{\partial \theta_i}\right). 
\end{align} 
In practice, one would use a more sophisticated policy gradient algorithm than this simple variant of REINFORCE.

Although \citet{Thomas2011} present the CPGT in their Theorem 3, the provided proof is lacking in two ways.
First, it is not general enough for our purposes because it only considers networks with two coagents. 
Second, it is missing a crucial step. 
They define a new MDP, the CoMDP, which models the environment faced by a coagent
%
%
, but they do not show that this definition accurately describes the environment that the coagent faces. 
Without this step, \citet{Thomas2011} have shown that there is a new MDP for which the policy gradient is a component of $\nabla J(\theta)$, but not that this MDP has any relation to the coagent network. 
\begin{figure}
    \centering
    \includegraphics[width=.45\textwidth]{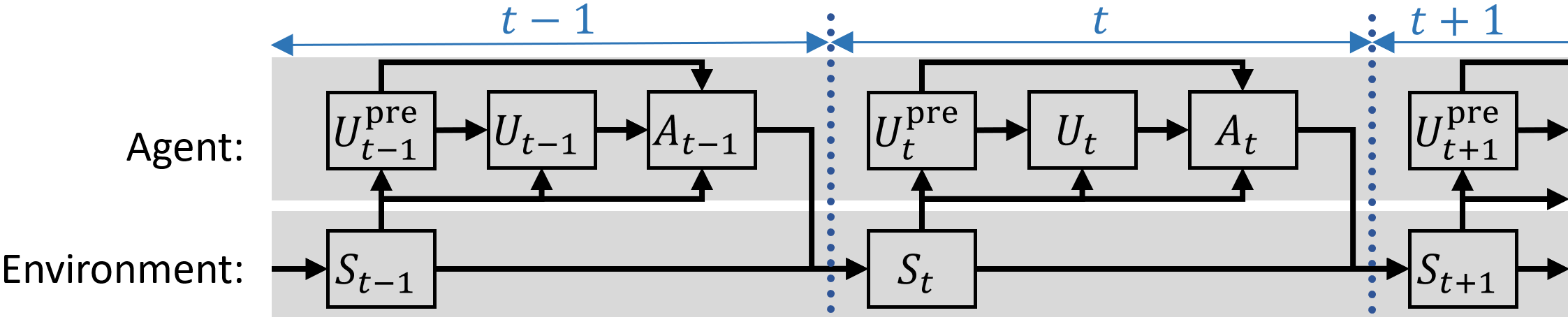}
    \caption{Bayesian network depicting the relationships of relevant random variables. 
    }
    \label{fig:Causal}
\end{figure}
\subsection{The Conjugate Markov Decision Process (CoMDP)}
In order to reason about the local policy gradient, we begin by modeling the $\ith$ coagent's environment as an MDP, called the CoMDP.
Given $M$, $i$, $\pi^\pre_i$, $\pi^\post_i$, and $\bar \theta_i$, we define a corresponding CoMDP, $M^i$, as $M^i\coloneqq(\mathcal X^i,\mathcal U^i, \mathcal R^i, P^i, R^i, d_0^i, \gamma_i)$. Although our proof of the CPGT relies heavily on these definitions, due to space limitations the formal definitions for each of these terms are provided in Section \ref{sec:complete_comdp} of the supplementary material.
A brief summary of definitions follows.

We write $\Tilde{X}_t^i$, $\Tilde{U}_t^i$, and $\Tilde{R}_t^i$ to denote the state, action, and reward of $M^i$ at time $t$. These \emph{random variables} in the CoMDP are written with tildes to provide a visual distinction between terms from the CoMDP and original MDP.  Additionally, when it is clear that we are referring to the $\ith$ CoMDP, we often make $i$ implicit (for example, we might write $\Tilde{X}_t^i$ as $\Tilde{X}_t$).

The input (analogous to the state) to the $\ith$ coagent is in the set $\mathcal X^i \coloneqq \mathcal S \times \mathcal U^\pre_i$.  For $x \in \mathcal{X}$, we denote the $\mathcal{S}$ component as $x.s$ and the $\mathcal U^\pre$ component as $x.u_\pre$.  We also sometimes denote an $x \in \mathcal{X}^i$ as $(x.s, x.u_\pre)$. $\mathcal U^i$ is an arbitrary set that denotes the output of the $\ith$ coagent. $\mathcal R^i\coloneqq\mathcal R$ and $\gamma_i\coloneqq \gamma$ represent the reward set and the discount factor, respectively. 
We denote the transition function as $P^i(x,u,x')$, the reward function as $R^i(x,u,x',r)$, and initial state distribution as $d^i_0(x)$.
We write $J_i(\theta_i)$ to denote the objective function of $M^i$. 
\subsection{The CoMDP Models the Coagent's Environment}

Here we show that our definition of the CoMDP correctly models the coagent's environment. 
We do so by presenting a series of properties and lemmas that each establish different components of the relationship between the CoMDP and the environment faced by a coagent. 
The proofs of these properties and theorems are provided in Section \ref{sec:complete_CPGT_proofs} of the supplementary material. 

In Properties \ref{correct_di0} and \ref{correct_s0}, by manipulating the definitions of $d_0^i$ and $\pi_i^\text{pre}$, we show that $d_0^i$ and the distribution of $\Tilde X_0.s$ capture the distribution of the inputs to the $\ith$ coagent.
\begin{prop}
$
\forall x \in \mathcal X,
d_0^i(x)=\Pr(S_0{=}x.s, U_0^\text{\emph{pre}}{=}x.u_\text{\emph{pre}}).
$
\end{prop}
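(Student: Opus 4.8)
The plan is to unfold the definition of $d_0^i$ and recognize each of its factors as a probability already appearing in the specification of $M$ and of $\pi_i^\pre$. By construction (see Section~\ref{sec:complete_comdp}), the CoMDP's initial-state distribution is the product describing ``first draw the original initial state, then run the preceding coagents on it,''
\[
d_0^i(x) = d_0(x.s)\,\pi_i^\pre(x.s, x.u_\pre, \bar\theta_i),
\]
so it suffices to show this product equals $\Pr(S_0{=}x.s, U_0^\pre{=}x.u_\pre)$.

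First I would rewrite the right-hand side with the chain rule,
\[
\Pr(S_0{=}x.s, U_0^\pre{=}x.u_\pre) = \Pr(S_0{=}x.s)\,\Pr(U_0^\pre{=}x.u_\pre \mid S_0{=}x.s),
\]
which is valid whenever $\Pr(S_0{=}x.s) > 0$; the degenerate case $\Pr(S_0{=}x.s) = 0$ forces $d_0(x.s) = 0$, hence makes both sides of the claimed identity zero, and can be dispatched separately. Then I would substitute $\Pr(S_0{=}x.s) = d_0(x.s)$, which is the definition of $d_0$, and $\Pr(U_0^\pre{=}x.u_\pre \mid S_0{=}x.s) = \pi_i^\pre(x.s, x.u_\pre, \bar\theta_i)$, which is precisely the defining property of $\pi_i^\pre$: at $t=0$ there is no prior history, so the output $U_0^\pre$ of the preceding coagents is, by the way $\pi_i^\pre$ was introduced, distributed according to $\pi_i^\pre(S_0,\cdot,\bar\theta_i)$ conditioned only on $S_0$. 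Matching these two expressions term-by-term against the displayed formula for $d_0^i$ closes the argument.

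The only subtlety worth spelling out — more a point of care than a real obstacle — is the justification that $\pi_i^\pre(S_0,\cdot,\bar\theta_i)$ is genuinely the conditional law of $U_0^\pre$ given $S_0$, with nothing else in the conditioning. In the recurrent or asynchronous setting one might fear $U_0^\pre$ depends on additional quantities, but at the first time step there is no past, so this is immediate from the setup and is exactly the independence structure encoded in the Bayesian network of Figure~\ref{fig:Causal}. A secondary bookkeeping point: $\mathcal U^\pre$ may be continuous, so $\pi_i^\pre$ and $d_0^i$ should be read as densities with respect to the appropriate base measure and $\Pr(U_0^\pre{=}\cdot)$ interpreted accordingly; none of the algebra changes.
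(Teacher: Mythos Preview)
Your proposal is correct and follows essentially the same approach as the paper: unfold the definition $d_0^i(x) = d_0(x.s)\,\pi_i^\pre(x.s, x.u_\pre)$, identify the two factors as $\Pr(S_0{=}x.s)$ and $\Pr(U_0^\pre{=}x.u_\pre \mid S_0{=}x.s)$ via the definitions of $d_0$ and $\pi_i^\pre$, and then combine them by the chain rule. The paper presents this as a three-line forward chain without separately treating the degenerate case or the continuous-$\mathcal U^\pre$ bookkeeping, but the substance is identical.
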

\begin{prop}
\label{correct_s0}
For all $s \in \mathcal S$,
$
\Pr(\Tilde X_0.s{=}s)=d_0(s).
$
\end{prop}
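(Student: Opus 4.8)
The plan is to derive Property \ref{correct_s0} directly from Property \ref{correct_di0} by marginalizing out the $\mathcal U^\pre$ component of the CoMDP's initial local state. Since $\tilde X_0$ is distributed according to $d_0^i$ by the definition of the CoMDP $M^i$, the event $\{\tilde X_0.s = s\}$ is the disjoint union over $u \in \mathcal U^\pre_i$ of the events $\{\tilde X_0 = (s,u)\}$, so $\Pr(\tilde X_0.s = s) = \sum_{u \in \mathcal U^\pre_i} d_0^i\big((s,u)\big)$.

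Next I would substitute the expression from Property \ref{correct_di0}, giving $\Pr(\tilde X_0.s = s) = \sum_{u \in \mathcal U^\pre_i} \Pr\big(S_0 = s, U_0^\pre = u\big)$. The right-hand side is just the marginal $\Pr(S_0 = s)$ obtained by summing the joint distribution of $(S_0, U_0^\pre)$ over all values of $U_0^\pre$; this step uses only that $\pi_i^\pre(s,\cdot,\bar\theta_i)$ is a valid probability distribution over $\mathcal U^\pre_i$ (so the conditional probabilities of $U_0^\pre$ given $S_0 = s$ sum to one), and the definition $d_0(s) = \Pr(S_0 = s)$. Chaining these equalities yields $\Pr(\tilde X_0.s = s) = d_0(s)$, as claimed.

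I do not expect a genuine obstacle here; the content is a one-line marginalization once Property \ref{correct_di0} is in hand. The only points requiring care are bookkeeping ones: writing the marginalization as a sum is justified by the paper's standing assumption that the relevant sets are finite (in the general continuous case one would replace the sum by an integral against the appropriate measure, with no change in substance), and one should note explicitly that $d_0^i$ being the initial-state distribution of $M^i$ is precisely what licenses identifying $\Pr(\tilde X_0 \in \cdot)$ with $d_0^i(\cdot)$. Everything else is immediate from the definitions already stated.
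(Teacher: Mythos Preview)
Your proposal is correct and takes essentially the same marginalization approach as the paper. The only cosmetic difference is that you route through Property~\ref{correct_di0} to rewrite $d_0^i((s,u))$ as $\Pr(S_0=s,U_0^\pre=u)$ and then marginalize in the original MDP, whereas the paper expands $d_0^i$ directly via its definition $d_0^i(x)=d_0(x.s)\pi_i^\pre(x.s,x.u_\pre)$ and uses $\sum_{u_\pre}\pi_i^\pre(s,u_\pre)=1$; these are the same computation.
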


In Property \ref{correct_transition}, we show that $P^i$ captures the distributions of the inputs that the $\ith$ coagent will see given the input at the previous step and the output that it selected.

\begin{prop}
For all $x\in \mathcal X,$ $x' \in \mathcal X$, and $u \in \mathcal U$,
$
P^i(x,u,x') = \Pr(S_{t+1}{=}x'.s,U^\text{\emph{pre}}_{t+1}{=}x'.u_\text{\emph{pre}}\\
|S_{t}{=}x.s,U^\text{\emph{pre}}_{t}{=}x.u_\text{\emph{pre}}, U_t {=} u).
$
\end{prop}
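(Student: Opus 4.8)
The plan is to compute the right-hand side directly from the coagent network's generative process and then recognize the result as the definition of $P^i(x,u,x')$ given in Section~\ref{sec:complete_comdp}. Throughout, abbreviate $s \coloneqq x.s$, $u_\pre \coloneqq x.u_\pre$, $s' \coloneqq x'.s$, $u'_\pre \coloneqq x'.u_\pre$, and keep $\bar\theta_i$ implicit. The relevant structure is the three-stage description of Section~\ref{sec:background}: from the $\ith$ coagent's viewpoint, one step of $M$ proceeds by (i) the downstream coagents drawing $A_t \sim \pi_i^\post((S_t,U^\pre_t),U_t,\cdot)$, (ii) the environment transitioning $S_{t+1}\sim P(S_t,A_t,\cdot)$, and (iii) the upstream coagents of the next step drawing $U^\pre_{t+1}\sim \pi_i^\pre(S_{t+1},\cdot)$. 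These are precisely the edges of the Bayesian network in Figure~\ref{fig:Causal}, and they are what the computation will exploit.

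The steps, in order, are: (1) observe that, by the Markov property of $M$ and because $U^\pre_t$ and $U_t$ are generated at time $t$ as a function of $S_t$, the conditional probability on the right-hand side depends only on $(s,u_\pre,u,s',u'_\pre)$, not on $t$ nor on the earlier history, so that comparison with the time-homogeneous $P^i$ is meaningful; (2) apply the law of total probability to introduce the intermediate action, writing the right-hand side as $\sum_{a\in\mathcal A}\Pr(S_{t+1}{=}s',U^\pre_{t+1}{=}u'_\pre,A_t{=}a\mid S_t{=}s,U^\pre_t{=}u_\pre,U_t{=}u)$; (3) factor each summand with the chain rule into $\Pr(A_t{=}a\mid\cdots)$, $\Pr(S_{t+1}{=}s'\mid A_t{=}a,\cdots)$, and $\Pr(U^\pre_{t+1}{=}u'_\pre\mid S_{t+1}{=}s',A_t{=}a,\cdots)$; (4) use the independences read off Figure~\ref{fig:Causal} to delete the redundant conditioning variables in each factor, so they collapse to $\pi_i^\post((s,u_\pre),u,a)$, $P(s,a,s')$, and $\pi_i^\pre(s',u'_\pre)$ respectively, giving
$$
\Pr(S_{t+1}{=}s',U^\pre_{t+1}{=}u'_\pre\mid S_t{=}s,U^\pre_t{=}u_\pre,U_t{=}u)=\sum_{a\in\mathcal A}\pi_i^\post\big((s,u_\pre),u,a\big)\,P(s,a,s')\,\pi_i^\pre(s',u'_\pre);
$$
and (5) verify that this closed form matches the definition of $P^i(x,u,x')$ in Section~\ref{sec:complete_comdp} term for term, unfolding the $\mathcal S$ and $\mathcal U^\pre$ components of $x'$ exactly as that definition does.

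The main obstacle is Step~4: giving a rigorous argument that each conditioning set may be pruned to the indicated variables, i.e., invoking d-separation in Figure~\ref{fig:Causal} rather than appealing to intuition --- in particular that $S_{t+1}\perp(U^\pre_t,U_t)\mid(S_t,A_t)$ (the Markov property of $M$, since $U^\pre_t,U_t$ influence $S_{t+1}$ only through $A_t$) and that $U^\pre_{t+1}\perp(A_t,U^\pre_t,U_t,S_t)\mid S_{t+1}$ (which uses the acyclicity of the network: the upstream coagents carry no state across time and are re-executed from $S_{t+1}$ alone). Step~1 (time-homogeneity and history-independence) is a short but necessary preliminary, and Step~5 is pure bookkeeping whose details depend only on how the supplementary material writes $P^i$.
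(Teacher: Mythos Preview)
Your proposal is correct and uses essentially the same ingredients as the paper's proof: marginalize over the intermediate action $A_t$ and apply the conditional-independence properties of the Bayesian network in Figure~\ref{fig:Causal} to reduce the three factors to $\pi_i^\post$, $P$, and $\pi_i^\pre$. The only difference is direction --- the paper begins from the definition of $P^i(x,u,x')$ and works toward the conditional probability in $M$, whereas you start from the conditional probability and work toward the definition --- which is an immaterial stylistic choice.
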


In Property \ref{correct_reward},
we show that $R^i$ captures the distribution of the rewards that the $\ith$ coagent receives given the output that it selected and the inputs at the current and next steps.
\begin{prop}
For all $x\in \mathcal X$, $x' \in \mathcal X$, $u \in \mathcal U$, and $r \in \mathcal{R}$,
$
 R^i(x,u,x',r) = \Pr(R_t {=} r
| S_{t}{=}x.s,U^\text{\emph{pre}}_{t}{=}x.u_\text{\emph{pre}}, U_t {=} u,
\\S_{t+1}{=}x'.s, U^\text{\emph{pre}}_{t+1}{=}x'.u_\text{\emph{pre}}).
$
\end{prop}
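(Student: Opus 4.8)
The plan is to start from the definition of $R^i$ given in Section~\ref{sec:complete_comdp} of the supplementary material, which --- paralleling the construction of $P^i$ used for Property~\ref{correct_transition} --- I anticipate to be an average of the original reward distribution $R(x.s,a,x'.s,r)$ over the full actions $a$ that the downstream coagents might emit. The essential subtlety is that the correct averaging weight is \emph{not} simply $\pi_i^\post(x,u,a)$: because we condition on the next local state $x'$, and in particular on $S_{t+1}{=}x'.s$, which is a downstream consequence of $A_t$, observing $x'$ alters the posterior over $A_t$ (an ``explaining away'' effect). Hence $R^i$ must carry the Bayesian weight $\pi_i^\post(x,u,a)\,P(x.s,a,x'.s)\big/\sum_{a'}\pi_i^\post(x,u,a')\,P(x.s,a',x'.s)$, and the job of this property is to confirm that this definition reproduces the true conditional reward distribution of the coagent network.

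Writing $c$ for the conditioning event $\{S_t{=}x.s,\,U^\pre_t{=}x.u_\pre,\,U_t{=}u,\,S_{t+1}{=}x'.s,\,U^\pre_{t+1}{=}x'.u_\pre\}$, I would first expand the right-hand side by the law of total probability, inserting $A_t$:
\[
\Pr(R_t{=}r \mid c) = \sum_{a \in \mathcal A} \Pr(R_t{=}r \mid A_t{=}a, c)\,\Pr(A_t{=}a \mid c).
\]
For the first factor I would read the relevant independences off the Bayesian network of Figure~\ref{fig:Causal}: conditioned on $(S_t, A_t, S_{t+1})$, the reward $R_t$ is independent of $U^\pre_t$, $U_t$, and $U^\pre_{t+1}$, so $\Pr(R_t{=}r \mid A_t{=}a, c) = R(x.s,a,x'.s,r)$. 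For the second factor I would apply Bayes' rule, again using the diagram, to write $\Pr(A_t{=}a \mid c)$ in terms of $\Pr(A_t{=}a \mid S_t{=}x.s, U^\pre_t{=}x.u_\pre, U_t{=}u) = \pi_i^\post(x,u,a)$, $\Pr(S_{t+1}{=}x'.s \mid S_t{=}x.s, A_t{=}a) = P(x.s,a,x'.s)$, and $\Pr(U^\pre_{t+1}{=}x'.u_\pre \mid S_{t+1}{=}x'.s) = \pi^\pre_i(x'.s, x'.u_\pre)$; the last of these is common to numerator and denominator and cancels, leaving $\Pr(A_t{=}a \mid c) = \pi_i^\post(x,u,a)\,P(x.s,a,x'.s)\big/\sum_{a'}\pi_i^\post(x,u,a')\,P(x.s,a',x'.s)$. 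Substituting both factors back into the sum produces exactly the expression defining $R^i(x,u,x',r)$, closing the argument.

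The main obstacle --- and the reason this is more than a one-line remark --- is precisely the explaining-away effect: one cannot simply assert that $A_t \sim \pi_i^\post(x,u,\cdot)$ under $c$, but must carry out the Bayes computation and carefully track which factors survive the cancellation. A secondary, purely bookkeeping matter is the degenerate case in which $(x,u)$ cannot lead to $x'$, i.e.\ $\sum_{a'}\pi_i^\post(x,u,a')\,P(x.s,a',x'.s)=0$: there both sides are governed by whatever convention the supplementary material adopts for undefined conditionals, so the claimed equality holds vacuously. Overall the structure mirrors the proof of Property~\ref{correct_transition}, with the added wrinkle that $R_t$ depends on $A_t$ (and on $S_t$, $S_{t+1}$) rather than being determined directly by the local transition, which is exactly what forces the posterior-over-$A_t$ step.
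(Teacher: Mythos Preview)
Your proposal is correct and matches the paper's proof in all essential respects: both arguments marginalize over $A_t$, invoke the Bayesian-network independences to reduce $\Pr(R_t{=}r\mid A_t{=}a,\,c)$ to $R(x.s,a,x'.s,r)$, and identify the posterior $\Pr(A_t{=}a\mid c)$ with $\pi_i^\post(x,u,a)\,P(x.s,a,x'.s)$ suitably normalized. The only cosmetic difference is direction---the paper begins from the definition of $R^i$ and manipulates it into the conditional probability, whereas you begin from the conditional and recover the definition---but the chain of equalities is the same.
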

In Properties \ref{correct_x} and \ref{correct_s}, we show that the distributions of $\Tilde{X}$ and $\Tilde{X}_t.s$ capture the distribution of inputs to the $\ith$ coagent.

\begin{prop}
For all $s {\in} \mathcal{S}$ and $u_\text{\emph{pre}} {\in} \mathcal U^\text{\emph{pre}}_i$,

$
\Pr(\Tilde{X}_t {=} (s, u_\text{\emph{pre}})) = \Pr(S_t {=} s, U^\text{\emph{pre}}_t {=} u_\text{\emph{pre}}).
$
\end{prop}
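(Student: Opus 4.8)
The plan is to prove Property~\ref{correct_x} by induction on $t$, chaining together the already-established identifications of the CoMDP's initial distribution (Property~\ref{correct_di0}) and transition function (Property~\ref{correct_transition}) with the fact that, inside the CoMDP, the $\ith$ coagent selects its output according to its own policy $\pi_i(\cdot,\cdot,\theta_i)$ --- which is exactly the conditional law of $U_t$ given $X_t$ in the original coagent network.

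For the base case $t=0$: by the definition of the CoMDP's initial state distribution, $\Pr(\Tilde X_0 = (s,u_\pre)) = d_0^i((s,u_\pre))$, and Property~\ref{correct_di0} gives $d_0^i((s,u_\pre)) = \Pr(S_0{=}s, U_0^\pre{=}u_\pre)$, which is the claim. For the inductive step, assume the claim holds at time $t$. Because $M^i$ is a (time-homogeneous) MDP by construction and the coagent follows $\pi_i$ in it, the law of total probability yields
$$\Pr(\Tilde X_{t+1} = x') = \sum_{x \in \mathcal X}\sum_{u \in \mathcal U} \Pr(\Tilde X_t = x)\,\pi_i(x,u,\theta_i)\,P^i(x,u,x'),$$
where terms with $\Pr(\Tilde X_t = x)\,\pi_i(x,u,\theta_i)=0$ contribute nothing, so no conditional probability below is ill-defined. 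To a generic nonzero term I would apply three substitutions: the inductive hypothesis turns $\Pr(\Tilde X_t = x)$ into $\Pr(S_t{=}x.s, U_t^\pre{=}x.u_\pre)$; the construction of the coagent network turns $\pi_i(x,u,\theta_i)$ into $\Pr(U_t{=}u \mid S_t{=}x.s, U_t^\pre{=}x.u_\pre)$; and Property~\ref{correct_transition} turns $P^i(x,u,x')$ into $\Pr(S_{t+1}{=}x'.s, U_{t+1}^\pre{=}x'.u_\pre \mid S_t{=}x.s, U_t^\pre{=}x.u_\pre, U_t{=}u)$. The first two factors multiply, by the chain rule, to $\Pr(S_t{=}x.s, U_t^\pre{=}x.u_\pre, U_t{=}u)$, and multiplying by the third gives $\Pr(S_t{=}x.s, U_t^\pre{=}x.u_\pre, U_t{=}u, S_{t+1}{=}x'.s, U_{t+1}^\pre{=}x'.u_\pre)$. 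Summing over all $x$ and $u$ marginalizes out $S_t$, $U_t^\pre$, and $U_t$, leaving $\Pr(S_{t+1}{=}x'.s, U_{t+1}^\pre{=}x'.u_\pre)$, which closes the induction.

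The routine algebra here is easy; the main obstacles are bookkeeping points rather than conceptual ones. First, the one-step recursion for $\Pr(\Tilde X_{t+1}=x')$ must be justified from the CoMDP genuinely being an MDP with the coagent following $\pi_i$ --- true by the construction in the supplementary material, but it should be cited explicitly. Second, one must ensure every conditional probability appearing is well-defined, which I handle by discarding the zero-probability terms before conditioning. Third, Property~\ref{correct_transition} is stated for a generic step, so I need the coagent network's dynamics to be stationary (the pre-coagent map $\pi_i^\pre$, the coagent policy $\pi_i$, the post-coagent map $\pi_i^\post$, and the transition $P$ are all time-invariant) in order to apply it at each $t$ of the induction; I would note this explicitly. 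If $\mathcal U^\pre$ or $\mathcal U^i$ is continuous, the sums are replaced by integrals against the appropriate measures, with no change to the argument.
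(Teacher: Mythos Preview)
Your proposal is correct and follows essentially the same route as the paper's proof: induction on $t$, with the base case handled by Property~\ref{correct_di0} and the inductive step by unrolling one transition via Property~\ref{correct_transition} together with the fact that the coagent uses the same policy $\pi_i$ in both the CoMDP and the original network (the paper records this as Assumption~\ref{supp_agent_equivalence}). The only cosmetic difference is direction: the paper starts the inductive step from $\Pr(S_{t+1}=s',U_{t+1}^\pre=u'_\pre)$ and works toward the CoMDP quantity, whereas you start from $\Pr(\Tilde X_{t+1}=x')$ and work toward the original-MDP quantity.
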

\begin{prop}
For all $s \in \mathcal{S}$,
$
\Pr(\Tilde{X}_t.s {=} s) = \Pr(S_t {=} s).
$
\end{prop}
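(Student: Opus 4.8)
The plan is to derive this property directly from Property~\ref{correct_x} by marginalizing out the $\mathcal U^\pre_i$ component of the local state. Since $\tilde X_t$ takes values in $\mathcal X^i = \mathcal S \times \mathcal U^\pre_i$, the event $\{\tilde X_t.s = s\}$ is the disjoint union, over $u_\pre \in \mathcal U^\pre_i$, of the events $\{\tilde X_t = (s, u_\pre)\}$; likewise $\{S_t = s\}$ decomposes over the possible values of $U^\pre_t$. So the entire argument consists of two applications of the law of total probability sandwiching a single application of Property~\ref{correct_x}.

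Concretely, the steps in order are: (i) by total probability in the CoMDP $M^i$, write $\Pr(\tilde X_t.s = s) = \sum_{u_\pre \in \mathcal U^\pre_i} \Pr(\tilde X_t = (s, u_\pre))$; (ii) apply Property~\ref{correct_x} to each summand to obtain $\sum_{u_\pre \in \mathcal U^\pre_i} \Pr(S_t = s, U^\pre_t = u_\pre)$; and (iii) collapse this sum by total probability in the original MDP $M$ to get $\Pr(S_t = s)$. Chaining the three equalities yields the claim.

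I expect no real obstacle here: the statement is essentially a one-line corollary of Property~\ref{correct_x}, so the proof should be kept to a couple of sentences. The only subtlety worth a remark is that the paper permits $U^\pre_t$ to take continuous values, so in full generality steps (i) and (iii) should be phrased as integrals against the appropriate reference measure on $\mathcal U^\pre_i$ rather than as sums; this changes nothing of substance, since the integrands are nonnegative and the manipulations are justified by Tonelli's theorem, and under the paper's standing finiteness simplification the sums are literal. (Alternatively one could prove the property from scratch by induction on $t$, using Property~\ref{correct_s0} for the base case and Property~\ref{correct_transition} for the inductive step, but routing through Property~\ref{correct_x} is strictly shorter and reuses work already done.)
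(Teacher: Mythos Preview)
Your proposal is correct and is essentially identical to the paper's own proof: both marginalize over $u_\pre \in \mathcal U^\pre_i$ and invoke Property~\ref{correct_x} termwise, with the only cosmetic difference being that the paper starts from $\Pr(S_t=s)$ while you start from $\Pr(\tilde X_t.s=s)$.
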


In Property \ref{correct_pi_pre}, we show that the distribution of $\Tilde{X}_t.u_\text{\emph{pre}}$ given $\Tilde{X}_t.s$ captures the distribution $\pi^\text{\pre}_i$.
\begin{prop}
For all $s \in \mathcal{S}$ and $u_\text{\emph{pre}} \in \mathcal U^\text{\emph{pre}}_i$,\\
$
\Pr(\Tilde{X}_t.u_\text{\emph{pre}} {=} u_\text{\emph{pre}} | \Tilde{X}_t.s {=} s)
= \pi^\text{\emph{pre}}_i (s, u_\text{\emph{pre}}).
$
\end{prop}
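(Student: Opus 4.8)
The plan is to obtain this property as an almost immediate consequence of Properties~\ref{correct_x} and~\ref{correct_s} together with the definition of $\pi^\pre_i$ given in Section~\ref{sec:background}. First I would expand the conditional probability via its definition and substitute the two earlier properties:
\begin{align}
\Pr(\Tilde{X}_t.u_\pre {=} u_\pre \mid \Tilde{X}_t.s {=} s)
&= \frac{\Pr(\Tilde{X}_t {=} (s, u_\pre))}{\Pr(\Tilde{X}_t.s {=} s)} \\
&= \frac{\Pr(S_t {=} s, U^\pre_t {=} u_\pre)}{\Pr(S_t {=} s)} \\
&= \Pr(U^\pre_t {=} u_\pre \mid S_t {=} s),
\end{align}
where the first equality is the definition of conditional probability (valid when $\Pr(\Tilde{X}_t.s {=} s) > 0$), the second applies Property~\ref{correct_x} to the numerator and Property~\ref{correct_s} to the denominator, and the third again uses the definition of conditional probability in the original MDP.

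It then remains to show that $\Pr(U^\pre_t {=} u_\pre \mid S_t {=} s) = \pi^\pre_i(s, u_\pre)$. This is essentially a restatement of how $U^\pre_t$ is generated: in Section~\ref{sec:background}, $\pi^\pre_i(S_t, \cdot, \bar\theta_i)$ is defined precisely as the distribution from which the preceding coagents' output $U^\pre_t$ is sampled given the current state $S_t$ (with the fixed parameters $\bar\theta_i$). As the Bayesian network in Figure~\ref{fig:Causal} makes explicit, in the (acyclic, synchronous) setting of this section $U^\pre_t$ is produced as a stochastic function of $S_t$ alone, so its conditional law given $S_t$ is exactly $\pi^\pre_i(S_t,\cdot,\bar\theta_i)$; setting $S_t = s$ gives the claim. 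I would phrase this step as a direct appeal to the definition rather than a derivation.

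The only real subtlety, and hence the main (minor) obstacle, is the degenerate case $\Pr(\Tilde{X}_t.s {=} s) = 0$. In that case the conditioning event has probability zero, so the left-hand side is conventionally undefined (or may be assigned any value, in which case we take it to equal $\pi^\pre_i(s,u_\pre)$) and the statement holds vacuously; by Property~\ref{correct_s} this case coincides exactly with $\Pr(S_t {=} s) = 0$, so both sides degenerate together and no inconsistency arises. Aside from this bookkeeping, the proof is just the chaining of the two earlier properties with the definition of $\pi^\pre_i$.
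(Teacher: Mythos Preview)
Your proof is correct and takes essentially the same approach as the paper: both arguments use the definition of conditional probability together with Properties~\ref{correct_x} and~\ref{correct_s} to pass between the CoMDP and the original MDP, then invoke the definition of $\pi^\pre_i$; the only cosmetic difference is that the paper runs the chain in the reverse direction (starting from $\pi^\pre_i(s,u_\pre)$). Your added remark on the degenerate case $\Pr(\Tilde{X}_t.s{=}s)=0$ is a nice bit of extra care that the paper omits.
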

In Property \ref{Phil_property}, we show that  the distribution of $\Tilde{X}_{t+1}.s$ given $\Tilde{X}_t.s$, $\Tilde{X}_t.u_\text{\emph{pre}}$, and $\Tilde{U}_t$ captures the distribution of the $\mathcal{S}$ component of the input that the $\ith$ coagent will see given the input at the previous step and the output that it selected.
\begin{prop}
For all $s \in \mathcal{S}$, $s' \in \mathcal{S}$, $u_\text{\emph{pre}} \in \mathcal U^\text{\emph{pre}}_i$, and $u \in \mathcal U$,
$
\Pr(\Tilde{X}_{t+1}.s {=} s' | \Tilde{X}_t.s {=} s, \Tilde{X}_t.u_\text{\emph{pre}} {=} u_\text{\emph{pre}}, \Tilde{U}_t {=} u)\\
= \Pr(S_{t+1} {=} s' | S_t {=} s, U^\text{\emph{pre}}_t {=} u_\text{\emph{pre}}, U_t {=} u).
$
\end{prop}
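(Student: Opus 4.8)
The plan is to reduce this property to Property~\ref{correct_transition} by marginalizing the CoMDP's transition distribution over the $\mathcal U^\pre_i$ component of the next state. First I would use the fact that $M^i$ is an MDP: regardless of the policy governing $\Tilde U_t$, the next state depends on the past only through the current state and action, so whenever the conditioning event has positive probability,
$$
\Pr\big(\Tilde X_{t+1} {=} x' \,\big|\, \Tilde X_t {=} x, \Tilde U_t {=} u\big) = P^i(x,u,x').
$$
Since $\mathcal X^i = \mathcal S \times \mathcal U^\pre_i$, specifying $\Tilde X_t.s {=} s$ and $\Tilde X_t.u_\pre {=} u_\pre$ is the same as specifying $\Tilde X_t {=} (s,u_\pre)$, so the event conditioned on in the statement is exactly $\{\Tilde X_t {=} (s,u_\pre),\ \Tilde U_t {=} u\}$.

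Next I would write the left-hand side as a marginal over the $\mathcal U^\pre_i$ component of $\Tilde X_{t+1}$,
$$
\Pr\big(\Tilde X_{t+1}.s {=} s' \,\big|\, \Tilde X_t {=} (s,u_\pre), \Tilde U_t {=} u\big) = \sum_{u_\pre' \in \mathcal U^\pre_i} P^i\big((s,u_\pre),\, u,\, (s',u_\pre')\big),
$$
with the sum replaced by an integral against the appropriate density if $\mathcal U^\pre_i$ is continuous. Applying Property~\ref{correct_transition} to each term rewrites the summand as $\Pr(S_{t+1} {=} s', U^\pre_{t+1} {=} u_\pre' \mid S_t {=} s, U^\pre_t {=} u_\pre, U_t {=} u)$, and summing out $u_\pre'$ collapses this to $\Pr(S_{t+1} {=} s' \mid S_t {=} s, U^\pre_t {=} u_\pre, U_t {=} u)$, which is the right-hand side.

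I expect the only delicate part to be the bookkeeping around conditioning events of probability zero --- both $\{\Tilde X_t {=} (s,u_\pre), \Tilde U_t {=} u\}$ in $M^i$ and its analogue in $M$ --- and, when $\mathcal U^\pre_i$ is an uncountable set, making the marginalization rigorous with densities rather than probability masses; I would handle these issues exactly as the preceding properties in this section do. Aside from that, the argument is essentially a one-line invocation of Property~\ref{correct_transition} composed with the Markov property of the CoMDP.
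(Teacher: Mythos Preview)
Your argument is correct. Both your proof and the paper's pass through the same intermediate quantity $\sum_{u'_\pre} P^i((s,u_\pre),u,(s',u'_\pre))$, but you reach it from opposite ends. You start on the CoMDP side, invoke the definitional identity $\Pr(\Tilde X_{t+1}{=}x' \mid \Tilde X_t{=}x,\Tilde U_t{=}u)=P^i(x,u,x')$ (stated explicitly in the CoMDP definition, so no circularity with Lemma~\ref{is_mdp}), marginalize, and then hand off to Property~\ref{correct_transition}. The paper instead starts on the $M$ side, expands $\Pr(S_{t+1}{=}s'\mid S_t,U^\pre_t,U_t)$ over $a$ and $u'_\pre$, applies the Bayes-type swap $\Pr(u'_\pre)\Pr(s'\mid u'_\pre)=\Pr(u'_\pre\mid s')\Pr(s')$ together with the independence structure of $M$, recognizes the explicit formula for $P^i$, and only then converts to CoMDP probabilities. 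In effect the paper re-derives the content of Property~\ref{correct_transition} inline rather than citing it. Your route is shorter and modular; the paper's is more self-contained but redundant given that Property~\ref{correct_transition} is already in hand.
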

In Property \ref{u'_pre_independence}, we 
show that: Given the $\mathcal{S}$ component of the input, the $\mathcal U^\pre_i$ component of the input that the $\ith$ coagent will see is independent of the previous input and output.
\begin{prop}
For all $s \in \mathcal{S}$, $s' \in \mathcal{S}$, $u_\text{\emph{pre}} \in \mathcal U^\text{\emph{pre}}_i$, $u'_\text{\emph{pre}} \in \mathcal U^\text{\emph{pre}}_i$, and $u \in \mathcal U$,
$
    \Pr(\tilde X_{t+1}.u_\text{\emph{pre}} {=} u'_\text{\emph{pre}} | \tilde X_{t+1}.s {=} s')\\
    = \Pr(\tilde X_{t+1}.u_\text{\emph{pre}} {=} u'_\text{\emph{pre}} | \tilde X_{t+1}.s {=} s', \tilde X_{t} {=} (s, u_\text{\emph{pre}}), \tilde U_t {=} u).
$
\end{prop}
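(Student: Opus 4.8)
The plan is to show that both sides of the claimed identity equal $\pi^\pre_i(s', u'_\pre)$, so that the additional conditioning on $(\tilde X_t, \tilde U_t)$ on the right-hand side washes out. The left-hand side is immediate: applying Property~\ref{correct_pi_pre} at time $t+1$ (with state $s'$ and output $u'_\pre$) gives $\Pr(\tilde X_{t+1}.u_\pre = u'_\pre \mid \tilde X_{t+1}.s = s') = \pi^\pre_i(s', u'_\pre)$. So the work is in showing that the right-hand side collapses to the same quantity.

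For the right-hand side, I would first use that $M^i$ is a Markov decision process: conditioned on $\tilde X_t = (s, u_\pre)$ and $\tilde U_t = u$ (assuming this event has positive probability; the zero-probability case is handled by the usual convention, under which the identity holds vacuously), the distribution of $\tilde X_{t+1}$ is exactly $P^i((s, u_\pre), u, \cdot)$ and depends on no earlier variables. Hence, by the definition of conditional probability,
$$
\Pr\!\big(\tilde X_{t+1}.u_\pre = u'_\pre \,\big|\, \tilde X_{t+1}.s = s',\, \tilde X_t = (s, u_\pre),\, \tilde U_t = u\big) = \frac{P^i\big((s, u_\pre), u, (s', u'_\pre)\big)}{\sum_{w} P^i\big((s, u_\pre), u, (s', w)\big)},
$$
where $w$ ranges over $\mathcal U^\pre_i$ (the sum becomes an integral, with densities in place of probabilities, when $\mathcal U^\pre_i$ is continuous). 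The key step is then to invoke the definition of $P^i$ from Section~\ref{sec:complete_comdp}, which factors the transition as a term governing the $\mathcal S$-component times a fresh draw of the $\mathcal U^\pre$-component: $P^i\big((s, u_\pre), u, (s', w)\big) = \Pr(S_{t+1} = s' \mid S_t = s, U^\pre_t = u_\pre, U_t = u)\,\pi^\pre_i(s', w)$. This factorization is exactly what Property~\ref{correct_transition}, together with the conditional independence of $U^\pre_{t+1}$ from the past given $S_{t+1}$ (visible in the Bayesian network of Figure~\ref{fig:Causal}, since the preceding coagents at step $t+1$ see only $S_{t+1}$), encodes. Substituting this product into the ratio, the $\mathcal S$-component factor cancels between numerator and denominator, and $\sum_w \pi^\pre_i(s', w) = 1$ since $\pi^\pre_i(s', \cdot)$ is a probability distribution; what remains is $\pi^\pre_i(s', u'_\pre)$, matching the left-hand side.

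The main obstacle I anticipate is not conceptual but careful bookkeeping around the conditioning and the degenerate cases: when $\Pr(\tilde X_{t+1}.s = s',\, \tilde X_t = (s, u_\pre),\, \tilde U_t = u) = 0$ the conditional probability is fixed only by convention, and when the $\mathcal S$-component factor $\Pr(S_{t+1} = s' \mid \cdots)$ vanishes the ratio is a $0/0$ form, also to be resolved by convention; in both situations the claimed equality holds trivially. (Note the denominator $\sum_w P^i((s,u_\pre),u,(s',w))$ is itself $\Pr(\tilde X_{t+1}.s = s' \mid \tilde X_t = (s,u_\pre), \tilde U_t = u)$, which by Property~\ref{Phil_property} equals $\Pr(S_{t+1} = s' \mid S_t = s, U^\pre_t = u_\pre, U_t = u)$, so the two ``$\mathcal S$-component'' quantities appearing above are consistent.) If $\mathcal U^\pre_i$ is continuous the argument is identical with densities and an integral, using that marginalizing the transition density over the $\mathcal U^\pre$-component integrates to the $\mathcal S$-component transition probability. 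Apart from these edge cases, once the definition of $P^i$ is unpacked the proof is a short computation.
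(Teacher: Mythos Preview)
Your proposal is correct and follows essentially the same route as the paper: write the right-hand side as the ratio $P^i((s,u_\pre),u,(s',u'_\pre))/\Pr(\tilde X_{t+1}.s=s'\mid \tilde X_t=(s,u_\pre),\tilde U_t=u)$, use the factored form of $P^i$ (equivalently, expand $\sum_a P(s,a,s')\pi^\post_i$ and marginalize) so that the $\mathcal S$-component factor cancels against the denominator (identified via Property~\ref{Phil_property}), leaving $\pi^\pre_i(s',u'_\pre)$, and then identify this with the left-hand side via Property~\ref{correct_pi_pre}. The only cosmetic difference is that you frame it as ``both sides equal $\pi^\pre_i(s',u'_\pre)$'' while the paper chains RHS $\to \pi^\pre_i(s',u'_\pre) \to$ LHS in one pass; your added remarks on degenerate conditioning events are a nice touch the paper omits.
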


In Property \ref{correct_R_t}, we use Properties \ref{correct_s}, \ref{correct_pi_pre}, \ref{Phil_property}, \ref{u'_pre_independence}, and \ref{correct_R_t} to show that  the distribution of $\Tilde{R}^i_t$ captures the distribution of the rewards that the $\ith$ coagent receives.

\begin{prop}
For all $r \in \mathcal R$, 
    $\Pr(R_t {=} r) = \Pr(\Tilde{R}^i_t {=} r).$
\end{prop}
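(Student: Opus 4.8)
The plan is to reduce the claim to an equality of one-step joint distributions and then invoke Property~\ref{correct_reward}. By the definition of $M^i$, the reward $\tilde R^i_t$ is drawn from $R^i(\tilde X_t, \tilde U_t, \tilde X_{t+1}, \cdot)$, so the law of total probability gives $\Pr(\tilde R^i_t {=} r) = \sum \Pr(\tilde X_t {=} x, \tilde U_t {=} u, \tilde X_{t+1} {=} x')\, R^i(x, u, x', r)$, where the sum ranges over $x, x' \in \mathcal X^i$ and $u \in \mathcal U^i$. On the other side, conditioning $R_t$ on the tuple $(S_t, U^{\pre}_t, U_t, S_{t+1}, U^{\pre}_{t+1})$ and then using Property~\ref{correct_reward} to replace each conditional probability $\Pr(R_t {=} r \mid S_t {=} s, U^{\pre}_t {=} u_\pre, U_t {=} u, S_{t+1} {=} s', U^{\pre}_{t+1} {=} u'_\pre)$ by $R^i\big((s, u_\pre), u, (s', u'_\pre), r\big)$, we obtain the same expression for $\Pr(R_t {=} r)$, but with the CoMDP joint replaced by $\Pr(S_t {=} s, U^{\pre}_t {=} u_\pre, U_t {=} u, S_{t+1} {=} s', U^{\pre}_{t+1} {=} u'_\pre)$. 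Identifying $x \in \mathcal X^i$ with $(x.s, x.u_\pre)$, it therefore suffices to prove that these two one-step joint distributions are equal.

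I would prove this by factoring each joint with the chain rule in the order: (i) the $\mathcal S$-component at time $t$; (ii) the $\mathcal U^{\pre}_i$-component at time $t$ given that $\mathcal S$-component; (iii) the coagent output $u$ given the full local state $x$; (iv) the $\mathcal S$-component at time $t{+}1$ given $(x,u)$; and (v) the $\mathcal U^{\pre}_i$-component at time $t{+}1$ given the next $\mathcal S$-component and $(x,u)$. The five factors then match in pairs. Factor (i): equal by Property~\ref{correct_s}. Factor (ii): both sides equal $\pi^{\pre}_i(s, u_\pre)$, on the CoMDP side by Property~\ref{correct_pi_pre} and on the original side because $U^{\pre}_t$ is generated from $S_t$ by $\pi^{\pre}_i$ alone. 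Factor (iii): both sides equal $\pi_i(x, u)$, by the definition of the coagent's behavior in $M^i$ and the action-generation process of Section~\ref{sec:background}. Factor (iv): equal by Property~\ref{Phil_property}. Factor (v): both sides equal $\pi^{\pre}_i(s', u'_\pre)$, on the CoMDP side by first removing the conditioning on $(\tilde X_t, \tilde U_t)$ via Property~\ref{u'_pre_independence} and then applying Property~\ref{correct_pi_pre}, and on the original side again because $U^{\pre}_{t+1}$ is generated from $S_{t+1}$ alone. Multiplying the matched factors gives the joint equality, and combining it with the first paragraph yields $\Pr(R_t {=} r) = \Pr(\tilde R^i_t {=} r)$.

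I expect the main difficulty to be organizational rather than conceptual. The factorization in step~(ii) must be chosen so that every conditional probability that appears is exactly one that an already-proved property equates across the two models, and the conditional independences used on the original-MDP side --- that $U^{\pre}_t$ and $U^{\pre}_{t+1}$ each depend only on the contemporaneous state, and that $U_t$ depends on the past only through $X_t$ --- should be justified from the Bayesian network of Figure~\ref{fig:Causal} rather than merely asserted. A minor technical point to dispatch along the way is that some conditioning events may have probability zero; this is handled exactly as in the proofs of the properties being chained, either by the convention adopted there or by noting that the corresponding summand then contributes $0$ on both sides.
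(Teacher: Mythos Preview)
Your proposal is correct and follows essentially the same route as the paper's proof: both expand $\Pr(R_t{=}r)$ by the chain rule over $(s,u_\pre,u,s',u'_\pre)$ and then match the five conditional factors to their CoMDP counterparts via Properties~\ref{correct_s}, \ref{correct_pi_pre}, \ref{Phil_property}, \ref{u'_pre_independence} and Assumption~\ref{supp_agent_equivalence}, with Property~\ref{correct_reward} handling the reward conditional. The only cosmetic difference is that you first isolate the one-step joint equality and then factor it, whereas the paper carries the full chain of marginalizations in one sweep; the ingredients and their order are the same.
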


We then use Properties \ref{correct_transition} and \ref{correct_reward} and the definition of $M^i$ to show that:
\begin{lemma}
    $M^i$ is a Markov decision process.
\end{lemma}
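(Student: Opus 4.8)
The plan is to verify the tuple $M^i = (\mathcal X^i, \mathcal U^i, \mathcal R^i, P^i, R^i, d_0^i, \gamma_i)$ against the definition of an MDP, component by component. Recall that an MDP requires state, action, and reward \emph{sets}, a discount factor in $[0,1]$, and that $d_0^i$, each $P^i(x,u,\cdot)$, and each $R^i(x,u,x',\cdot)$ be genuine probability distributions (over $\mathcal X^i$, over $\mathcal X^i$, and over $\mathcal R^i$, respectively). There is no separate ``Markov property'' left to establish: it is built into the signatures of $P^i$ and $R^i$, which already depend only on the current state and action (and, for the reward, additionally the next state).

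The set and discount-factor conditions are immediate from the construction: $\mathcal X^i = \mathcal S \times \mathcal U^\pre_i$, $\mathcal U^i$, and $\mathcal R^i = \mathcal R$ are sets, and $\gamma_i = \gamma \in [0,1]$ because $M$ is an MDP. For $d_0^i$, I would use Property \ref{correct_di0}, which gives $d_0^i(x) = \Pr(S_0 = x.s, U_0^\pre = x.u_\pre)$; hence every value lies in $[0,1]$ and $\sum_{x \in \mathcal X^i} d_0^i(x) = \sum_{s, u_\pre} \Pr(S_0 = s, U_0^\pre = u_\pre) = 1$ since $(S_0, U_0^\pre)$ is an $\mathcal X^i$-valued random variable. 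For $P^i$: fixing $(x,u)$, Property \ref{correct_transition} gives $P^i(x,u,x') = \Pr(S_{t+1} = x'.s, U^\pre_{t+1} = x'.u_\pre \mid S_t = x.s, U^\pre_t = x.u_\pre, U_t = u)$, so summing over $x' = (s', u'_\pre) \in \mathcal X^i$ yields $1$ by the law of total probability (the right-hand side being independent of $t$ --- which is exactly what makes the identity consistent across all $t$ --- since the dynamics of $M$ and the coagent policies are time-homogeneous). The identical argument using Property \ref{correct_reward} gives $\sum_{r \in \mathcal R^i} R^i(x,u,x',r) = 1$, and nonnegativity of $P^i$ and $R^i$ is immediate since both are probabilities.

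The one point requiring care is conditioning events of probability zero. For some $(x,u)$ the event $\{S_t = x.s, U^\pre_t = x.u_\pre, U_t = u\}$ --- and, for $R^i$, the further event also fixing $S_{t+1} = x'.s$ --- may be impossible for every $t$ (for instance when $x.s$ is unreachable, or $\pi^\pre_i(x.s, x.u_\pre) = 0$), in which case Properties \ref{correct_transition} and \ref{correct_reward} say nothing about $P^i(x,u,\cdot)$ or $R^i(x,u,x',\cdot)$. I expect this to be the main (and essentially only) obstacle, and I would handle it by appealing to the \emph{explicit} construction of $M^i$ in Section \ref{sec:complete_comdp}, which is written so that $P^i(x,u,\cdot)$ and $R^i(x,u,x',\cdot)$ are well-defined, nonnegative, and normalized for \emph{every} argument, independent of reachability --- typically via a formula such as $P^i(x,u,x') = \big(\sum_{a \in \mathcal A} \pi^\post_i(x,u,a)\, P(x.s, a, x'.s)\big)\, \pi^\pre_i(x'.s, x'.u_\pre)$, which is manifestly a distribution in $x'$. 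Combining these observations establishes that $M^i$ meets every requirement in the definition of an MDP.
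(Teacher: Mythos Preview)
Your proposal is correct and follows essentially the same route as the paper: verify that $d_0^i$, each $P^i(x,u,\cdot)$, and each $R^i(x,u,x',\cdot)$ are genuine probability distributions by invoking Properties \ref{correct_di0}, \ref{correct_transition}, and \ref{correct_reward} (the paper unpacks $d_0^i$ from its definition directly rather than citing Property \ref{correct_di0}, but that is the same computation). Your extra remark about zero-probability conditioning events is a point the paper simply does not raise; the paper's normalization arguments use Properties \ref{correct_transition} and \ref{correct_reward} as you do, and its non-negativity argument already appeals to the explicit formulas, so your suggested fallback to those formulas is in the same spirit.
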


Finally, in Lemma \ref{correct_comdp}, we use the properties above to show that the CoMDP $M^i$ (built from $M$, $i$, $\pi^\pre_i$, $\pi^\post_i$, and $\bar \theta_i$) correctly models the local environment of the $\ith$ coagent.
\begin{lemma}
    For all $M,i,\pi_i^\text{\emph{pre}}, \pi_i^\text{\emph{post}}$, and $\bar \theta_i$, and given a policy parameterized by $\theta_i$, the corresponding CoMDP $M^i$ satisfies Properties 1-6 and Property \ref{correct_R_t}.
\end{lemma}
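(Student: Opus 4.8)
The plan is to treat Lemma~\ref{correct_comdp} as the conjunction of the correspondence statements established above, so that proving it reduces to checking Properties~1--6 and Property~\ref{correct_R_t} in a dependency-respecting order, with Properties~\ref{correct_pi_pre}, \ref{Phil_property}, \ref{u'_pre_independence} and the preceding lemma (``$M^i$ is a Markov decision process'') used as intermediate tools. The first step is to invoke that lemma, so that $\tilde X_t$, $\tilde U_t$, $\tilde R_t$ and all of their distributions are well defined; without it, statements such as Property~\ref{correct_s0} would not even be meaningful.

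Next I would split the seven claims into two groups. The ``definitional'' claims --- Property~1 (on $d_0^i$), Property~\ref{correct_transition} (on $P^i$), and Property~\ref{correct_reward} (on $R^i$) --- are obtained by expanding the definitions of $d_0^i$, $P^i$, and $R^i$ from Section~\ref{sec:complete_comdp} and reading off the needed conditional probabilities from the Bayesian network in Figure~\ref{fig:Causal}: $U_t^\pre$ is generated from $S_t$ via $\pi_i^\pre$, $U_t$ from $X_t = (S_t, U_t^\pre)$ via $\pi_i$, $A_t$ from $(X_t, U_t)$ via $\pi_i^\post$, and $(S_{t+1}, R_t)$ from $(S_t, A_t)$ exactly as in $M$. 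These require no induction. The remaining claims --- Property~\ref{correct_s0}, Properties~\ref{correct_x}--\ref{correct_s}, and Property~\ref{correct_R_t} --- are assertions about the trajectory distribution of $M^i$ and would be proved by induction on $t$.

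For the induction: the base case is Property~1 itself, since marginalizing $d_0^i$ over the $\mathcal U^\pre$ component gives Property~\ref{correct_s0} and $d_0^i$ is Property~\ref{correct_x} at $t = 0$. For the inductive step, assume Property~\ref{correct_x} (hence Property~\ref{correct_s}) holds at step $t$; in $M^i$ the next local state is drawn from $P^i(\tilde X_t, \tilde U_t, \cdot)$ with $\tilde U_t \sim \pi_i(\tilde X_t, \cdot, \theta_i)$, and by Property~\ref{correct_transition} this law matches $\Pr(S_{t+1}, U_{t+1}^\pre \mid S_t, U_t^\pre, U_t)$, so composing with the inductive hypothesis and summing over $(s, u_\pre, u)$ yields Property~\ref{correct_x} at $t+1$; marginalizing gives Property~\ref{correct_s}. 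The same expansion produces the auxiliary Properties~\ref{correct_pi_pre} and \ref{Phil_property}, while Property~\ref{u'_pre_independence} is just the fact, built into the definition of $P^i$, that once $\tilde X_{t+1}.s$ is fixed the $\mathcal U^\pre$ component is resampled from $\pi_i^\pre$ independently of the past. Finally Property~\ref{correct_R_t} follows by expanding $\Pr(\tilde R_t^i = r)$ over $(\tilde X_t, \tilde U_t, \tilde X_{t+1})$, using Property~\ref{correct_reward} to replace $R^i$ by the corresponding conditional in $M$, and then applying Properties~\ref{correct_s}, \ref{correct_pi_pre}, \ref{Phil_property}, and \ref{u'_pre_independence} to collapse the marginal to $\Pr(R_t = r)$.

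I expect the main obstacle to be the inductive step for Property~\ref{correct_x}: one has to be sure that the environment the $\ith$ coagent actually faces in the real network is genuinely Markov in $X_t$ --- that resampling $U_{t+1}^\pre$ from $\pi_i^\pre(S_{t+1}, \cdot)$ at every step, rather than propagating correlations forward through $\bar\theta_i$, faithfully reproduces the joint law of $(S_{t+1}, U_{t+1}^\pre)$ --- which is precisely what Properties~\ref{Phil_property} and \ref{u'_pre_independence} and the ``$M^i$ is an MDP'' lemma are there to guarantee. Once those independence facts are in hand, everything else is bookkeeping: marginalizing the network of Figure~\ref{fig:Causal} in the right order and matching the resulting terms against the supplement's definitions of $d_0^i$, $P^i$, and $R^i$.
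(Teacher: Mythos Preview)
Your opening sentence is exactly right: Lemma~\ref{correct_comdp} is nothing more than the conjunction of the correspondence statements already proved as Properties~1--10, and that is precisely how the paper handles it. In the supplementary material the entire proof is one line: ``This follows immediately from properties \ref{correct_transition}, \ref{correct_reward}, \ref{correct_x}, \ref{correct_s}, and \ref{correct_R_t}.'' The individual properties each have their own standalone proof earlier in the supplement, and Lemma~\ref{correct_comdp} simply packages them.

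Everything after your first sentence is therefore superfluous for \emph{this} lemma, though it is a faithful outline of how the paper proves the underlying properties themselves: the paper does split into the ``definitional'' group (Properties~1, 3, 4, proved by unwinding the definitions of $d_0^i$, $P^i$, $R^i$ and reading conditional independences off the Bayesian network) and the ``trajectory'' group (Properties~2, 5, 6, 10, with Property~5 proved by induction on $t$ using Property~3 and Assumption~1, and Property~10 proved by the chain of Properties~6, 7, 8, 9 exactly as you describe). So your plan is not wrong --- it just belongs in the proofs of the individual properties, not in the proof of this summary lemma. One minor correction: you invoke the ``$M^i$ is an MDP'' lemma first to make $\tilde X_t, \tilde U_t, \tilde R_t$ well defined, but in the paper that lemma (Lemma~\ref{is_mdp}) is itself proved \emph{using} Properties~3 and~4, so the dependency runs the other way; the random variables in $M^i$ are well defined directly from the specification of $(\mathcal X^i, \mathcal U^i, \mathcal R^i, P^i, R^i, d_0^i, \gamma_i)$, and Lemma~\ref{is_mdp} only verifies that $P^i$, $R^i$, $d_0^i$ are bona fide probability distributions.
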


Lemma \ref{correct_comdp} is stated more specifically and formally in the supplementary material.
\subsection{The Coagent Policy Gradient Theorem}
Again, all proofs of the properties and theorems below are provided in Section \ref{sec:complete_CPGT_proofs} of the supplementary material. 
We use Property \ref{correct_R_t} to show that $M$'s objective, $J(\theta)$, is equivalent to the objective $J_i(\theta_i)$ of the $\ith$ CoMDP.
\setcounter{prop}{10}
\begin{prop}
\label{J_equivalence_main_paper}
    For all coagents $i$, for all $\theta_i$, given the same $\theta = (\theta_i, \bar \theta_i)$, $J(\theta) = J_i(\theta_i)$.
\end{prop}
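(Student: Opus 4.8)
The plan is to unroll the two objectives and reduce everything to the term-by-term reward identity that Property~\ref{correct_R_t} already supplies. By definition $J(\theta) = \mathbf{E}\!\left[\sum_{t=0}^\infty \gamma^t R_t \,\middle|\, \theta\right]$, and $J_i(\theta_i)$ is the objective of $M^i$, i.e.\ $\mathbf{E}\!\left[\sum_{t=0}^\infty \gamma_i^t \Tilde R_t^i \,\middle|\, \theta_i\right]$. Since the CoMDP is defined with $\gamma_i \coloneqq \gamma$ and $\mathcal R^i \coloneqq \mathcal R$, it suffices to prove $\mathbf{E}\!\left[\sum_{t=0}^\infty \gamma^t R_t \,\middle|\, \theta\right] = \mathbf{E}\!\left[\sum_{t=0}^\infty \gamma^t \Tilde R_t^i \,\middle|\, \theta_i\right]$ for $\theta = (\theta_i,\bar\theta_i)$.

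First I would move the expectation inside the infinite sum on both sides, rewriting $J(\theta) = \sum_{t=0}^\infty \gamma^t \mathbf{E}[R_t \mid \theta]$ and $J_i(\theta_i) = \sum_{t=0}^\infty \gamma^t \mathbf{E}[\Tilde R_t^i \mid \theta_i]$. Then, for each fixed $t$, I would expand over the finite reward set: $\mathbf{E}[R_t \mid \theta] = \sum_{r \in \mathcal R} r\,\Pr(R_t{=}r \mid \theta)$ and likewise $\mathbf{E}[\Tilde R_t^i \mid \theta_i] = \sum_{r \in \mathcal R} r\,\Pr(\Tilde R_t^i{=}r \mid \theta_i)$. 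Property~\ref{correct_R_t} states exactly that $\Pr(R_t{=}r) = \Pr(\Tilde R_t^i{=}r)$ for every $r \in \mathcal R$ (with the conditioning on $\theta$ and $\theta_i$ implicit, as elsewhere in the section), so the two expectations coincide for every $t$. Summing the resulting equalities $\gamma^t\mathbf{E}[R_t\mid\theta] = \gamma^t\mathbf{E}[\Tilde R_t^i \mid \theta_i]$ over $t$ gives $J(\theta) = J_i(\theta_i)$.

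I expect the only real work to be justifying the sum--expectation interchange in the first step, since the rewards can be negative and $\gamma$ is allowed to equal $1$, so Tonelli does not directly apply. The intended argument is to use that $\mathcal R$ is finite (hence rewards are uniformly bounded) together with the standing assumption that the discounted return is finite over an episode: when $\gamma < 1$ this yields an integrable dominating function and dominated convergence applies, and the boundary case is covered under the same assumption. The identical argument carries over verbatim to $M^i$ because $\mathcal R^i = \mathcal R$ and $\gamma_i = \gamma$. All of the structural content of the coagent construction has already been absorbed into Property~\ref{correct_R_t}, so no further reasoning about the network is needed here.
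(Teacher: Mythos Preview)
Your proposal is correct and follows essentially the same approach as the paper: both arguments reduce the equality $J(\theta)=J_i(\theta_i)$ directly to Property~\ref{correct_R_t} (the equality of reward distributions at each time $t$) together with $\gamma_i=\gamma$. Your version is simply more explicit, spelling out the per-$t$ expectation match and the sum--expectation interchange that the paper's two-line proof leaves implicit.
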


Next, using Lemmas \ref{is_mdp} and \ref{correct_comdp}, we show that the local policy gradient, $\Delta_i$ (the expected value of the naive REINFORCE update), is equivalent to the gradient $\frac{\partial J_i}{\partial \theta_i}$ of the $\ith$ CoMDP.
\setcounter{lemma}{2}
\begin{lemma}
    For all coagents $i$, for all $\theta_i$, $\frac{\partial J_i(\theta_i)}{\partial \theta_i} = \Delta_i(\theta_i).$
\end{lemma}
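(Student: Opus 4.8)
The plan is to derive the identity by invoking the likelihood-ratio (REINFORCE) form of the policy gradient theorem on the CoMDP $M^i$ and then transporting the resulting expectation back to the original coagent network. Since $M^i$ is a genuine MDP (Lemma \ref{is_mdp}), each $\pi_i(x,\cdot,\theta_i)$ is differentiable in $\theta_i$ (inherited from the standing differentiability assumption on $\pi$), and the discounted return of $M^i$ is finite (its reward set and reward distribution coincide with those of $M$ by $\mathcal R^i = \mathcal R$ and Property \ref{correct_R_t}, and $\gamma_i = \gamma$), the policy gradient theorem gives
$$
\frac{\partial J_i(\theta_i)}{\partial \theta_i}
= \mathbf{E}\!\left[ \sum_{t=0}^\infty \gamma^t \tilde G_t \, \frac{\partial \ln \pi_i(\tilde X_t, \tilde U_t, \theta_i)}{\partial \theta_i} \,\middle|\, \theta_i \right],
$$
where $\tilde G_t \coloneqq \sum_{k=0}^\infty \gamma^k \tilde R_{t+k}$ and the expectation is over trajectories of $M^i$ under $\pi_i(\cdot,\cdot,\theta_i)$. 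Comparing this with the definition of $\Delta_i(\theta_i)$, the lemma reduces to showing that the law of the trajectory $(\tilde X_0, \tilde U_0, \tilde R_0, \tilde X_1, \tilde U_1, \tilde R_1, \dots)$ in $M^i$ equals the law of $(X_0, U_0, R_0, X_1, U_1, R_1, \dots)$ in the original network run with $\theta = (\theta_i, \bar\theta_i)$, since the summand above is, for fixed $\theta_i$, a measurable functional of that trajectory alone (note $G_t = \sum_k \gamma^k R_{t+k}$ depends only on the reward coordinates).

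I would prove this trajectory-law equivalence by induction on the length of a finite prefix, i.e.\ by matching all cylinder-set probabilities. The base case is the agreement of $d_0^i$ with the distribution of $(S_0, U_0^\pre)$, which is Property \ref{correct_di0}. For the inductive step, two ingredients suffice. First, in the original network $R_t$ and $(S_{t+1}, U_{t+1}^\pre)$ depend on the past only through $(S_t, U_t^\pre, U_t)$: reading off the Bayesian network of Figure \ref{fig:Causal}, $A_t \sim \pi_i^\post(X_t, U_t, \cdot)$, then $S_{t+1} \sim P(S_t, A_t, \cdot)$, then $R_t \sim R(S_t, A_t, S_{t+1}, \cdot)$, then $U_{t+1}^\pre \sim \pi_i^\pre(S_{t+1}, \cdot)$, and $U_t \sim \pi_i(X_t, \cdot, \theta_i)$ given $X_t$ --- none of which references history before time $t$ --- so the original $(X_t, U_t, R_t)$ process is itself a controlled Markov reward process. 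Second, $M^i$ is Markov with one-step transition/reward kernels $P^i$ and $R^i$, which by Properties \ref{correct_transition} and \ref{correct_reward} are exactly these conditionals of the original network. Matching initial laws together with matching one-step transition and reward kernels forces equality of all finite-prefix probabilities, hence of the full trajectory laws. (Lemma \ref{correct_comdp} collects precisely the marginal facts needed here.)

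Given the trajectory-law equivalence, the conclusion follows by linearity: writing $\Delta_i(\theta_i) = \sum_t \sum_k \gamma^{t+k}\, \mathbf{E}[R_{t+k}\, \partial \ln \pi_i(X_t, U_t, \theta_i)/\partial\theta_i \mid \theta]$ and expanding $\partial J_i(\theta_i)/\partial\theta_i$ the same way through $\tilde G_t$, each term only involves the joint law of finitely many trajectory coordinates, which we have shown is shared, so the two series agree term by term. I expect the main obstacle to be analytic rather than conceptual: carefully justifying the interchanges of expectation, the infinite sum over $t$, and $\partial/\partial\theta_i$ that are built into the policy gradient theorem for an infinite-horizon discounted problem --- this needs the standing assumption that the discounted reward sum is finite (to supply a dominating function for the relevant convergence arguments) together with differentiability of $\pi_i$ --- and checking that these regularity hypotheses genuinely transfer to $M^i$, which they do because $M^i$ inherits $M$'s reward set, reward law (Property \ref{correct_R_t}), and discount factor.
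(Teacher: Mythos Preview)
Your proposal is correct and follows the same approach as the paper: apply the policy gradient theorem to the CoMDP $M^i$ (using Lemma \ref{is_mdp}) and then identify the resulting expectation with $\Delta_i$ via the equivalence between $M^i$ and the coagent's local environment (Lemma \ref{correct_comdp}). Your treatment is in fact more explicit than the paper's, which simply asserts that the lemma ``follows directly'' from Lemmas \ref{is_mdp} and \ref{correct_comdp} and the policy gradient theorem without spelling out the trajectory-law equivalence or the analytic regularity checks you outline.
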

The CPGT states that the local policy gradients are the components of the global policy gradient.
Notice that this intuitively follows by transitivity from Property \ref{J_equivalence} and Lemma \ref{delta_is_dJ}.
\begin{thm}[Coagent Policy Gradient Theorem]
\label{cpgt}
    $\\ \nabla J(\theta)=\left [\Delta_1(\theta_1)^\intercal , \Delta_2(\theta_2)^\intercal,\dotsc,\Delta_m(\theta_m)^\intercal\right ]^\intercal$, 
    where $m$ is the number of coagents and $\Delta_i$ is the local policy gradient of the $\ith$ coagent.
\end{thm}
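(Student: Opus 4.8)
The plan is to derive the Coagent Policy Gradient Theorem directly from the two facts the paper has just established: Property~\ref{J_equivalence_main_paper}, which states $J(\theta) = J_i(\theta_i)$ for the same underlying $\theta = (\theta_i, \bar\theta_i)$, and Lemma~\ref{delta_is_dJ}, which states $\frac{\partial J_i(\theta_i)}{\partial \theta_i} = \Delta_i(\theta_i)$. The argument is essentially a chain of equalities applied componentwise, so the proof is short; the substantive content has already been offloaded into the CoMDP machinery.

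Here are the steps in order. First, fix an arbitrary coagent index $i \in \{1, \dots, m\}$ and an arbitrary parameter vector $\theta = (\theta_i, \bar\theta_i) \in \mathbb{R}^n$. Second, differentiate both sides of the identity in Property~\ref{J_equivalence_main_paper} with respect to $\theta_i$; since $J(\theta) = J_i(\theta_i)$ holds as functions (for every admissible $\theta_i$, with $\bar\theta_i$ held fixed), we get $\frac{\partial J(\theta)}{\partial \theta_i} = \frac{\partial J_i(\theta_i)}{\partial \theta_i}$ — note the left side is the partial of $J$ with respect to the block of coordinates $\theta_i$, with the other blocks frozen. Third, apply Lemma~\ref{delta_is_dJ} to rewrite the right-hand side as $\Delta_i(\theta_i)$, yielding $\frac{\partial J(\theta)}{\partial \theta_i} = \Delta_i(\theta_i)$ for each $i$. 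Fourth, recall the definition of the global gradient given in the excerpt, $\nabla J(\theta) = \left[\frac{\partial J(\theta)}{\partial \theta_1}^\intercal, \dots, \frac{\partial J(\theta)}{\partial \theta_m}^\intercal\right]^\intercal$ (the coagents partition $\theta$ into disjoint blocks, so stacking the block-partials recovers the full gradient), and substitute the per-block identity into each slot to obtain $\nabla J(\theta) = \left[\Delta_1(\theta_1)^\intercal, \dots, \Delta_m(\theta_m)^\intercal\right]^\intercal$, which is exactly the claim.

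The one point that deserves care — and what I expect to be the main obstacle, modest as it is — is the legitimacy of differentiating Property~\ref{J_equivalence_main_paper} termwise in $\theta_i$: the property asserts a pointwise equality of two scalar-valued functions of $\theta_i$ (with $\bar\theta_i$ fixed and with the CoMDP $M^i$ itself depending on $\bar\theta_i$ but not on $\theta_i$), so their derivatives with respect to $\theta_i$ coincide wherever either exists. Differentiability of $J$ in $\theta_i$ follows from the paper's standing assumption that $\partial \pi(s,a,\theta)/\partial\theta$ exists everywhere together with the assumed finiteness of the discounted return (which legitimizes the interchange of expectation and differentiation implicit in the standard policy gradient theorem applied to $M^i$); differentiability of $J_i$ in $\theta_i$ is likewise inherited, since $J_i = \frac{\partial J_i}{\partial\theta_i}$ being well-defined is precisely what Lemma~\ref{delta_is_dJ} presupposes. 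No new estimates are needed. I would state the proof as the three-line transitivity chain $\nabla J(\theta)$ block-$i$ $= \frac{\partial J_i(\theta_i)}{\partial\theta_i} = \Delta_i(\theta_i)$, and remark that it holds for every $i$, so stacking over $i$ completes the argument.
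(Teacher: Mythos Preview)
Your proposal is correct and matches the paper's own proof essentially line for line: the paper writes the three-step chain $\nabla J(\theta) = \big[\frac{\partial J(\theta)}{\partial \theta_i}\big]_i \overset{\text{Prop.~\ref{J_equivalence_main_paper}}}{=} \big[\frac{\partial J_i(\theta_i)}{\partial \theta_i}\big]_i \overset{\text{Lem.~\ref{delta_is_dJ}}}{=} \big[\Delta_i(\theta_i)\big]_i$, which is exactly your transitivity argument. Your extra remark on the legitimacy of differentiating the pointwise identity in Property~\ref{J_equivalence_main_paper} is a welcome bit of care that the paper leaves implicit.
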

\begin{cor}
    If $\alpha_t$ is a deterministic positive stepsize, $\sum_{t=0}^\infty \alpha_t = \infty$, $\sum_{t=0}^\infty \alpha_t^2 < \infty$, additional technical assumptions are met \citep[Proposition 3]{Bertsekas2000}, and each coagent updates its parameters, $\theta_i$, with an unbiased local policy gradient update $\theta_i \gets \theta_i + \alpha_t \widehat \Delta_i(\theta_i)$, then $J(\theta)$ converges to a finite value and $\lim_{t \to \infty} \!\!\nabla J(\theta) {=} 0$. 
\end{cor}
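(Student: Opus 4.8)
The plan is to reduce the statement to a standard stochastic gradient ascent convergence theorem. First I would observe that, by the Coagent Policy Gradient Theorem (Theorem~\ref{cpgt}), the vector obtained by stacking the local policy gradients coincides with the global policy gradient: $\nabla J(\theta) = [\Delta_1(\theta_1)^\intercal,\dots,\Delta_m(\theta_m)^\intercal]^\intercal$. Since each $\widehat \Delta_i(\theta_i)$ is, by hypothesis, an unbiased estimator of $\Delta_i(\theta_i)$ conditioned on the current parameter vector (with the episode generating it run using fixed parameters $\theta$), the stacked vector $\widehat \Delta(\theta) \coloneqq [\widehat \Delta_1(\theta_1)^\intercal,\dots,\widehat \Delta_m(\theta_m)^\intercal]^\intercal$ satisfies $\mathbf{E}[\widehat \Delta(\theta)\mid \theta] = \nabla J(\theta)$. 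Hence the simultaneous updates $\theta_i \gets \theta_i + \alpha_t \widehat \Delta_i(\theta_i)$ are, jointly, exactly one step of stochastic gradient ascent on $J$, namely $\theta \gets \theta + \alpha_t \widehat \Delta(\theta)$ with $\widehat \Delta(\theta) = \nabla J(\theta) + w_t$, where $w_t \coloneqq \widehat \Delta(\theta) - \nabla J(\theta)$ is a martingale-difference noise term: $\mathbf{E}[w_t \mid \mathcal{F}_t] = 0$ for the natural filtration $\mathcal{F}_t$ generated by $\theta_0,\dots,\theta_t$ and the past randomness.

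Next I would invoke the stochastic approximation result of Bertsekas and Tsitsiklis (Proposition~3 of \citep{Bertsekas2000}) applied to $f = -J$, so that ascent on $J$ becomes descent on $f$. The hypotheses to verify are: (i) the Robbins--Monro stepsize conditions $\sum_t \alpha_t = \infty$ and $\sum_t \alpha_t^2 < \infty$, which are assumed; (ii) $J$ is bounded above and has a Lipschitz-continuous gradient --- this is the role of the ``additional technical assumptions,'' together with our standing assumption that the discounted return over an episode is finite and the differentiability assumption on $\pi$, which make $J$ finite and $\nabla J$ well defined; and (iii) the noise $w_t$ is zero-mean given the past and has conditional second moment bounded by $A + B\|\nabla J(\theta_t)\|^2$ for constants $A, B$ --- again folded into the technical assumptions, and the place where finiteness of the variance of the REINFORCE-style estimators $\widehat \Delta_i$ enters. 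Given (i)--(iii), the cited proposition yields that $J(\theta_t)$ converges to a finite value and $\lim_{t\to\infty}\nabla J(\theta_t) = 0$, which is the claim.

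The main obstacle is bookkeeping rather than conceptual: one must be careful that the unbiasedness of $\widehat \Delta_i(\theta_i)$ holds conditionally on the current parameter vector $\theta_t$ (not merely unconditionally), so that $w_t$ is genuinely a martingale difference with respect to the filtration that also determines $\theta_{t+1}$. This requires that the episode used to form $\widehat \Delta_i$ at iteration $t$ be generated with all parameters frozen at $\theta_t$, and that every coagent update synchronously. The other delicate point is confirming that the chosen local estimators satisfy the variance bound in (iii); since the corollary explicitly defers this to the cited proposition's technical assumptions, the proof need only indicate where these conditions are used, and the remainder follows by direct citation.
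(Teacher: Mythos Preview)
Your proposal is correct and follows essentially the same approach as the paper: the paper's proof simply states that the corollary follows directly from the CPGT, Proposition~3 of Bertsekas and Tsitsiklis (2000), and the assumption that the discounted sum of rewards is finite. Your write-up is a more detailed unpacking of exactly this argument, including the stacking of local estimators into a global unbiased gradient estimate and the verification of the hypotheses of the cited proposition, which the paper leaves implicit under ``additional technical assumptions.''
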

\section{Asynchronous Recurrent Networks}
\label{sec:asynchronous}
Having formally established the CPGT, we now turn to extending the CPGA framework to asynchronous and cyclic networks---networks where the coagents \emph{execute}, that is, look at their local state and choose actions, asynchronously and without any necessary order. 
%
%
This extension allows for distributed implementations, where nodes may not execute synchronously. This also facilitates temporal abstraction, since, by varying coagent execution rates, one can design algorithms that learn and act across different levels of abstraction.

We first consider how we may modify an MDP to allow coagents to execute at arbitrary points in time, including at points \emph{in between} our usual time steps.
Our motivation is to consider continuous time. As a theoretical construct, we can approximate continuous time with arbitrarily high precision by breaking a time step of the MDP into an arbitrarily large number of shorter steps, which we call \emph{atomic time steps}. 
We assume that the environment performs its usual update regularly every $n \in \mathbb Z^+$ atomic time steps, and that each coagent \emph{executes} (chooses an output in its respective $\mathcal U^i$) at each atomic time step with some probability, given by an arbitrary but fixed distribution that may be conditioned on the local state. 
On atomic time steps where the $\ith$ coagent does not execute, it continues to output the last chosen action in $\mathcal U^i$ until its next execution.
%
%
The duration of atomic time steps can be arbitrarily small to allow for arbitrarily close approximations to continuous time or to model, for example, a CPU cluster that performs billions of updates per second.
The objective is still the expected value of $G_0$, the discounted sum of rewards from all atomic time steps: $J(\theta) = \mathbf E[G_0 | \theta] =  \mathbf E[\sum_{t = 0}^\infty \gamma^t R_t | \theta]$.
(Note that $\gamma$ in this equation is the $n^\text{th}$ root of the original $\gamma$, where $n$ is the number of of atomic time steps per environment update.)

Next, we extend the coagent framework to allow \emph{cyclic} connections.
Previously, we considered a coagent's local state to be captured by $X^i_t = (S_t, U^\pre_t)$, where $U^\pre_t$ is some combination of outputs from coagents that come before the $\ith$ coagent topologically.
We now allow coagents to also consider the output of all $m$ coagents 
on the \emph{previous} time step, $U^\all_{t-1} = (U^1_{t-1}, U^2_{t-1}, \dots U^m_{t-1})$.
\,In the new setting, the local state at time $t$ is therefore given by $X^i_t = (S_t, U^\pre_t, U^\all_{t - 1})$.
The  corresponding local state set is given by $\mathcal X^i = \mathcal S \times \mathcal U^\pre \times \mathcal U^1 \times \dots \times \mathcal U^m$.
In this construction, when $t = 0$, we must consider some initial output of each coagent, $U^\all_{-1}$.
For the $\ith$ coagent, we define $U^i_{-1}$ to be drawn from some initial distribution, $h^i_0$, such that for all $u \in \mathcal U^i, h^i_0(u) = \Pr(U^i_{-1} = u)$.

We redefine how each coagent selects actions in the asynchronous setting.
First, we define a random variable, $E^i_t$, the value of which is $1$ if the $\ith$ coagent executes on atomic time step $t$, and $0$ otherwise.
Each coagent has a fixed \emph{execution function}, $\beta_i: \mathcal X^i \times \mathbb N \to [0, 1]$, which defines the probability of the $\ith$ coagent executing on time step $t$, given the coagent's local state. 
That is, for all $x \in \mathcal X^i, \,\, \beta_i(x) \coloneqq \Pr(E^i_t = 1 | X^i_t = x$).
Finally, the action that the $\ith$ coagent selects at time $t$, $U^i_t$, is sampled from $\pi_i(X^i_t, \cdot, \theta_i)$ if $E^i_t = 1$, and is $U^i_{t-1}$ otherwise.
That is, if the agent does not execute on atomic time step $t$, then it should repeat its action from time $t-1$.

We cannot directly apply the CPGT to this setting: the policy and environment are \emph{non-Markovian}.
That is, we cannot determine the distribution over the output of the network given only the current state, $S_t$, since the output may also depend on $U^\all_{t-1}$.
However, we show that the asynchronous setting can be reduced to the acyclic, synchronous setting using formulaic changes to the state set, transition function, and network structure.
This allows us to derive an expression for the gradient with respect to the parameters of the original, asynchronous network, and thus to train such a network.
We prove a result similar to the CPGT that allows us to update the parameters of each coagent using only states and actions from atomic time steps when the coagent executes.
\subsection{The CPGT for Asynchronous Networks}
We first extend the definition of the \emph{local policy gradient}, 
$\Delta_i$, to the asynchronous setting. 
In the synchronous setting, the local policy gradient captures the update that a coagent would perform if it was following an unbiased policy gradient algorithm using its local inputs and outputs.
In the asynchronous setting, we capture the update that an agent would perform if it were to consider only the local inputs and outputs it sees when it executes.
%
%
Formally, we define the \emph{asynchronous local policy gradient}:
%
$
    \Delta_i(\theta_i) \coloneqq
    \mathbf{E} \Big[ \sum_{t=0}^\infty E^i_t \gamma^t G_t 
    %
    \frac{\partial \ln \big( \pi_i \big ( X_t, U_t, \theta_i \big ) \big )}{\partial \theta_i} \Big | \theta \Big].
$

The only change from the synchronous version is the introduction of $E^i_t$.
Note that when the coagent does not execute ($E^i_t = 0$), the entire inner expression is 0.
In other words, these states and actions can be ignored.
An algorithm estimating $G_t$ would still need to consider the rewards from \emph{every} atomic time step, including time steps where the coagent does not execute.
However, the algorithm may still be designed such that the coagents only perform a computation when executing.
For example, during execution, coagents may be given the discounted sum of rewards since their last execution to serve as a summary of all rewards since that execution.
%
%
The important question is then: does something like the CPGT hold for the \emph{asynchronous} local policy gradient? 
If each coagent executes a policy gradient algorithm using unbiased estimates of $\Delta_i$, does the network still perform gradient descent on the asynchronous setting objective, $J$?
The answer turns out to be yes.

\begin{thm}[Asynchronous Coagent Policy Gradient Theorem]
\label{thm:acpgt}
    $\\ \nabla J(\theta)=\left [\Delta_1(\theta_1)^\intercal , \Delta_2(\theta_2)^\intercal,\dotsc,\Delta_m(\theta_m)^\intercal\right ]^\intercal$, 
    where $m$ is the number of coagents and $\Delta_i$ is the asynchronous local policy gradient of the $\ith$ coagent.
\end{thm}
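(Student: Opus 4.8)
The plan is to reduce the asynchronous, cyclic setting to the synchronous, acyclic setting already covered by Theorem \ref{cpgt}, by folding the one-step recurrence and the stochastic execution decisions into a new MDP and a new \emph{acyclic} coagent network, applying the CPGT there, and then transferring the resulting gradient identity back to the original network.

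Concretely, I would construct an MDP $M'$ whose time steps are the atomic time steps and whose state is the augmented tuple $S'_t \coloneqq (S_t, U^\all_{t-1})$ (together with a counter for the phase within the current environment update, if needed), with initial distribution $d'_0(s, u_{-1}) \coloneqq d_0(s)\prod_{j=1}^m h^j_0(u^j_{-1})$, so that the initial outputs $U^j_{-1}\sim h^j_0$ are encoded in $d'_0$. The action set of $M'$ is augmented to $\mathcal A \times \mathcal U^1 \times \cdots \times \mathcal U^m$: the network in $M'$ emits both the original action $A_t$ and the tuple $U^\all_t$ of all coagent outputs; the environment applies $P$ (or the identity, on atomic steps that are not environment updates) to the $\mathcal S$-component using $A_t$, copies the $\mathcal U$-components into the $U^\all$-part of $S'_{t+1}$, and draws $R_t$ using only $A_t$ and the $\mathcal S$-components. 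For each original coagent $j$ I insert a fixed, parameter-free node --- absorbed into the pre/post structure of every coagent that follows it --- which samples $E^j_t \sim \mathrm{Bernoulli}(\beta_j(X^j_t))$; coagent $j$ then receives the local state $(X^j_t, E^j_t) = (S_t, U^\pre_t, U^\all_{t-1}, E^j_t)$ and acts according to the policy $\pi'_j$ that equals $\pi_j((S_t,U^\pre_t), \cdot, \theta_j)$ when $E^j_t = 1$ and deterministically repeats the $U^j_{t-1}$-component of $U^\all_{t-1}$ when $E^j_t = 0$. Because every cyclic dependency of the original network now passes through the one-step delay stored in $S'_t$, the feedforward ``pre'' ordering makes the new network acyclic; and since the augmented coordinates are pure bookkeeping, $M'$ is a genuine MDP whose induced distribution over $(S_t, A_t, R_t)$ sequences, hence whose objective, equals $J(\theta)$.

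Then I would invoke Theorem \ref{cpgt} on $M'$ with this new network. Each coagent $i$'s surrounding pre/post distributions are fixed given $\bar\theta_i$ (the inserted $\beta$-nodes carry no parameters), so the CPGT gives $\partial J(\theta)/\partial\theta_i = \Delta'_i(\theta_i) = \mathbf E\big[\sum_{t=0}^\infty \gamma^t G_t\, \tfrac{\partial}{\partial\theta_i}\ln \pi'_i((X^i_t, E^i_t), U^i_t, \theta_i) \,\big|\, \theta\big]$, where $\Delta'_i$ is the synchronous local policy gradient in $M'$. On the event $E^i_t = 0$, $\pi'_i$ is a point mass not depending on $\theta_i$, so that summand vanishes; on the event $E^i_t = 1$, $\ln \pi'_i((X^i_t, E^i_t), U^i_t, \theta_i) = \ln \pi_i(X^i_t, U^i_t, \theta_i)$. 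Hence $\Delta'_i(\theta_i) = \mathbf E\big[\sum_{t=0}^\infty E^i_t\, \gamma^t G_t\, \tfrac{\partial}{\partial\theta_i}\ln \pi_i(X^i_t, U^i_t, \theta_i)\,\big|\,\theta\big]$, which is precisely the asynchronous local policy gradient $\Delta_i(\theta_i)$. Stacking over $i$ yields the claim.

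I expect the main obstacle to be the verification layer rather than the gradient algebra: one must show rigorously that $M'$ has the Markov property and the same objective (tracking the atomic-time-step reward and discount conventions and the handling of atomic steps with no environment update), that the recurrence-via-state together with the inserted execution nodes really produces an acyclic network to which Theorem \ref{cpgt} applies verbatim, and that coagent $i$'s local state, output, and policy in $M'$ restrict to exactly the $X^i_t$, $U^i_t$, and $\pi_i$ appearing in the definition of the asynchronous $\Delta_i$. A minor but necessary point is that when $E^i_t = 0$ the repeated action is a \emph{deterministic function of the coagent's $M'$-local state}, which holds because $U^i_{t-1}$ is a component of $U^\all_{t-1}$ and hence of that local state; this is what makes the modified policy well defined as a coagent policy.
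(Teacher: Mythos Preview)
Your proposal is correct and follows essentially the same route as the paper: augment the state to $(S_t, U^{\all}_{t-1})$, split each coagent into a parameter-free execution node and a policy node to obtain an acyclic synchronous network on the augmented MDP, show the objectives coincide, apply Theorem~\ref{cpgt}, and collapse the resulting local gradient using the fact that the modified policy is constant in $\theta_i$ when $E^i_t=0$. The only cosmetic difference is that the paper also places the execution bits $\mathcal E$ (in addition to $\mathcal U^{\all}$) into the augmented \emph{action} set rather than leaving them solely in the pre-structure, but this is bookkeeping and does not affect the argument.
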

\begin{proof}
The general approach is to show that for any MDP $M$, with an asynchronous network represented by $\pi$ with parameters $\theta$, there is an \emph{augmented} MDP, $\grave M$, with objective $\grave J$ and an \emph{acylic, synchronous} network, $\grave \pi$, with the same parameters $\theta$, such that $J(\theta) = \grave J(\theta)$.
Thus, we \emph{reduce} the asynchronous problem to an equivalent synchronous problem.
Applying the CPGT to this \emph{reduced setting} allows us to derive Theorem \ref{thm:acpgt}.

The original MDP, $M$, is given by the tuple $(\mathcal S, \mathcal A, P, R, d_0, \gamma)$.
We define the \emph{augmented} MDP, $\grave M$, as the tuple, $(\grave{ \mathcal S}, \grave{\mathcal A}, \grave P, \grave R, \grave d_0, \grave \gamma)$.
We would like $\grave M$ to hold \emph{all} of the information necessary for each coagent to compute its next output, including the previous outputs 
of all coagents.
This will allow us to construct an acyclic version of the network to which we may apply the CPGT.
We define $\mathcal U^\text{all} = \mathcal U^1 \times \mathcal U^2 \times \dots \times \mathcal U^m$ to be the combined output set of all $m$ coagents in $\pi$ 
and $\mathcal E = \{0, 1\}^m$ to be the set of possible combinations of coagent executions.
We define the state set to be $\grave{\mathcal S} = \mathcal S \times \mathcal U^\text{all}$ 
and the action set to be $\grave{\mathcal A} = \mathcal A \times \mathcal U^\text{all} \times \mathcal E$.
%
%
We write the random variables representing the state, action, and reward at time $t$ as $\grave S_t, \grave A_t,$ and $\grave R_t$ respectively.
Additionally, we refer to the components of values $\grave s \in \mathcal{\grave S}$ and $\grave a \in \mathcal{\grave A}$ and the components of the random variables $\grave S_t$ and $\grave A_t$ using the same notation as for the components of $X_t$ above (for example, $\grave s.s$ is the $\mathcal S$ component of $\grave s$, $\grave A_t.u^\all$ is the $\mathcal U^\all$ component of $\grave A_t$, etc.).
For vector components, we write the $\ith$ component of the vector using a subscript $i$ (for example, $\grave s.u^\all_i$ is the $\ith$ component of $\grave s.u^\all$).

The transition function, $\grave P$, captures the original transition function and the fact that $\grave S_{t+1}.u^\all = \grave A_t.u^\all$. 
For all $\grave s, \grave s' \in \grave S$ and $\grave a \in \grave A$, $\grave P(\grave s, \grave a, \grave s')$ is given by $P(\grave s.s, \grave a.a, \grave s'.s) \text{ if } \grave s'.u^\all {=} \grave a.u^\all$, 
and $0$ otherwise.
%
%
For all $\grave s, \grave s' \in \grave S, \grave a \in \grave A$, and $r \in \mathbb R$, the reward distribution is simply given by $\grave R(\grave s, \grave a, \grave s', r) {=} R(\grave s.s, \grave a.a, \grave s'.s, r)$. 
The initial state distribution, $\grave d_0$, captures the original state distribution and the initialization of each coagent. 
For all $\grave s {\in} \grave S$, it is given by $\grave d_0(\grave s) {=} d_0(\grave s.s) \prod_{i=1}^m h_0(\grave s.u_i)$. 
The discount parameter is $\grave \gamma {=} \gamma$.
The objective is the usual: $\grave J(\theta) {=} \mathbf E[\grave G_0 | \theta ]$, where $\grave G_0 {=} \mathbf E[\sum_{t=0}^\infty \grave \gamma^t \grave R_t | \theta]$.

Next we define the synchronous network, $\grave \pi$, in terms of components of the original asynchronous network, $\pi$---specifically, each $\pi_i$, $\beta_i$, and $\theta_i$.
We must modify the original network to accept inputs in $\grave{\mathcal S}$ and produce outputs in $\grave{\mathcal A}$.
Recall that in the asynchronous network, the local state at time $t$ of the $\ith$ coagent is given by $X^i_t = (S_t, U^\pre_t, U^\all_{t -1})$.
In the augmented MDP, the information in $U^\all_{t-1}$ is contained in $\grave S_t$, so the local state of the $\ith$ coagent in the synchronous network is $\grave X^i_t = (\grave S_t, \grave U^\pre_t)$, with accompanying state set $\grave{\mathcal X}^i = \grave{\mathcal S} \times \grave{\mathcal U}^\pre$.
To produce the $\mathcal U^\all$ component of the action, $\grave A_t.u^\all$, we append the output of each coagent to the action.
In doing so, we have removed the need for cyclic connections, but still must deal with the asynchronous execution.

The critical step is as follows: We represent each coagent in the asynchronous network by \emph{two} coagents in the synchronous network, the first of which represents the execution function, $\beta_i$, and the second of which represents the original policy, $\pi_i$.
%
At time step $t$, the first coagent accepts $\grave X^i_t$ and outputs $1$ with probability $\beta_i((\grave S_t.s, \grave U^\pre_t, \grave S_t.u))$, and 0 otherwise.
We append the output of every such coagent to the action in order to produce the $\mathcal E$ component of the action, $\grave A_t.e$.
Because the coagent representing $\beta_i$ executes before the coagent representing $\pi_i$, from the latter's perspective, the output of the former is present in $\grave U^\pre_t$, that is, $\grave U^\pre_t.e_i = \grave A_t.e_i$.
If $\grave U^\pre_t.e_i = 1$, the coagent samples a new action from $\pi_i$.
Otherwise, it repeats its previous action, which can be read from its local state (that is, $\grave X^i_t.u^\all_i = \grave U^i_{t-1}$).
Formally, for all $(\grave s, \grave u_\pre) \in \grave{\mathcal X^i}$ and $\theta_i$, the probability of the latter coagent producing action $\grave u \in \grave{\mathcal U^i}$ is given by: $\grave \pi_i((\grave s, \grave u_\pre), \grave u, \theta_i) {\coloneqq}
            \pi_i((\grave s.s, \grave u_\pre, \grave s.u^\all), \grave u, \theta_i) \text{ if } \grave u_\pre.e_i {=} 1,$
            $1 \text{ if } \grave u_\pre.e_i {=} 0 \text{ and } \grave s.u^\all_i {=} \grave u,$
             and $0 \text{ otherwise.}$
%
%
%
This completes the description of $\grave \pi$.
%
In the supplementary material, we prove that this network exactly captures the behavior of the asynchronous network---that is, $\grave \pi((s, u), (a, u', e), \theta)$ $= \Pr(A_t = a, U^\all_{t} = u', E_t = e | S_t = s, U^\all_{t-1} = u, \theta)$ for all possible values of $a, u, a, u', e, \text{ and } \theta$ in their appropriate sets.

The proof that $J(\theta) = \grave J(\theta)$ is given in Section \ref{sec:async_supp} of the supplementary material, but it follows intuitively from the fact that 1) the ``hidden'' state of the network is now captured by the state set, 2) $\grave \pi$ accurately captures the dynamics of the hidden state, and 3) this hidden state does not materially affect the transition function or the reward distribution with respect to the original states and actions.

Having shown that the expected return in the asynchronous setting is equal to the expected return in the synchronous setting, we turn to deriving the asynchronous local policy gradient, $\Delta_i$.
It follows from $J(\theta) = \grave J(\theta)$ that $\nabla J(\theta) = \nabla \grave J(\theta)$.
Since $\grave \pi$ is a synchronous, acylic network, and $\grave M$ is an MDP, we can apply the CPGT to find an expression for  $\nabla \grave J(\theta)$.
For the $\ith$ coagent in the synchronous network, this gives us:
$
    \frac{\partial \grave J(\theta)}{\partial \theta_i} = \mathbf{E} \Big[ \sum_{t=0}^\infty \grave{ \gamma}^t \grave G_t  \frac{\partial \ln\left ( \grave \pi_i\left ( (\grave S_t,\grave U_t^\text{pre}), \grave U_t, \theta_i \right ) \right )}{\partial \theta_i} \Big | \theta \Big].
$

Consider $\partial \ln \big( \grave \pi_i ( (\grave S_t,\grave U_t^\text{pre}), \grave U_t, \theta_i) \big) / \partial \theta_i$, which we abbreviate as $\partial \grave \pi_i / \partial \theta_i$.
When $\grave U^\pre_t.e_i = 0$, we know that the action is $\grave U^i_{t} = \grave S_t.u_i$ $= \grave U^i_{t-1}$ regardless of $\theta$.
Therefore, in these local states, $\partial \grave \pi_i / \partial \theta_i$ is zero.
When $\grave U^\pre_t.e_i = 1$, we see from the definition of $\grave \pi$ that $\partial \grave \pi_i / \partial \theta_i {=} \partial \pi_i / \partial \theta_i$.
Therefore, we see that in all cases, $\partial \grave \pi_i / \partial \theta_i {=} (\grave U^\pre_t.e_i) \partial \pi / \partial \theta_i$.
Substituting this into the above expression yields:
$
    \mathbf{E} \Big[ \!\sum_{t=0}^\infty (\grave U^\pre_t.e_i) \grave \gamma^t  \grave G_t  \frac{\partial \ln\big ( \pi_i \big( (\grave S_t.s,\grave U_t^\text{pre}, \grave S_t.u^\all), \grave U_t, \theta_i \big) \big )}{\partial \theta_i} \Big | \theta \Big]. 
$

In the proof that $J(\theta) = \grave J(\theta)$ given in Section \ref{sec:async_supp} of the supplementary material, we show that the distribution over all analogous random variables is equivalent in both settings (for example, for all $s \in \mathcal S$, $\Pr(S_t=s) = \Pr(\grave S_t.s = s)$).
Substituting each of the random variables of $M$ into the above expression yields precisely the asynchronous local policy gradient, $\Delta_i$.
\end{proof}
To empirically test the Asynchronous Coagent Policy Gradient Theorem (ACPGT), we compare the gradient ($\nabla J$) estimates of the ACPGT with a finite difference method. The results are presented in Figure \ref{finite_dif} (Section \ref{sec:finite_dif_details}) of the supplementary material; this data provides empirical support for the ACPGT.
\section{Applications to Past and Future Work}
Consider past RL approaches that use stochastic networks, such as stochastic computation graphs, hierarchical networks like the option-critic, or any other form of SNN; one could use the ACPGT to immediately obtain a learning rule for any of these varieties of SNN.
In other words, the theory in the sections above facilitates the derivation of gradient and learning rules for \emph{arbitrary} stochastic architectures.
%
%
The ACPGT applies when there are recurrent connections.
It applies in the asynchronous case, even when the units (coagents) have different rates or execution probabilities, and/or have execution functions that depend on the output of other units or the state.
It also applies to architectures that are designed for temporal abstraction, like the option-critic depicted in Figure \ref{fig:option-critic}. Architectures adding additional levels of abstraction, such as the \emph{hierarchical option-critic} \citep{NIPS2018_8243}, can be analyzed with similar ease.
Architectures designed for other purposes, such as partitioning the state space between different parts of the network \citep{Sutton2011}, or simplifying a high-dimensional action space \citep{Thomas2012}, can also be analyzed with the coagent framework.

The diversity of applications is not limited to network topology variations.  For example, one could design an asynchronous deep neural network where each unit is a separate coagent.
Alternatively, one could design an asynchronous deep neural network where each coagent is itself a deep neural network.
In both cases, the coagents could run at different rates to facilitate temporal abstraction.
In the latter case, the gradient generated by an algorithm following the ACPGT could be used to train each coagent internally with backpropagation.
Notice that the former architecture is trained without backpropagation while the latter architecture combines an ACPGT algorithm with backpropagation: The ACPGT facilitates easy design and analysis for both types of architectures and algorithms.

Deriving the policy gradient for a particular coagent simply requires identifying the inputs and outputs and plugging them into the ACPGT formula.
In this way, an algorithm designer can rapidly design a principled policy gradient algorithm for any stochastic network, even in the asynchronous and recurrent setting.
In the next section, we give an example of this process.
\section{Case Study: Option-Critic}
\label{sec:option}
The coagent framework allows one to design an arbitrary hierarchical architecture, and then immediately produce a learning rule for that architecture.
In this section, we study the well-known \emph{option-critic} framework \citep{bacon2017option} as an example, demonstrating how the CPGT drastically simplifies the gradient derivation process.
The \emph{option-critic} framework 
aspires to many of the same goals as coagent networks: namely, hierarchical learning and temporal abstraction.
We show that the architecture is equivalently described in terms of a simple, three-node coagent network, depicted and described in Figure \ref{fig:option-critic}.
More formal and complete definitions are provided in Section \ref{sec:option-critic-complete-description} of the supplemental material, but Figure \ref{fig:option-critic} is sufficient for the understanding of this section.
\begin{figure}
    \centering
    \includegraphics[width=.49\linewidth]{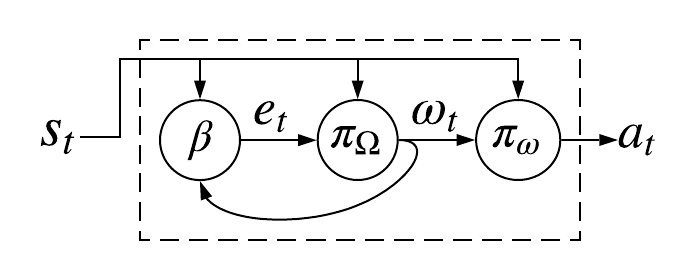}
    \caption{The option-critic framework, described by \citet{bacon2017option}, depicted as a three-coagent network consisting of \textbf{1)} $\beta$, which contains the termination functions for each option, \textbf{2)} $\pi_\Omega$, the policy over options, and \textbf{3)} $\pi_\omega$, which contains the option policies and selects the action $a_t$.
    Each option has a corresponding \emph{label}.
    $\Omega$ is the set of the available options' labels.
    The label corresponding to the option selected at time $t$ is denoted as $\omega_t$. 
    $\beta$'s local state is $s_t$ and the previous option label, $\omega_{t-1}$ (the latter via a recurrent connection from $\pi_\Omega$). $\beta$ sends an action, $e_t \in \{0,1\}$, to $\pi_\Omega$. $\pi_\Omega$'s execution function outputs $1$ (a $100\%$ chance of execution) if $e_t = 1$, and outputs $0$ (no chance of execution) if $e_t = 0$.
    That is, $\pi_\Omega$ executes if and only if $e_t = 1$.
    $\pi_\Omega$'s local state is $s_t$ and $e_t$. If it executes, $\pi_\Omega$ outputs a new option label $\omega_t \in \Omega$.
    $\pi_\omega$ chooses an action, $a_t \in \mathcal A$, based on its local state, $s_t$ and $\omega_t$.
    Internally, $\pi_\omega$ looks up the option policy corresponding to the option label $\omega_t$ and selects an action based on that option.
    }
    \label{fig:option-critic}
\end{figure}

\citet{bacon2017option} give gradients for two of the three coagents. $\pi_\omega$, the policy that selects the action, and $\beta$, the termination functions, are parameterized by weights $\theta$ and $\vartheta$, respectively.
\citet{bacon2017option} gave the corresponding policy gradients, which we rewrite as:
\begin{align}
    \frac{\partial J}{\partial \theta} =& \sum_{x \in (\mathcal S \times \Omega)} d^\pi_\Omega(x) \sum_{a \in \mathcal A} \frac{\partial \pi_\omega(x, a)}{\partial \theta} Q_U(x, a), \label{eq1}
\end{align}
\begin{align}
    \frac{\partial J}{\partial \vartheta} =& - \sum_{x \in (\mathcal S \times \Omega)} d^\pi_\Omega(x) \frac{\partial \beta(x, 0)}{\partial \vartheta} A_\Omega(x.s, x.\omega), \label{eq2}
\end{align}
where $d^\pi_\Omega(x)$ is a discounted weighting of state-option pairs, given by $d^\pi_\Omega(x) \coloneqq \sum_{t = 0}^\infty \gamma^t \Pr(s_t {=} x.s, \omega_t {=} x.\omega)$, $Q_U(x, a)$ is the expected return from choosing option $x.\omega$ and action $a$ at state $s$ under the current policy, and $A_\Omega(s, \omega)$ is the advantage of choosing option $\omega$, given by $A_\Omega(s, \omega) = Q_\Omega(s, \omega) - V_\Omega(s)$, where $Q_\Omega(s, \omega)$ is the expected return from choosing option $\omega$ in state $s$, and $V_\Omega(s)$ is the expected return from beginning in state $s$ with no option selected.

Previously, the CPGT was written in terms of expected values.
An equivalent expression of the local gradient for policy $\pi_i$ is the sum over the local state set, $\mathcal X_i$, and the local action set, $\mathcal U_i$:
$
    \partial J(\theta)/
    \partial \theta_i
    = \sum_{x \in \mathcal X_i} d^\pi_i(x) \sum_{u \in \mathcal U_i} \frac{\partial \pi_i(x, u)}{\partial \theta_i} Q_i(x, u),
$
where $Q_i(x, u) = \mathbf E[G_t | X^i_t {=} x, U^i_t {=} u]$.
Deriving the policy gradient for a particular coagent simply requires identifying the inputs and outputs and plugging them into this formula.
%
%

First consider the policy gradient for $\pi_\omega$, that is, $\partial J/\partial \theta$: the input set is $X_\omega = \mathcal S \times \Omega$, and the action set is $\mathcal A$.
The local initial state distribution (the $d^\pi_i$ term) is given exactly by $d^\pi_\Omega$, and the local state-action value function (the $Q_i$ term) is given exactly by $Q_U$.
Directly substituting these terms into the CPGT immediately yields 
\eqref{eq1}.
Note that this derivation is completely trivial using the CPGT: only direct substitution is required. 
In contrast, the original derivation from \citet{bacon2017option} required a degree of complexity and several steps.

Next consider $\partial J/\partial \vartheta$.
The input set is again $X_\beta = \mathcal S \times \Omega$, but the action set is $\{0, 1\}$.
The local state distribution is again the distribution over state-option pairs, given by $d^\pi_\Omega$.
The PCGN expression gives us 
\begin{align}
    \frac{\partial J}{\partial \vartheta} = \sum_{x \in (\mathcal S \times \Omega)} d^\pi_\Omega(x) \sum_{u \in \{0, 1\}} \frac{\partial \beta(x, u)}{\partial \vartheta} Q_\beta(x, u).
\end{align}
In Section \ref{sec:option-critic-equiv} of the supplementary material we show that this is equivalent to \eqref{eq2}.
Notice that, unlike $\pi_\Omega$, both of these coagents execute every atomic time step. The execution function, therefore, always produces $1$ and therefore can be ignored for the purposes of the gradient calculation.

Finally, consider the gradient of $J$ with respect to the parameters of $\pi_\Omega$.
\citet{bacon2017option} do not provide a policy gradient for $\pi_\Omega$, but suggest policy gradient methods at the SMDP level, planning, or \emph{intra-option Q-learning} \citep{sutton1999between}.
One option is to use the ACPGT approach.
The execution function is simply the output of the termination coagent at time $t$, $e_t$.
We use the expected value form for simplicity, substituting terms in as above:
\begin{align}
\partial J / \partial \mu = \mathbf{E}\Big[\sum_{t=0}^\infty e_t \gamma^t G_t \frac{\partial \ln \big( \pi_\Omega \big ( X_t, \omega_t, \mu \big ) \big )}{\partial \mu} \Big | \{\theta, \vartheta, \mu\} \Big],
\end{align}
where $\mu$ represents the parameters of $\pi_\Omega$ and $X_t \in (\mathcal S \times \{0,1\})$ is the local state of the $\pi_\Omega$ coagent at time $t$.
While \citet{bacon2017option} introduce an asynchronous framework, the theoretical tools they use for the synchronous components do not provide a gradient for the asynchronous component.
Instead, their suggestions rely on a piecemeal approach.  While this approach is reasonable, it invokes ideas beyond the scope of their work for training the asynchronous component.
Our approach has the benefit of providing a unified approach to training all components of the network.

The ACPGT provided a simplified and unified approach to these gradient derivations. 
Using the option-critic framework as an example, we have shown that the ACPGT is a useful tool for analyzing arbitrary stochastic networks. Notice that a more complex architecture containing many levels of hierarchy could be analyzed with similar ease.

\section{Conclusion}
We provide a formal and general proof of the coagent policy gradient theorem (CPGT) for stochastic policy networks, and extend it to the asynchronous and recurrent setting.
This result demonstrates that, if coagents apply standard policy gradient algorithms from the perspective of their inputs and outputs, then the entire network will follow the policy gradient, even in asynchronous or recurrent settings.
We empirically support the CPGT, and use the option-critic framework as an example to show how our approach facilitates and simplifies gradient derivation for arbitrary stochastic networks.
Future work will focus on the potential for massive parallelization of asynchronous coagent networks, and on the potential for many levels of implicit temporal abstraction through varying coagent execution rates.

\section*{Acknowledgements}

Research reported in this paper was sponsored in part by the CCDC Army Research Laboratory under Cooperative Agreement W911NF-17-2-0196 (ARL IoBT CRA). The views and conclusions contained in this document are those of the authors and should not be interpreted as representing the official policies, either expressed or implied, of the Army Research Laboratory or the U.S. Government. The U.S. Government is authorized to reproduce and distribute reprints for Government purposes notwithstanding any copyright notation herein.

We would like to thank Scott Jordan for help and feedback with the editing process.
We would also like to thank the reviewers and meta-reviewers for their feedback, which helped us improve this work.

\bibliography{example_paper}
\bibliographystyle{icml2020}

\appendix
\newpage
\onecolumn

\setcounter{prop}{0} 
\setcounter{lemma}{0} 
\setcounter{thm}{0} 

\section{Conjugate Markov Decision Process (CoMDP)}
\label{sec:complete_comdp}
In order to reason about the local policy gradient, we begin by modeling the $\ith$ coagent's environment as an MDP, called the CoMDP, and begin by formally defining the $\ith$ CoMDP. 
Given $M$, $i$, $\pi^\pre_i$, $\pi^\post_i$, and $\bar \theta_i$, we define a corresponding CoMDP, $M^i$, as $M^i\coloneqq(\mathcal X^i,\mathcal U^i, \mathcal R^i, P^i, R^i, d_0^i, \gamma_i)$, where:
\begin{itemize}
    \item We write $\Tilde{X}_t^i$, $\Tilde{U}_t^i$, and $\Tilde{R}_t^i$ to denote the state, action, and reward of $M^i$ at time $t$. Below, we relate these random variables to the corresponding random variables in $M$. Note that all \emph{random variables} in the CoMDP are written with tildes to provide a visual distinction between terms from the CoMDP and original MDP.  Additionally, when it is clear that we are referring to the $\ith$ CoMDP, we often make $i$ implicit and denote these as $\Tilde{X}_t$, $\Tilde{U}_t$, and $\Tilde{R}_t$.
    \item $\mathcal X^i \coloneqq \mathcal S \times \mathcal U^\pre_i$.  We often denote $\mathcal X^i$ simply as $\mathcal X$.  This is the input (analogous to a state set) to the $\ith$ coagent.  Additionally, for $x \in \mathcal{X}$, we denote the $\mathcal{S}$ component as $x.s$ and the $\mathcal U^\pre$ component as $x.u_\pre$.  We also sometimes denote an $x \in \mathcal{X}^i$ as $(x.s, x.u_\pre)$.  For example, $\Pr(\Tilde{X}^i_t {=} (s, u_\pre))$ represents the probability that $\Tilde{X}_t$ has $\mathcal{S}$ component $s$ and $\mathcal U^\pre$ component $u_\pre$.
    \item $\mathcal U^i$ (or simply $\mathcal{U}$) is an arbitrary set that denotes the output of the $\ith$ coagent.
    \item $\mathcal R^i\coloneqq\mathcal R$ and $\gamma_i\coloneqq \gamma$. 
    \item $\forall x\in \mathcal X \,\, \forall x' \in \mathcal X \,\, \forall u \in \mathcal U \,\, \forall \bar \theta_i \in \mathbb R^{n-n_i},$
    \begin{align*}
      P^i(x,u,x',\bar\theta_i)\coloneqq \pi^\pre_i(x'\!.s,x'.u_\pre) \sum_{a \in \mathcal A} P(x.s, a,x'.s) \pi^\post_i(x,u,a),
    \end{align*}
    %
    Below, we make $\bar\theta_i$ implicit and denote this as $P^i(x,u,x')$.
    Recall from the definition of an MDP and its relation to the transition function that this means: $P^i(x,u,x')=\Pr(\Tilde{X}_{t+1}{=}x'|\Tilde{X}_t{=}x,\Tilde{U}_t{=}u)$.
    \item
    $\forall x\in \mathcal X \,\, \forall x' \in \mathcal X \,\, \forall u \in \mathcal U \,\, \forall r \in \mathcal R^i \,\, \forall \bar \theta_i \in \mathbb R^{n-n_i},$
    \begin{align*}
       R^i(x,u,x',r,\bar\theta_i)\coloneqq \sum_{a \in \mathcal A} R(x.s,a,x'.s,r)
        %
        \frac{P(x.s,a,x'.s)\pi_i^\post(x,u,a)}{\sum_{\hat a \in \mathcal A} P(x.s,\hat a,x'.s)\pi_i^\post(x,u,\hat a)}.
    \end{align*}
    Like the transition function, we make $\bar\theta_i$ implicit and write $R^i(x,u,x',r)$.
    \item $\forall x\in \mathcal X, \,\, d_0^i(x) \coloneqq d_0(x.s) \pi^\pre_i(x.s,x.u_\pre)$.
\end{itemize}

We write $J_i(\theta_i)$ to denote the objective function of $M^i$. 
Notice that although $\bar{\theta}_i$ (the parameters of the other coagents) is not an explicit parameter of the objective function, it is implicitly included via the CoMDP's transition function.
Note that we cannot assume that, for all $\theta_i$, $\Delta_i(\theta_i)$ (the local policy gradient) is equivalent to $\partial J_i(\theta_i) / \partial \theta_i$ (the policy gradient of the $\ith$ CoMDP); we do later prove this equivalence.

\section{Complete CPGT Proofs}
\label{sec:complete_CPGT_proofs}

\begin{figure}
    \centering
    \includegraphics[width=.7\textwidth]{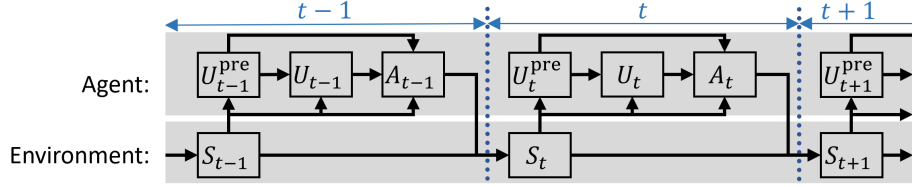}
    \caption{Diagram of the MDP and parameterized policy. This diagram can be viewed as a Bayesian network where each square is the node corresponding to the listed random variable.
    Bayesian network depicting the relationships of relevant random variables. 
    Independence properties can be established by $d$-separation. 
    Note that these causal properties only apply to the MDP $M$; any such properties of CoMDPs are explicitly proven.}
    \label{fig:Causal_supp}
\end{figure}

We assume that, given the same parameters $\theta_i$, the $\ith$ coagent has the same policy in both the original MDP and the $\ith$ CoMDP. That is,

\begin{assumption}
\label{supp_agent_equivalence}
$\forall s \in \mathcal S \,\, \forall u_\text{\emph{pre}} \in \mathcal U^\text{\emph{pre}} \,\, \forall u \in \mathcal U \,\, \forall \theta_i \in \mathbb R^i, \,\, \pi_i ((s, u_\text{\emph{pre}}), u, \theta_i) = \Pr(\Tilde{U}_t = u | \Tilde{X_t} = (s, u_\text{\emph{pre}}), \theta_i)$.
\end{assumption}

\begin{prop}
\label{correct_di0}
$$
\forall x \in \mathcal X, \, d_0^i(x)=\Pr(S_0=x.s, U_0^\text{\emph{pre}}=x.u_\text{\emph{pre}}).
$$
\end{prop}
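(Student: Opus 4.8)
The plan is to prove the equality by simply unfolding the definition of $d_0^i$ given in Section~\ref{sec:complete_comdp} and recognizing each of its two factors as a probability in the original MDP $M$, then recombining them with the chain rule. Recall that $d_0^i(x) \coloneqq d_0(x.s)\,\pi^\pre_i(x.s, x.u_\pre)$. So the entire argument is a short chain of substitutions against definitions: nothing deep is going on, and the work is purely bookkeeping.

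Concretely, first I would rewrite the first factor using the definition of the initial state distribution of $M$: $d_0(x.s) = \Pr(S_0 = x.s)$. Next I would rewrite the second factor using the construction of the coagent network, in which $U^\pre_t$ is, by definition, sampled from $\pi^\pre_i(S_t, \cdot, \bar\theta_i)$ given $S_t$; specializing to $t = 0$ gives $\pi^\pre_i(x.s, x.u_\pre) = \Pr(U_0^\pre = x.u_\pre \mid S_0 = x.s)$ on any state with positive initial probability. Multiplying the two factors and applying the chain rule of probability then yields
$$d_0^i(x) = \Pr(S_0 = x.s)\,\Pr(U_0^\pre = x.u_\pre \mid S_0 = x.s) = \Pr(S_0 = x.s, U_0^\pre = x.u_\pre),$$
which is exactly the claim. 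For the degenerate case $\Pr(S_0 = x.s) = 0$, the conditional probability is vacuous, but $d_0(x.s) = 0$ forces the left-hand side to be $0$ and the joint probability on the right is also $0$, so the identity still holds; I would note this case explicitly.

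I do not expect a genuine obstacle here. The one point that warrants a sentence of care is the justification that $\pi^\pre_i(s, \cdot)$ really is the conditional law of $U_0^\pre$ given $S_0 = s$: this is immediate from how the network's execution is defined (the previous coagents produce $U^\pre_t$ from $S_t$ and the fixed parameters $\bar\theta_i$), and it can also be read off the Bayesian network in Figure~\ref{fig:Causal_supp}, where at time $0$ the only parent of $U_0^\pre$ among the random variables is $S_0$. Beyond that, the proof is a direct computation, and the only ``care'' needed is the trivial zero-probability check above.
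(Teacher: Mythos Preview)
Your proposal is correct and follows essentially the same route as the paper: unfold the definition $d_0^i(x)=d_0(x.s)\,\pi^\pre_i(x.s,x.u_\pre)$, identify the two factors as $\Pr(S_0=x.s)$ and $\Pr(U_0^\pre=x.u_\pre\mid S_0=x.s)$, and combine them via the chain rule. The only difference is that you add an explicit zero-probability check, which the paper omits but which is harmless.
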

\begin{proof}
\begin{align}
    d_0^i(x) =& d_0(x.s) \pi^\pre_i(x.s,x.u_\pre)\\
    \overset{\text{(a)}}{=}& \Pr(S_0 = x.s) \Pr(U^\pre_0 = x.u_\pre | S_0 = x.s)\\
    =& \Pr(S_0 = x.s, U^\pre_0 = x.u_\pre),   
\end{align}
where \textbf{(a)} follows from the definitions of $\pi_i^\text{pre}$ and $d_0$. 
\end{proof}

\begin{prop}
$$
\forall s \in \mathcal S, \,\, \Pr(\Tilde X_0.s=s)=d_0(s).
$$
\end{prop}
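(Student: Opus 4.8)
The plan is to unfold the definition of the CoMDP's initial state distribution and then marginalize out the $\mathcal U^\pre$ component. Since $M^i$ is (to be shown, but here we may take the structure of $d_0^i$ as given) an MDP whose initial state random variable is $\Tilde X_0$, we have $\Pr(\Tilde X_0 {=} x) = d_0^i(x)$ for every $x \in \mathcal X$. Hence
\[
\Pr(\Tilde X_0.s {=} s) = \sum_{u_\pre \in \mathcal U^\pre_i} \Pr(\Tilde X_0 {=} (s, u_\pre)) = \sum_{u_\pre \in \mathcal U^\pre_i} d_0^i((s, u_\pre)).
\]
From here there are two equivalent routes to finish, and I would take whichever is cleaner to typeset.

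The first route uses Property \ref{correct_di0}: each summand equals $\Pr(S_0 {=} s, U_0^\pre {=} u_\pre)$, and summing over all $u_\pre \in \mathcal U^\pre_i$ gives $\Pr(S_0 {=} s)$, which equals $d_0(s)$ by the definition of $d_0$. The second route avoids invoking Property \ref{correct_di0} and instead substitutes the explicit definition $d_0^i((s, u_\pre)) = d_0(s)\,\pi_i^\pre(s, u_\pre)$, factors $d_0(s)$ out of the sum, and uses the fact that $\pi_i^\pre(s, \cdot)$ is a probability distribution so that $\sum_{u_\pre} \pi_i^\pre(s, u_\pre) = 1$. I would probably present the first route since Property \ref{correct_di0} is already established immediately above and makes the argument a one-liner.

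There is essentially no hard step here: the only things to be careful about are (i) stating clearly that $\Tilde X_0 \sim d_0^i$ by definition of the CoMDP, so the claim is genuinely about the marginal of $d_0^i$, and (ii) marginalizing over the correct index set, $\mathcal U^\pre_i$, rather than some larger set. If one wanted to be maximally self-contained one could also note that the marginalization step implicitly uses that $\mathcal X^i = \mathcal S \times \mathcal U^\pre_i$, so that the events $\{\Tilde X_0 {=} (s, u_\pre)\}$ over $u_\pre \in \mathcal U^\pre_i$ partition $\{\Tilde X_0.s {=} s\}$. Beyond that, the proof is a two-line computation and I would write it as a short \texttt{align} block concluding with ``where the last equality follows from Property \ref{correct_di0} (or from the definition of $d_0$).''
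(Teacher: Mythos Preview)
Your proposal is correct and matches the paper's approach: marginalize $\Tilde X_0$ over $\mathcal U^\pre_i$, identify the summand with $d_0^i((s,u_\pre))$, and collapse the sum. The only minor difference is that the paper takes what you call the second route (substituting $d_0^i((s,u_\pre)) = d_0(s)\,\pi_i^\pre(s,u_\pre)$ and using $\sum_{u_\pre}\pi_i^\pre(s,u_\pre)=1$), whereas you state a preference for the first route via Property~\ref{correct_di0}; both are equally valid and equally short.
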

\begin{proof}
\begin{align}
    \Pr(\Tilde X_0.s=s) \overset{\text{(a)}}{=}& \sum_{u_\pre \in \mathcal U^\pre} \Pr(\Tilde X_0.s=s,\Tilde X_0.u_\pre=u_\pre)\\
    \overset{\text{(b)}}{=}& \sum_{u_\pre \in \mathcal U^\pre} d_0^i((s,u_\pre))\\
    \overset{\text{(c)}}{=}&\sum_{u_\pre \in \mathcal U^\pre} d_0(s) \pi^\pre_i(s,u_\pre)\\
    =&d_0(s)\underbrace{\sum_{u_\pre \in \mathcal U^\pre} \Pr(U_t^{\pre}=u_\pre|S_t=s)}_{=1}\\
    =&d_0(s),
\end{align}
where \textbf{(a)} follows from marginalization over $u_\pre$, \textbf{(b)} follows from the definition of the initial state distribution for an MDP, and \textbf{(c)} follows from the definition of $d_0^i$ for the CoMDP (see Property \ref{correct_di0}). 
\end{proof}

\begin{prop}
\label{correct_transition}
$$
\forall x\in \mathcal X \,\, \forall x' \in \mathcal X \,\, \forall u \in \mathcal U,  \,\, P^i(x,u,x')=\Pr(S_{t+1}=x'.s,U^\text{\emph{pre}}_{t+1}=x'.u_\text{\emph{pre}}|S_{t}=x.s,U^\text{\emph{pre}}_{t}=x.u_\text{\emph{pre}}, U_t = u).
$$
\end{prop}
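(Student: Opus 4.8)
The plan is to start from the right-hand side and reduce it, step by step, to the definition of $P^i$ given above, using nothing more than the chain rule of conditional probability together with the conditional-independence relations that hold in the original MDP $M$ and that can be read off from the Bayesian network in Figure~\ref{fig:Causal_supp} via $d$-separation. Recall from the construction in Section~\ref{sec:background} that, in $M$, $U^\pre_t$ is drawn from $\pi^\pre_i(S_t,\cdot)$ and $A_t$ is drawn from $\pi^\post_i(X_t,U^i_t,\cdot)$ with $X_t=(S_t,U^\pre_t)$; these two facts are exactly what ties the symbolic definition of $P^i$ back to the dynamics of the coagent network.

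First I would factor the joint conditional on the right-hand side as a product of $\Pr(S_{t+1}{=}x'.s \mid S_t{=}x.s, U^\pre_t{=}x.u_\pre, U_t{=}u)$ and $\Pr(U^\pre_{t+1}{=}x'.u_\pre \mid S_{t+1}{=}x'.s, S_t{=}x.s, U^\pre_t{=}x.u_\pre, U_t{=}u)$. For the second factor, $U^\pre_{t+1}$ is generated from $S_{t+1}$ alone, so $d$-separation gives that $U^\pre_{t+1}$ is independent of $(S_t,U^\pre_t,U_t)$ given $S_{t+1}$; hence this factor equals $\Pr(U^\pre_{t+1}{=}x'.u_\pre \mid S_{t+1}{=}x'.s)=\pi^\pre_i(x'.s,x'.u_\pre)$. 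For the first factor I would marginalize over the intermediate action $A_t\in\mathcal A$, writing it as $\sum_{a\in\mathcal A}\Pr(S_{t+1}{=}x'.s \mid A_t{=}a, S_t{=}x.s, U^\pre_t{=}x.u_\pre, U_t{=}u)\,\Pr(A_t{=}a \mid S_t{=}x.s, U^\pre_t{=}x.u_\pre, U_t{=}u)$. The Markov property of $M$ (that $S_{t+1}$ depends on the past only through $(S_t,A_t)$) collapses the first factor in the sum to $P(x.s,a,x'.s)$, while the fact that $A_t$ is produced from $X_t=(S_t,U^\pre_t)$ and $U_t$ via $\pi^\post_i$ collapses the second to $\pi^\post_i(x,u,a)$. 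Reassembling, the right-hand side becomes $\pi^\pre_i(x'.s,x'.u_\pre)\sum_{a\in\mathcal A}P(x.s,a,x'.s)\,\pi^\post_i(x,u,a)$, which is exactly the definition of $P^i(x,u,x')$, finishing the argument.

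I expect the only real obstacle to be the bookkeeping around the independence claims: one must justify each $d$-separation statement against the precise graph of Figure~\ref{fig:Causal_supp}, keeping in mind that these causal/independence properties are asserted for $M$ only (and not for any CoMDP, whose properties must be proved separately), and one must check that the conditioning events involved have positive probability so that the conditional probabilities are well-defined, or otherwise handle the degenerate case by the usual convention. Everything else — the chain-rule split and the marginalization over $A_t$ — is routine once those independence facts are pinned down.
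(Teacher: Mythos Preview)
Your proposal is correct and takes essentially the same approach as the paper: the paper starts from the definition of $P^i$ and works forward to the conditional probability, while you start from the conditional probability and work back to the definition, but the core steps---the chain-rule factorization, the $d$-separation argument reducing $\Pr(U^\pre_{t+1}=x'.u_\pre\mid S_{t+1}=x'.s,\ldots)$ to $\pi^\pre_i(x'.s,x'.u_\pre)$, the marginalization over $A_t$, and the identification of the two resulting factors with $P$ and $\pi^\post_i$---are identical.
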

\begin{proof}
\begin{align}
    P^i(x,u,x')=&\pi^\pre_i(x'\!.s,x'.u_\pre)\sum_{a \in \mathcal A} P(x.s, a,x'.s) \pi^\post_i(x,u,a)\\
    \overset{\text{(a)}}{=}& \sum_{a \in \mathcal A} \pi^\post_i(x,u,a) \Pr(S_{t+1} = x'.s | S_t = x.s, A_t = a) \Pr(U^\pre_{t+1} = x'.u_\pre | S_{t+1} = x'.s)\\
    \overset{\text{(b)}}{=}& \sum_{a \in \mathcal A} \pi^\post_i(x,u,a) \Pr(S_{t+1} = x'.s | S_t = x.s, A_t = a)\\
    &\times \Pr(U^\pre_{t+1} = x'.u_\pre | S_{t+1} = x'.s, S_t = x.s, A_t = a),\\
    \end{align}
    where \textbf{(a)} follows from the definitions of $\pi^\pre_i$ and the transition function $P$, \textbf{(b)} follows from $M$'s conditional independence properties, and $\times$ denotes scalar multiplication split across two lines.  The definition of conditional probability allows us to combine the last two terms:
    
    \begin{align}
    P^i(x,u,x') =& \sum_{a \in \mathcal A} \pi^\post_i(x,u,a) \Pr(S_{t+1} = x'.s, U^\pre_{t+1} = x'.u_\pre | S_t = x.s, A_t = a) \\
    \overset{\text{(a)}}=& \sum_{a \in \mathcal A} \Pr(A_t = a | S_t = x.s, U^\pre_{t} = x.u_\pre, U_t=u)\\
    &\times \Pr(S_{t+1} = x'.s, U^\pre_{t+1} = x'.u_\pre | S_t = x.s, U^\pre_{t} = x.u_\pre, U_t=u, A_t = a)\\
    \overset{\text{(b)}}=& \Pr(S_{t+1} = x'.s, U^\pre_{t+1} = x'.u_\pre |  S_t = x.s, U^\pre_{t} = x.u_\pre, U_t=u),
\end{align}
where \textbf{(a)} follows from the definition of $\pi^\post_i$ and the application of $M$'s independence properties and \textbf{(b)} follows from marginalization over $a$.
\end{proof}

\begin{prop}
\label{correct_reward}
\begin{align}
&\forall x\in \mathcal X \,\, \forall x' \in \mathcal X \,\, \forall u \in \mathcal U  \,\,  \forall r \in \mathcal{R},  \,\, R^i(x,u,x',r)\\
=&\Pr(R_t = r|S_{t}=x.s,U^\text{\emph{pre}}_{t}=x.u_\text{\emph{pre}}, U_t = u, S_{t+1}=x'.s, U^\text{\emph{pre}}_{t+1}=x'.u_\text{\emph{pre}}).
\end{align}
\end{prop}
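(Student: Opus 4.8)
The plan is to start from the right-hand side and marginalize over the action $A_t$ of the original MDP $M$. Writing $s \coloneqq x.s$, $u_\pre \coloneqq x.u_\pre$, $s' \coloneqq x'.s$, $u_\pre' \coloneqq x'.u_\pre$ for brevity, conditioning on the full event $\mathcal C \coloneqq \{S_t{=}s, U^\pre_t{=}u_\pre, U_t{=}u, S_{t+1}{=}s', U^\pre_{t+1}{=}u_\pre'\}$ and summing over $a \in \mathcal A$ gives $\Pr(R_t{=}r \mid \mathcal C) = \sum_{a} \Pr(R_t{=}r \mid A_t{=}a, \mathcal C)\,\Pr(A_t{=}a \mid \mathcal C)$. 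I would then simplify each of the two factors using the conditional independence properties of $M$ read off by $d$-separation from Figure \ref{fig:Causal_supp}.

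For the first factor, $R_t$ depends only on $S_t$, $A_t$, and $S_{t+1}$ (by the definition of the reward distribution $R$ together with $M$'s independence structure), so $\Pr(R_t{=}r \mid A_t{=}a, \mathcal C) = R(s, a, s', r)$. For the second factor I would apply Bayes' rule to $\Pr(A_t{=}a \mid \mathcal C)$, turning it into a ratio whose numerator is $\Pr(S_{t+1}{=}s', U^\pre_{t+1}{=}u_\pre' \mid A_t{=}a, S_t{=}s, U^\pre_t{=}u_\pre, U_t{=}u)\,\Pr(A_t{=}a \mid S_t{=}s, U^\pre_t{=}u_\pre, U_t{=}u)$ and whose denominator is the same quantity marginalized over $a$. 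The three observations that finish it are: (i) given $S_t, U^\pre_t, U_t$, the action $A_t$ is distributed as $\pi_i^\post(x,u,\cdot)$; (ii) $S_{t+1}$ depends only on $S_t$ and $A_t$, contributing $P(s,a,s')$; and (iii) $U^\pre_{t+1}$ depends only on $S_{t+1}$, so the factor $\Pr(U^\pre_{t+1}{=}u_\pre' \mid S_{t+1}{=}s')$ is identical in the numerator and (after the marginalization over $a$) in the denominator and cancels. What survives is exactly
\begin{align}
\Pr(A_t{=}a \mid \mathcal C) = \frac{P(s,a,s')\,\pi_i^\post(x,u,a)}{\sum_{\hat a \in \mathcal A} P(s,\hat a,s')\,\pi_i^\post(x,u,\hat a)}.
\end{align}
Substituting both factors back into the sum over $a$ reproduces verbatim the defining formula for $R^i(x,u,x',r)$ from Section \ref{sec:complete_comdp}, which completes the proof. (One can alternatively route part of this through Property \ref{correct_transition} to identify the denominator with $P^i(x,u,x')$, but the direct computation is cleanest.)

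The main obstacle will be bookkeeping rather than any real difficulty: at each step one must carefully track which of the conditioning variables in $\mathcal C$ may be dropped via $d$-separation, and the cancellation of the $\Pr(U^\pre_{t+1}{=}u_\pre' \mid S_{t+1}{=}s')$ term should be justified explicitly rather than asserted. A minor subtlety worth noting is that the denominator in the definition of $R^i$ is nonzero precisely when the transition $(x,u,x')$ has positive probability under $P^i$ — but this is exactly the regime in which the right-hand side conditional probability is defined, so the identity is meaningful wherever both sides are.
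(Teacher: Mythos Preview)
Your proposal is correct and uses essentially the same ingredients as the paper's proof: marginalization over $A_t$, the conditional-independence simplifications $\Pr(R_t{=}r\mid A_t{=}a,\mathcal C)=R(s,a,s',r)$ and the dropping of $U^{\pre}_{t+1}$, and the identification of the denominator $\sum_{\hat a}P(s,\hat a,s')\pi_i^{\post}(x,u,\hat a)$ with $\Pr(S_{t+1}{=}s'\mid S_t{=}s,U^{\pre}_t{=}u_{\pre},U_t{=}u)$. The only difference is direction---the paper starts from the definition of $R^i$ and manipulates toward the conditional probability, whereas you start from the conditional probability and recover the definition; the intermediate expressions and justifications match.
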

\begin{proof}
\begin{align}
    R^i(x,u,x',r) \coloneqq& \sum_{a \in \mathcal A} R(x.s,a,x'.s,r) \frac{P(x.s,a,x'.s)\pi_i^\post(x,u,a)}{\sum_{\hat a \in \mathcal A} P(x.s,\hat a,x'.s)\pi_i^\post(x,u,\hat a)}\\
    \overset{\text{(a)}}{=}&\sum_{a \in \mathcal A} R(x.s,a,x'.s,r) P(x.s,a,x'.s)\pi_i^\post(x,u,a)\\
    &\div  \bigg[\sum_{\hat a \in \mathcal A} \Pr(S_{t+1}=x'.s |S_t=x.s, A_t =\hat a, U^\pre_t = x.u_\pre,U_t = u)\\
    &\times \Pr(A_t = \hat a| S_t=x.s, U^\pre_t = x.u_\pre,U_t = u)\bigg]\\
    \overset{\text{(b)}}{=}&\frac{\sum_{a \in \mathcal A} R(x.s,a,x'.s,r) P(x.s,a,x'.s)\pi_i^\post(x,u,a)}{\Pr(S_{t+1}=x'.s |S_t=x.s, U^\pre_t = x.u_\pre,U_t = u)},
\end{align}
    where \textbf{(a)} follows from the definitions of terms in the denominator and $M$'s conditional independence properties (applied to the first term in the denominator) and \textbf{(b)} follows from marginalization over $\hat a$.  Expanding the definitions of the remaining terms, we get:
\begin{align}
     R^i(x,u,x',r)
     =&\frac{\sum_{a \in \mathcal A} \Pr(R_t=r | S_t = x.s, A_t = a, S_{t+1} = x'.s) \Pr(S_{t+1}=x'.s |S_t=x.s, A_t = a)}{\Pr(S_{t+1}=x'.s |S_t=x.s, U^\pre_t = x.u_\pre,U_t = u)}\\
    &\times \Pr(A_t = a| S_t=x.s, U^\pre_t = x.u_\pre,U_t = u)\\
    \overset{\text{(a)}}{=}&\frac{1} {\Pr(S_{t+1}=x'.s |S_t=x.s, U^\pre_t = x.u_\pre,U_t = u)}\\
    &\times \sum_{a \in \mathcal A} \Pr(R_t=r | S_t = x.s, A_t = a, S_{t+1} = x'.s, U^\pre_t = x.u_\pre,U_t = u)\\
    &\times \Pr(S_{t+1}=x'.s |S_t=x.s, A_t = a, U^\pre_t = x.u_\pre,U_t = u)\\
    &\times \Pr(A_t = a| S_t=x.s, U^\pre_t = x.u_\pre,U_t = u),\\
    \end{align}
    where \textbf{(a)} follows from $M$'s conditional independence properties (applied to the $\Pr(R_t=r | ...)$ and $\Pr(S_{t+1}=x'.s |...)$ terms).  Rearranging and taking advantage of marginalization over $a$ (the $\Pr(R_t=r |S_{t+1}=x'.s, ...)$ and $\Pr(S_{t+1}=x'.s |...)$ terms can be viewed as a union), we get:
    \begin{align}
     R^i(x,u,x',r)
    =&\frac {\Pr(S_{t+1}=x'.s |S_t=x.s, U^\pre_t = x.u_\pre,U_t = u)} {\Pr(S_{t+1}=x'.s |S_t=x.s, U^\pre_t = x.u_\pre,U_t = u)}\\
    &\times \Pr(R_t=r | S_t = x.s, S_{t+1} = x'.s, U^\pre_t = x.u_\pre,U_t = u)\\
    =& \Pr(R_t=r | S_t = x.s, S_{t+1} = x'.s, U^\pre_t = x.u_\pre,U_t = u)\\
    \overset{\text{(a)}}{=}& \Pr(R_t=r | S_{t}=x.s,U^\pre_{t}=x.u_\pre, U_t = u, S_{t+1}=x'.s, U^\pre_{t+1}=x'.u_\pre),
\end{align}
where \textbf{(a)} follows from $M$'s conditional independence properties.
\end{proof}

\begin{prop}
\label{correct_x}
\begin{align}
\forall s \in \mathcal{S} \,\, \forall u_\text{\emph{pre}} \in \mathcal U^\text{\emph{pre}}_i, \, \Pr(\Tilde{X}_t = (s, u_\text{\emph{pre}})) = \Pr(S_t = s, U^\text{\emph{pre}}_t = u_\text{\emph{pre}}).
\end{align}
\end{prop}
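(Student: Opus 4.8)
The plan is to prove this by induction on $t$, leaning on the already-established Properties \ref{correct_di0} and \ref{correct_transition} and on Assumption \ref{supp_agent_equivalence}. For the base case $t {=} 0$, the definition of the CoMDP's initial-state distribution gives $\Pr(\Tilde{X}_0 {=} (s, u_\pre)) = d_0^i((s, u_\pre))$, and Property \ref{correct_di0} states precisely that $d_0^i((s,u_\pre)) = \Pr(S_0 {=} s, U_0^\pre {=} u_\pre)$, which closes the base case.

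For the inductive step, assume the identity holds at time $t$. For any $x' \in \mathcal X$, I would expand $\Pr(\Tilde{X}_{t+1} {=} x')$ by conditioning on and marginalizing over the state and action of $M^i$ at time $t$:
$$\Pr(\Tilde{X}_{t+1} {=} x') = \sum_{x \in \mathcal X} \sum_{u \in \mathcal U} P^i(x, u, x') \, \pi_i(x, u, \theta_i) \, \Pr(\Tilde{X}_t {=} x),$$
using the definition of the CoMDP transition function for the first factor and Assumption \ref{supp_agent_equivalence} for the second. Now Property \ref{correct_transition} rewrites $P^i(x, u, x')$ as $\Pr(S_{t+1} {=} x'.s, U^\pre_{t+1} {=} x'.u_\pre \mid S_t {=} x.s, U^\pre_t {=} x.u_\pre, U_t {=} u)$; the definition of the $\ith$ coagent's policy in $M$ identifies $\pi_i(x, u, \theta_i)$ with $\Pr(U_t {=} u \mid S_t {=} x.s, U^\pre_t {=} x.u_\pre)$; and the inductive hypothesis replaces $\Pr(\Tilde{X}_t {=} x)$ with $\Pr(S_t {=} x.s, U^\pre_t {=} x.u_\pre)$. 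The product of these three factors is, by the chain rule along the causal ordering of $M$ (Figure \ref{fig:Causal_supp}), exactly $\Pr(S_{t+1} {=} x'.s, U^\pre_{t+1} {=} x'.u_\pre, S_t {=} x.s, U^\pre_t {=} x.u_\pre, U_t {=} u)$. Summing over all $x {=} (x.s, x.u_\pre)$ and $u$ marginalizes out $S_t$, $U^\pre_t$, and $U_t$, leaving $\Pr(S_{t+1} {=} x'.s, U^\pre_{t+1} {=} x'.u_\pre)$, which is exactly the claim at time $t{+}1$.

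The step that needs the most care is the chain-rule manipulation: one must justify that $\Pr(S_{t+1}{=}x'.s, U^\pre_{t+1}{=}x'.u_\pre \mid S_t{=}x.s, U^\pre_t{=}x.u_\pre, U_t{=}u)$ is a genuine conditional in $M$ and that these three factors really do multiply to a joint probability over $(S_t, U^\pre_t, U_t, S_{t+1}, U^\pre_{t+1})$. This follows from the causal ordering $S_t \to U^\pre_t \to U_t \to \cdots \to A_t \to S_{t+1} \to U^\pre_{t+1}$ encoded in the Bayesian network, together with some care about conditioning events of probability zero (on which the CoMDP transition probabilities are fixed by the formula in Section \ref{sec:complete_comdp} and the corresponding summands vanish consistently on both sides). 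Apart from this bookkeeping, the argument is a routine induction, and Property \ref{correct_s} will then follow immediately by marginalizing this identity over $u_\pre$.
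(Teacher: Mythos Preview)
Your proof is correct and follows essentially the same approach as the paper: induction on $t$, with the base case handled via the definition of $d_0^i$ (the paper unrolls Property~\ref{correct_di0} inline rather than citing it), and the inductive step handled by marginalizing over the time-$t$ state and action, invoking Property~\ref{correct_transition}, Assumption~\ref{supp_agent_equivalence}, and the inductive hypothesis, then re-marginalizing. The only cosmetic difference is that the paper starts the inductive step from the $M$ side ($\Pr(S_{t+1}{=}s', U^\pre_{t+1}{=}u'_\pre)$) and works toward the CoMDP side, whereas you start from $\Pr(\Tilde{X}_{t+1}{=}x')$ and work toward $M$; the same three substitutions drive both directions.
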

\begin{proof}
\ \\We present a proof by induction:\\
Base Case: \\
\begin{align}
    \Pr(S_0 = s, U^\pre_0 = u_\pre) =& \Pr(S_0 = s) \Pr(U^\pre_0 = u_\pre | S_0 = s) \\
    =& d_0(s) \pi^\pre_i(s, u_\pre)\\
    =& d^i_0((s, u_\pre))\\
    =& \Pr(\Tilde{X}^i_0 = (s, u_\pre)).
\end{align}
Inductive Step:\\
\begin{align}
    \Pr(S_{t+1} = s',  U^\pre_{t+1} = u'_\pre)
    \overset{\text{(a)}}{=}&
    \sum_{(s,u_\pre) \in \mathcal{X}} \Pr(S_{t} = s,  U^\pre_{t} = u_\pre) \Pr(S_{t+1} = s',  U^\pre_{t+1} = u'_\pre | S_{t} = s,  U^\pre_{t} = u_\pre) \\
    \overset{\text{(b)}}{=}& \sum_{(s,u_\pre) \in \mathcal{X}}
    \Pr(S_{t} = s,  U^\pre_{t} = u_\pre)
    \sum_{u \in \mathcal{U}}
    \Pr(U_t = u | S_{t} = s,  U^\pre_{t} = u_\pre) \\
    &\times
    \Pr(S_{t+1} = s',  U^\pre_{t+1} = u'_\pre | S_{t} = s,  U^\pre_{t} = u_\pre, U_t = u) \\
    \overset{\text{(c)}}{=}& \sum_{(s,u_\pre) \in \mathcal{X}}
    \Pr(\Tilde{X}_{t} = (s, u_\pre))
    \sum_{u \in \mathcal{U}}
    \Pr(\Tilde{U}_t = u | \Tilde{X}_{t} = (s, u_\pre)) \\
    &\times
    \Pr(S_{t+1} = s',  U^\pre_{t+1} = u'_\pre | S_{t} = s,  U^\pre_{t} = u_\pre, U_t = u),
\end{align}
where \textbf{(a)} follows from marginalization over $(s,u_\pre)$, \textbf{(b)} follows from marginalization over $u$, and \textbf{(c)} is through application of the base case and Assumption \ref{supp_agent_equivalence}.  Notice that the last term is equivalent to $P^i$ by Property \ref{correct_transition}, which is equivalent to the final term in the next step:
\begin{align}
    \Pr(S_{t+1} = s',  U^\pre_{t+1} = u'_\pre) =& \sum_{(s,u_\pre) \in \mathcal{X}}
    \Pr(\Tilde{X}_{t} = (s, u_\pre))
    \sum_{u \in \mathcal{U}}
    \Pr(\Tilde{U}_t = u | \Tilde{X}_{t} = (s, u_\pre)) \\
    &\times
    \Pr(\Tilde{X}_{t+1} = (s', u'_\pre) | \Tilde{X}_{t} = (s, u_\pre), \Tilde{U}_t = u) \\
    \overset{\text{(a)}}{=}& \sum_{(s,u_\pre) \in \mathcal{X}}
    \Pr(\Tilde{X}_{t} = (s, u_\pre))
    \Pr(\Tilde{X}_{t+1} = (s', u'_\pre) | \Tilde{X}_{t} = (s, u_\pre)) \\
    \overset{\text{(b)}}{=}& \Pr(\Tilde{X}_{t+1} = (s', u'_\pre)),
\end{align}
where \textbf{(a)} and \textbf{(b)} follow from marginalization over $u$ and $(s,u_\pre)$, respectively.
\end{proof}

\begin{prop}
\label{correct_s}
\begin{align}
\forall s \in \mathcal{S}, \Pr(S_t = s) = \Pr(\Tilde{X}_t.s = s).
\end{align}
\end{prop}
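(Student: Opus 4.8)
The plan is to obtain this as an immediate corollary of Property \ref{correct_x} by marginalizing over the $\mathcal U^\pre_i$ component of the local state. Fix $s \in \mathcal S$. First I would apply the law of total probability to split $\Pr(\Tilde X_t.s {=} s)$ over the possible values of $\Tilde X_t.u_\pre$, writing $\Pr(\Tilde X_t.s {=} s) = \sum_{u_\pre \in \mathcal U^\pre_i} \Pr(\Tilde X_t {=} (s, u_\pre))$. Next I would invoke Property \ref{correct_x} termwise, replacing each summand $\Pr(\Tilde X_t {=} (s, u_\pre))$ with $\Pr(S_t {=} s, U^\pre_t {=} u_\pre)$. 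Finally, I would marginalize the resulting sum in the other direction: $\sum_{u_\pre \in \mathcal U^\pre_i} \Pr(S_t {=} s, U^\pre_t {=} u_\pre) = \Pr(S_t {=} s)$, which yields the claim.

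I do not expect any real obstacle here; the argument is pure marginalization and chains two invocations of the law of total probability around one application of Property \ref{correct_x}. The only point worth a remark is that $\mathcal U^\pre_i$ can include continuous or uncountable components, in which case the displayed sums should be interpreted as integrals against the appropriate measures (consistent with the paper's stated convention that the finiteness assumptions are only for notational convenience). This causes no difficulty, since Property \ref{correct_x} already asserts equality of the full joint distributions of $\Tilde X_t$ and $(S_t, U^\pre_t)$, and equality of joint laws forces equality of the marginal laws of the $\mathcal S$-components; the discrete presentation above is simply the specialization of that fact to the finite setting.
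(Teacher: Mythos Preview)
Your proposal is correct and matches the paper's own proof essentially step for step: both marginalize over $\mathcal U^\pre_i$, invoke Property~\ref{correct_x} termwise, and marginalize back. The only cosmetic difference is that the paper starts from $\Pr(S_t{=}s)$ and ends at $\Pr(\Tilde X_t.s{=}s)$, whereas you run the chain in the reverse direction.
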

\begin{proof}
\begin{align}
    \Pr(S_t = s) =& \sum_{u_\pre \in \mathcal U^\pre_i}
    \Pr(S_t = s, U^\pre_t = u_\pre)\\
    \overset{\text{(a)}}{=}& \sum_{u_\pre \in \mathcal U^\pre_i} \Pr(\Tilde{X}_t = (s, u_\pre))\\
    \overset{\text{(b)}}{=}& \Pr(\Tilde{X}_t.s = s),
\end{align}
where \textbf{(a)} follows from Property \ref{correct_x} and \textbf{(b)} follows from marginalization over $u_\pre$.
\end{proof}

\begin{prop}
\label{correct_pi_pre}
$\forall s \in \mathcal{S} \,\, \forall u_\text{\emph{pre}} \in \mathcal U^\text{\emph{pre}}_i, \,
\pi^\text{\emph{pre}}_i (s, u_\text{\emph{pre}}) = \Pr(\Tilde{X}_t.u_\text{\emph{pre}} = u_\text{\emph{pre}} | \Tilde{X}_t.s = s)$.
\end{prop}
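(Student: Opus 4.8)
The plan is to expand the conditional probability on the right-hand side using the definition of conditional probability, and then to rewrite the numerator and denominator in terms of quantities in the original MDP $M$ using Properties \ref{correct_x} and \ref{correct_s}. Concretely, for any $s \in \mathcal S$ with $\Pr(\Tilde X_t.s = s) > 0$, I would write
\begin{align}
\Pr(\Tilde X_t.u_\pre = u_\pre \mid \Tilde X_t.s = s)
= \frac{\Pr(\Tilde X_t = (s, u_\pre))}{\Pr(\Tilde X_t.s = s)}.
\end{align}
Then Property \ref{correct_x} rewrites the numerator as $\Pr(S_t = s, U^\pre_t = u_\pre)$, and Property \ref{correct_s} rewrites the denominator as $\Pr(S_t = s)$, so the ratio becomes $\Pr(S_t = s, U^\pre_t = u_\pre)/\Pr(S_t = s) = \Pr(U^\pre_t = u_\pre \mid S_t = s)$ by the definition of conditional probability.

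The final step is to observe that $\Pr(U^\pre_t = u_\pre \mid S_t = s)$ is exactly $\pi^\pre_i(s, u_\pre)$ by the definition of $\pi^\pre_i$ given in Section \ref{sec:background} (the parameterized distribution capturing how the preceding coagents produce $U^\pre_t$ from $S_t$), which completes the chain of equalities and yields the claim. This is essentially a one-line computation once the two earlier properties are in hand; the substantive content lives in Properties \ref{correct_x} and \ref{correct_s}, which I am permitted to assume.

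The only subtlety — and the one place I would be careful — is the degenerate case $\Pr(\Tilde X_t.s = s) = 0$, in which the conditional probability on the right (and, via Property \ref{correct_s}, the conditional probability $\pi^\pre_i(s,u_\pre) = \Pr(U^\pre_t = u_\pre \mid S_t = s)$ on the left) is not determined by the chain rule. In that case both sides are conditioned on a probability-zero event; I would either restrict the statement to reachable $s$, or note that $\pi^\pre_i(s,\cdot)$ is a well-defined distribution by assumption for every $s$ and that the equality then holds by convention/definition since neither side is constrained by the dynamics. I do not expect any real obstacle here — this is the standard caveat that accompanies every such "the CoMDP correctly models the environment" property in the paper.
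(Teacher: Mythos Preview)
Your proposal is correct and is essentially identical to the paper's proof: the paper starts from $\pi^\pre_i(s,u_\pre)=\Pr(U^\pre_t=u_\pre\mid S_t=s)$, expands via the definition of conditional probability, and then applies Properties~\ref{correct_x} and~\ref{correct_s} to replace the numerator and denominator by their CoMDP counterparts---exactly your argument run in the opposite direction. Your extra remark about the probability-zero case is a reasonable technical caveat that the paper simply omits.
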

Recall that $\pi^\pre_i (s, u_\pre) \coloneqq \Pr(U^\pre_t = u_\pre | S_t = s)$.
\begin{proof}
\begin{align*}
    \pi^\pre_i (s, u_\pre) =& \Pr(U^\pre_t = u_\pre | S_t = s) \\
    =& \frac{\Pr(U^\pre_t = u_\pre, S_t = s)} {\Pr(S_t = s)} \\ 
    \overset{\text{(a)}}{=}& \frac{\Pr(\Tilde{X}_t.u_\pre = u_\pre, \Tilde{X}_t.s = s)} {\Pr(\Tilde{X}_t.s = s)} \\ 
    =& \Pr(\Tilde{X}_t.u_\pre = u_\pre | \Tilde{X}_t.s = s),
\end{align*}
where \textbf{(a)} follows from properties \ref{correct_x} and \ref{correct_s}.
\end{proof}

\begin{prop}
\label{Phil_property}
\begin{align}
    &\forall s \in \mathcal{S} \,\, \forall s' \in \mathcal{S} \,\, \forall u_\text{\emph{pre}} \in \mathcal U^\text{\emph{pre}}_i \,\, \forall u \in \mathcal U, \,\\
&\Pr(\Tilde{X}_{t+1}.s = s' | \Tilde{X}_t.s = s, \Tilde{X}_t.u_\text{\emph{pre}} = u_\text{\emph{pre}}, \Tilde{U}_t = u) = \Pr(S_{t+1} = s' | S_t = s, U^\text{\emph{pre}}_t = u_\text{\emph{pre}}, U_t = u).
\end{align}
\end{prop}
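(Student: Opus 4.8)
The plan is to reduce this to Property \ref{correct_transition} by marginalizing out the $\mathcal U^\pre$ component of $\Tilde X_{t+1}$. Concretely, I would start from the left-hand side and write
\begin{align*}
&\Pr(\Tilde X_{t+1}.s {=} s' \mid \Tilde X_t.s {=} s, \Tilde X_t.u_\pre {=} u_\pre, \Tilde U_t {=} u)\\
&\qquad = \sum_{u'_\pre \in \mathcal U^\pre_i} \Pr(\Tilde X_{t+1} {=} (s', u'_\pre) \mid \Tilde X_t {=} (s, u_\pre), \Tilde U_t {=} u),
\end{align*}
which is just marginalization over the $\mathcal U^\pre_i$ component of $\Tilde X_{t+1}$ (and a rewriting of the conditioning events using the $(x.s, x.u_\pre)$ notation for states of $M^i$).

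Next I would invoke the fact that, for an MDP, the transition function equals the one-step conditional distribution over next states, so each summand is exactly $P^i((s,u_\pre), u, (s', u'_\pre))$; then Property \ref{correct_transition} lets me replace this with $\Pr(S_{t+1} {=} s', U^\pre_{t+1} {=} u'_\pre \mid S_t {=} s, U^\pre_t {=} u_\pre, U_t {=} u)$. Substituting this back in gives
\begin{align*}
&\sum_{u'_\pre \in \mathcal U^\pre_i} \Pr(S_{t+1} {=} s', U^\pre_{t+1} {=} u'_\pre \mid S_t {=} s, U^\pre_t {=} u_\pre, U_t {=} u)\\
&\qquad = \Pr(S_{t+1} {=} s' \mid S_t {=} s, U^\pre_t {=} u_\pre, U_t {=} u),
\end{align*}
where the last equality is marginalization over $u'_\pre$ in $M$. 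Chaining these equalities yields the claim.

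An alternative, essentially equivalent route would unfold the definition of $P^i$ directly: summing $\pi^\pre_i(s', u'_\pre) \sum_{a} P(s,a,s') \pi^\post_i((s,u_\pre), u, a)$ over $u'_\pre$ collapses the $\pi^\pre_i$ factor to $1$, leaving $\sum_a P(s,a,s') \pi^\post_i((s,u_\pre),u,a)$, which I would then identify with $\Pr(S_{t+1}{=}s' \mid S_t{=}s, U^\pre_t{=}u_\pre, U_t{=}u)$ using the conditional independence properties of $M$ (namely $P(s,a,s') = \Pr(S_{t+1}{=}s' \mid S_t{=}s, A_t{=}a, U^\pre_t{=}u_\pre, U_t{=}u)$ and $\pi^\post_i((s,u_\pre),u,a) = \Pr(A_t{=}a \mid S_t{=}s, U^\pre_t{=}u_\pre, U_t{=}u)$) followed by marginalization over $a$. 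I expect the first route to be shorter since Property \ref{correct_transition} already did this bookkeeping. There is no real obstacle here; the only point requiring minor care is the implicit assumption that the conditioning event has positive probability so that the conditional distributions are well-defined, which is handled in the same way as in the preceding properties.
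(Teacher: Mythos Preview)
Your proposal is correct. Both your argument and the paper's proof pass through the same intermediate expression, $\sum_{u'_\pre} P^i((s,u_\pre),u,(s',u'_\pre))$, but reach it from opposite sides: you start on the CoMDP side, marginalize out $\Tilde X_{t+1}.u_\pre$, identify the summand with $P^i$, and then invoke Property~\ref{correct_transition} to translate into $M$; the paper starts on the $M$ side, expands $\Pr(S_{t+1}=s'\mid S_t,U^\pre_t,U_t)$ by marginalizing over $a$ and $u'_\pre$, applies conditional-independence manipulations to rewrite the summand as $\pi^\pre_i(s',u'_\pre)\sum_a P(s,a,s')\pi^\post_i((s,u_\pre),u,a)$, recognizes this as the definition of $P^i$, and then passes to the CoMDP probability. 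Your route is shorter and cleaner because it black-boxes Property~\ref{correct_transition} rather than re-deriving its content inline; the paper's route is more self-contained but does redundant work. Your ``alternative route'' is essentially the paper's argument in reverse.
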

\begin{proof}
\begin{align}
    \Pr(&S_{t+1} = s' | S_t = s, U^\pre_t = u_\pre, U_t = u)\\
    \overset{\text{(a)}}{=}& \sum_{a \in \mathcal A} \pi^\post_i ((s, u_\pre), u, a) \Pr(S_{t+1} = s' | S_t = s, U^\pre_t = u_\pre, U_t = u, A_t = a)\\
    \overset{\text{(b)}}{=}& \sum_{a \in \mathcal A} \pi^\post_i ((s, u_\pre), u, a) \sum_{u'_\pre \in \mathcal U^\pre} \Pr(U^\pre_{t+1} = u'_\pre | S_t = s, U^\pre_t = u_\pre, U_t = u, A_t = a)\\
    &\times \Pr(S_{t+1} = s' | S_t = s, U^\pre_t = u_\pre, U_t = u, A_t = a, U^\pre_{t+1} = u'_\pre)\\
    \overset{\text{(c)}}{=}& \sum_{a \in \mathcal A} \pi^\post_i ((s, u_\pre), u, a) \sum_{u'_\pre \in \mathcal U^\pre} \Pr(U^\pre_{t+1} = u'_\pre | S_t = s, U^\pre_t = u_\pre, U_t = u, A_t = a, S_{t+1} = s')\\
    &\times \Pr(S_{t+1} = s' | S_t = s, U^\pre_t = u_\pre, U_t = u, A_t = a)\\
    \overset{\text{(d)}}{=}& \sum_{a \in \mathcal A} \pi^\post_i ((s, u_\pre), u, a) \sum_{u'_\pre \in \mathcal U^\pre} \Pr(U^\pre_{t+1} = u'_\pre | S_{t+1} = s') \Pr(S_{t+1} = s' | S_t = s, A_t = a),
\end{align}
where \textbf{(a)} follows from marginalization over $a$ and the definition of $\pi_i^\post$, \textbf{(b)} follows from marginalization over $u'_\pre$, \textbf{(c)} follows from the fact that (abbreviating and leaving out the common givens) $\Pr(u'_\pre) \Pr(s' | u'_\pre) = \Pr(u'_\pre | s') \Pr(s')$, and \textbf{(d)} follows from $M$'s conditional independence properties (applied to the second and third terms).  Notice that the second and third terms above are equivalent to $P$ and $\pi^\pre_i$; plugging those in and rearranging:
\begin{align}
    \Pr(\Tilde{X}_{t+1}.s = s' | \Tilde{X}_t.s = s, \Tilde{X}_t.u_\text{\emph{pre}} =& \sum_{u'_\pre \in \mathcal U^\pre} \pi^\pre_i(s', u'_\pre)\sum_{a \in \mathcal A} P(s, a,s') \pi^\post_i((s, u_\pre),u,a)\\
    \overset{\text{(a)}}{=}& \sum_{u'_\pre \in \mathcal U^\pre} P^i((s, u_\pre),u,(s', u_\pre'))\\
    \overset{\text{(b)}}{=}& \sum_{u'_\pre \in \mathcal U^\pre} \Pr(\Tilde X_{t+1}.u_\pre = u_\pre' | \Tilde X_t = (s, u_\pre), \Tilde U_t = u)\\
    &\times \Pr(\Tilde X_{t+1}.s = s'| \Tilde X_{t+1}.u_\pre = u_\pre', \Tilde X_t = (s, u_\pre), \Tilde U_t = u)\\
    \overset{\text{(c)}}{=}& \Pr(\Tilde{X}_{t+1}.s = s' | \Tilde{X}_t.s = s, \Tilde{X}_t.u_\pre = u_\pre, \Tilde{U}_t = u),
    \end{align}
    where \textbf{(a)} follows from the definition of $P^i$ for the CoMDP, \textbf{(b)} follows from the definition of conditional probability, and \textbf{(c)} follows from marginalization over $u'_\pre$.
\end{proof}

\begin{prop}
\label{u'_pre_independence}
\begin{align*}
    &\forall s \in \mathcal{S} \,\, \forall s' \in \mathcal{S} \,\, \forall u_\text{\emph{pre}} \in \mathcal U^\text{\emph{pre}}_i \,\, \forall u'_\text{\emph{pre}} \in \mathcal U^\text{\emph{pre}}_i \,\, \forall u \in \mathcal U, \,\\
    &\Pr(\tilde X_{t+1}.u_\text{\emph{pre}} = u'_\text{\emph{pre}} | \tilde X_{t+1}.s = s')
    = \Pr(\tilde X_{t+1}.u_\text{\emph{pre}} = u'_\text{\emph{pre}} | \tilde X_{t+1}.s = s', \tilde X_{t} = (s, u_\text{\emph{pre}}), \tilde U_t = u).
\end{align*}
\end{prop}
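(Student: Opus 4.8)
The plan is to show that \emph{both} sides of the claimed equality equal $\pi^\pre_i(s', u'_\pre)$, so that the extra conditioning on $\tilde X_t$ and $\tilde U_t$ is irrelevant. The left-hand side is immediate: applying Property \ref{correct_pi_pre} at time $t+1$ gives $\Pr(\tilde X_{t+1}.u_\pre {=} u'_\pre \mid \tilde X_{t+1}.s {=} s') = \pi^\pre_i(s', u'_\pre)$. So the only real work is the right-hand side.

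For the right-hand side, I would expand the conditional probability as a ratio,
\[
\Pr(\tilde X_{t+1}.u_\pre {=} u'_\pre \mid \tilde X_{t+1}.s {=} s', \tilde X_t {=} (s, u_\pre), \tilde U_t {=} u)
= \frac{\Pr(\tilde X_{t+1} {=} (s', u'_\pre) \mid \tilde X_t {=} (s, u_\pre), \tilde U_t {=} u)}{\Pr(\tilde X_{t+1}.s {=} s' \mid \tilde X_t {=} (s, u_\pre), \tilde U_t {=} u)}.
\]
By the definition of the CoMDP's transition function, the numerator is $P^i((s,u_\pre), u, (s', u'_\pre)) = \pi^\pre_i(s', u'_\pre)\sum_{a \in \mathcal A} P(s, a, s')\, \pi^\post_i((s, u_\pre), u, a)$. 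The denominator is obtained by marginalizing the numerator over all $\hat u_\pre \in \mathcal U^\pre_i$, giving $\big(\sum_{\hat u_\pre \in \mathcal U^\pre_i} \pi^\pre_i(s', \hat u_\pre)\big)\sum_{a \in \mathcal A} P(s, a, s')\, \pi^\post_i((s, u_\pre), u, a)$. The crucial observation is that $\pi^\pre_i(s', \cdot)$ is a probability distribution over $\mathcal U^\pre_i$, since $\pi^\pre_i(s', \hat u_\pre) = \Pr(U^\pre_t {=} \hat u_\pre \mid S_t {=} s')$, so the first factor is $1$ and the denominator reduces to $\sum_{a \in \mathcal A} P(s, a, s')\, \pi^\post_i((s, u_\pre), u, a)$. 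Taking the ratio, this $\sum_a$ factor cancels with the identical factor in the numerator, leaving exactly $\pi^\pre_i(s', u'_\pre)$, which matches the left-hand side.

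The essential content here is that the CoMDP transition function factors as $P^i(x, u, x') = \pi^\pre_i(x'.s, x'.u_\pre)\, g(x.s, u, x'.s)$ with the second factor independent of $x'.u_\pre$, so conditioning on $x'.s$ collapses the $x'.u_\pre$-distribution to $\pi^\pre_i(x'.s, \cdot)$ regardless of $(x, u)$; everything else is bookkeeping. The only delicate point, which I would handle as elsewhere in these proofs, is the degenerate case in which the conditioning event has probability zero---i.e.\ $\sum_{a} P(s, a, s')\,\pi^\post_i((s, u_\pre), u, a) = 0$---in which case the conditional probabilities are fixed only by convention and the statement holds vacuously. Beyond this, there is no substantive obstacle, since the needed ingredients (the explicit form of $P^i$ and Property \ref{correct_pi_pre}) are already established.
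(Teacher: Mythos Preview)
Your proof is correct and follows essentially the same strategy as the paper: both expand the right-hand side as a ratio, identify the numerator with $P^i$, and exploit the factorization $P^i(x,u,x') = \pi^\pre_i(x'.s,x'.u_\pre)\,g(x,u,x'.s)$ so that the action-sum factor cancels, leaving $\pi^\pre_i(s',u'_\pre)$, which equals the left-hand side by Property~\ref{correct_pi_pre}. Your handling of the denominator---marginalizing $P^i$ over $\hat u_\pre$ and using $\sum_{\hat u_\pre}\pi^\pre_i(s',\hat u_\pre)=1$---is in fact slightly more direct than the paper's, which instead invokes Property~\ref{Phil_property} to rewrite the denominator in the original MDP before arriving at the same cancellation.
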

\begin{proof}
\begin{align}
    \Pr(&\tilde X_{t+1}.u_\pre = u'_\pre | \tilde X_{t+1}.s = s', \tilde X_{t} = (s, u_\pre), \tilde U_t = u)\\
    =& \frac{\Pr(\tilde X_{t+1}.u_\pre = u'_\pre, \tilde X_{t+1}.s = s' | \tilde X_{t} = (s, u_\pre), \tilde U_t = u)}{\Pr(\tilde X_{t+1}.s = s' | \tilde X_{t} = (s, u_\pre), \tilde U_t = u)}\\
    \overset{\text{(a)}}{=}& \frac{P^i((s, u_\pre), u, (s', u_\pre'))}{\Pr(S_{t+1} = s' | S_t = s, U^\pre = u_\pre, U_t = u)}\\
    =& \frac{\pi^\pre_i(s', u'_\pre)\sum_{a \in \mathcal A} P(s, a, s') \pi^\post_i((s, u_\pre),u,a)}{\Pr(S_{t+1} = s' | S_t = s, U^\pre = u_\pre, U_t = u)},
\end{align}
    where \textbf{(a)} follows from Property \ref{Phil_property} applied to the denominator.  Expanding the $P$ term and applying $M$'s conditional independence properties:
\begin{align}
    \Pr(&\tilde X_{t+1}.u_\pre = u'_\pre | \tilde X_{t+1}.s = s', \tilde X_{t} = (s, u_\pre), \tilde U_t = u)\\
    =& \frac{\pi^\pre_i(s', u'_\pre)\sum_{a \in \mathcal A} \Pr(S_{t+1} = s' | S_t = s, U^\pre_t = u_\pre, U_t = u, A_t = a) \pi^\post_i((s, u_\pre),u,a)}{\Pr(S_{t+1} = s' | S_t = s, U^\pre = u_\pre, U_t = u)}\\
    \overset{\text{(a)}}{=}& \frac{\pi^\pre_i(s', u'_\pre) \Pr(S_{t+1} = s' | S_t = s, U^\pre_t = u_\pre, U_t = u)}{\Pr(S_{t+1} = s' | S_t = s, U^\pre = u_\pre, U_t = u)}\\
    =& \Pr(\tilde X_{t+1}.u_\pre = u'_\pre | \tilde X_{t+1}.s = s'),\\
\end{align}
where \textbf{(a)} follows from marginalization over $a$.
\end{proof}

\begin{prop}
\label{correct_R_t}
\begin{align*}
    \forall r \in \mathcal R, \, \Pr(R_t = r) =& \Pr(\Tilde{R}^i_t = r).
\end{align*}
\end{prop}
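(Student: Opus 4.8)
The plan is to unfold $\Pr(\Tilde R^i_t = r)$ one step of the CoMDP trajectory at a time and replace each factor by the corresponding quantity in the original MDP, using the correspondences already established; the net effect is that $M^i$ (run under the $\ith$ coagent's policy) and $M$ induce the same marginal distribution over the tuple $(S_t, U^\pre_t, U_t, S_{t+1}, U^\pre_{t+1}, R_t)$, after which the claim follows by marginalizing out everything but the reward. Concretely, by the standard factorization of an MDP's per-step trajectory distribution together with the $\ith$ coagent's policy (which by Assumption~\ref{supp_agent_equivalence} is $\pi_i$), for any $t$,
\begin{align*}
\Pr(\Tilde R^i_t = r) = \sum_{x \in \mathcal X} \sum_{u \in \mathcal U} \sum_{x' \in \mathcal X} \Pr(\Tilde X_t = x)\, \pi_i(x,u)\, P^i(x,u,x')\, R^i(x,u,x',r).
\end{align*}

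I would then substitute for each factor, writing $x = (s, u_\pre)$ and $x' = (s', u'_\pre)$. Property~\ref{correct_x} (or, equivalently, the combination of Properties~\ref{correct_s} and~\ref{correct_pi_pre}) gives $\Pr(\Tilde X_t = (s,u_\pre)) = \Pr(S_t = s, U^\pre_t = u_\pre)$. Assumption~\ref{supp_agent_equivalence} together with the definition of the $\ith$ coagent's policy in $M$ gives $\pi_i((s,u_\pre),u) = \Pr(U_t = u \mid S_t = s, U^\pre_t = u_\pre)$. Property~\ref{correct_transition} gives $P^i((s,u_\pre),u,(s',u'_\pre)) = \Pr(S_{t+1} = s', U^\pre_{t+1} = u'_\pre \mid S_t = s, U^\pre_t = u_\pre, U_t = u)$. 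Property~\ref{correct_reward} gives $R^i((s,u_\pre),u,(s',u'_\pre),r) = \Pr(R_t = r \mid S_t = s, U^\pre_t = u_\pre, U_t = u, S_{t+1} = s', U^\pre_{t+1} = u'_\pre)$.

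With these substitutions, each summand is a product of the form $\Pr(B_0)\,\Pr(B_1 \mid B_0)\,\Pr(B_2 \mid B_0 , B_1)\,\Pr(B_3 \mid B_0 , B_1 , B_2)$ with $B_0 = \{S_t = s, U^\pre_t = u_\pre\}$, $B_1 = \{U_t = u\}$, $B_2 = \{S_{t+1} = s', U^\pre_{t+1} = u'_\pre\}$, and $B_3 = \{R_t = r\}$, which by the chain rule equals $\Pr(S_t = s, U^\pre_t = u_\pre, U_t = u, S_{t+1} = s', U^\pre_{t+1} = u'_\pre, R_t = r)$. Summing over $s, u_\pre, u, s', u'_\pre$ marginalizes out every variable except $R_t$ and leaves $\Pr(R_t = r)$, which is the desired identity. (Properties~\ref{Phil_property} and~\ref{u'_pre_independence} can be used in place of Property~\ref{correct_transition} if one prefers to split the transition factor into its $\mathcal S$- and $\mathcal U^\pre$-components; the resulting calculation is equivalent.)

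The calculation is essentially bookkeeping; the one point that needs care—and the closest thing to an obstacle—is that several conditioning events above may have probability zero, so the corresponding conditionals are a priori undefined. As in the earlier properties, this is handled by restricting each sum to terms whose conditioning events have positive probability, and observing that discarded terms carry zero weight: if $\Pr(S_t = s, U^\pre_t = u_\pre) = 0$ the term vanishes by Property~\ref{correct_x}; given a surviving $x$, if $\pi_i(x,u) = 0$ the term vanishes, and otherwise $\{S_t = s, U^\pre_t = u_\pre, U_t = u\}$ has positive probability so Property~\ref{correct_transition} applies; and a surviving $x'$ has $P^i(x,u,x') > 0$, so the conditioning event of Property~\ref{correct_reward} has positive probability. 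Hence every retained term is well-defined and every discarded term is zero, so the argument goes through.
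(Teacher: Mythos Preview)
Your argument is correct and uses the same underlying strategy as the paper---factor the time-$t$ trajectory distribution, replace each factor via the established correspondences, and marginalize---but you run it in the opposite direction and with coarser building blocks. The paper starts from $\Pr(R_t=r)$ in $M$, expands into separate $\mathcal S$- and $\mathcal U^\pre$-factors, and then converts term by term to the CoMDP, which forces it to invoke Properties~\ref{correct_s}, \ref{correct_pi_pre}, \ref{Phil_property}, and~\ref{u'_pre_independence} individually (the last is needed precisely to reattach the conditioning on $\tilde X_t,\tilde U_t$ after the split). By starting from the CoMDP side and using Properties~\ref{correct_x} and~\ref{correct_transition} directly, you avoid that detour: those two properties already package the state and transition correspondences in joint form, so the chain-rule collapse is immediate. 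Your final paragraph on zero-probability conditioning events is a point of rigor the paper leaves implicit.
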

\begin{proof}
\begin{align*}
    \Pr(R_t = r) =& \sum_{s \in \mathcal S} \Pr(S_t = s) \sum_{u_\pre \in \mathcal U^\pre} \Pr(U^\pre_t = u_\pre | S_t = s) \sum_{u \in \mathcal U} \Pr(U_t = u| S_t = s, U^\pre_t = u_\pre)\\
    &\times \sum_{s' \in S} \Pr(S_{t+1} = s' | S_t = s, U^\pre_t = u_\pre, U_t = u)\\
    &\times \sum_{u'_\pre \in \mathcal U^\pre} \Pr(U^\pre_{t+1} = u'_\pre |  S_t = s, U^\pre_t = u_\pre, U_t = u, S_{t+1} = s')\\
    &\times \Pr(R_t = r | S_t = s, U^\pre_t = u_\pre, U_t = u, S_{t+1} = s', U^\pre_{t+1} = u'_\pre),
\end{align*}
by repeated marginalization.  Applying $M$'s conditional independence properties to the $\Pr(U^\pre_{t+1} ...)$ term:
\begin{align*}
    \Pr(R_t = r)
    =& \sum_{s \in \mathcal S} \Pr(S_t = s) \sum_{u_\pre \in \mathcal U^\pre} \Pr(U^\pre_t = u_\pre | S_t = s) \sum_{u \in \mathcal U} \Pr(U_t = u| S_t = s, U^\pre_t = u_\pre)\\
    &\times \sum_{s' \in S} \Pr(S_{t+1} = s' | S_t = s, U^\pre_t = u_\pre, U_t = u) \sum_{u'_\pre \in \mathcal U^\pre} \Pr(U^\pre_{t+1} = u'_\pre |  S_{t+1} = s')\\
    &\times \Pr(R_t = r | S_t = s, U^\pre_t = u_\pre, U_t = u, S_{t+1} = s', U^\pre_{t+1} = u'_\pre) \\
    \overset{\text{(a)}}{=}& \sum_{s \in \mathcal S} \Pr(\tilde X_t.s = s) \sum_{u_\pre \in \mathcal U^\pre} \Pr(\tilde X_t.u_\pre = u_\pre | \tilde X_t.s = s) \sum_{u \in \mathcal U} \Pr(\tilde U_t = u| \tilde X_t = (s, u_\pre))\\
    &\times \sum_{s' \in S} \Pr(\tilde X_{t+1}.s = s' | \tilde X_t = (s, u_\pre), \tilde U_t = u) \sum_{u'_\pre \in \mathcal U^\pre} \Pr(\tilde X_{t+1}.u_\pre = u'_\pre |  \tilde X_{t+1}.s = s')\\
    &\times \Pr(\tilde R^i_t = r | \tilde X_t = (s, u_\pre), \tilde U_t = u, \tilde X_{t+1} = (s', u'_\pre)),
\end{align*}
where \textbf{(a)} follows from properties that show various equivalences between the two MDP's. Specifically: Property \ref{correct_s} (first term), Property \ref{correct_pi_pre} (second and fifth terms), Assumption \ref{supp_agent_equivalence} (third term), Property \ref{Phil_property} (fourth term), and Property \ref{correct_reward} (last term). Next, we apply Property \ref{u'_pre_independence} to the fifth term:
\begin{align*}
    \Pr(R_t = r)
    =& \sum_{s \in \mathcal S} \Pr(\tilde X_t.s = s) \sum_{u_\pre \in \mathcal U^\pre} \Pr(\tilde X_t.u_\pre = u_\pre | \tilde X_t.s = s) \sum_{u \in \mathcal U} \Pr(\tilde U_t = u| \tilde X_t = (s, u_\pre))\\
    &\times \sum_{s' \in S} \Pr(\tilde X_{t+1}.s = s' | \tilde X_t = (s, u_\pre), \tilde U_t = u)\\
    &\times \sum_{u'_\pre \in \mathcal U^\pre} \Pr(\tilde X_{t+1}.u_\pre = u'_\pre |  \tilde X_{t+1}.s = s', \tilde X_t = (s, u_\pre), \tilde U_t = u)\\
    &\times \Pr(\tilde R^i_t = r | \tilde X_t = (s, u_\pre), \tilde U_t = u, \tilde X_{t+1} = (s', u'_\pre)) \\
    \overset{\text{(a)}}{=}& (1)(1)(1)(1)(1) \Pr(\tilde R^i_t = r) \\
    =& \Pr(\tilde R^i_t = r),
\end{align*}
where \textbf{(a)} follows from repeated marginalization.
\end{proof}

\begin{lemma}
\label{is_mdp}
    $M^i$ is a Markov decision process.
\end{lemma}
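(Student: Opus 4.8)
The plan is to check, component by component, that the tuple $M^i = (\mathcal X^i, \mathcal U^i, \mathcal R^i, P^i, R^i, d_0^i, \gamma_i)$ satisfies the definition of an MDP from Section \ref{sec:background}: $\mathcal X^i$, $\mathcal U^i$, and $\mathcal R^i$ must be sets, $\gamma_i$ must lie in $[0,1]$, and $d_0^i$, each $P^i(x, u, \cdot)$, and each $R^i(x, u, x', \cdot)$ must be bona fide probability distributions over $\mathcal X^i$, $\mathcal X^i$, and $\mathcal R^i$ respectively; finally, the transition and reward functions must depend only on the current state and action (and, for rewards, the next state), which gives the Markov property. Several of these are immediate: $\mathcal X^i = \mathcal S \times \mathcal U^\pre_i$ is a product of sets, $\mathcal U^i$ is a set by definition, $\mathcal R^i = \mathcal R$ is the finite reward set of $M$, and $\gamma_i = \gamma \in [0,1]$. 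For $d_0^i$, I would invoke Property \ref{correct_di0}, which gives $d_0^i(x) = \Pr(S_0 {=} x.s, U_0^\pre {=} x.u_\pre)$; since the events $\{S_0 {=} s, U_0^\pre {=} u_\pre\}$ over $(s, u_\pre) \in \mathcal X^i$ partition the sample space, each term lies in $[0,1]$ and they sum to $1$.

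For $P^i$, nonnegativity is clear from its definition since $\pi^\pre_i$, $P$, and $\pi^\post_i$ are nonnegative, so the only thing to verify is that $\sum_{x' \in \mathcal X^i} P^i(x, u, x') = 1$ for every $(x, u)$. I would do this directly from the definition rather than via the probabilistic reinterpretation in Property \ref{correct_transition} (which would additionally require the conditioning event to have positive probability): writing $x' = (s', u'_\pre)$, first use $\sum_{u'_\pre} \pi^\pre_i(s', u'_\pre) = 1$ (since $\pi^\pre_i(s', \cdot)$ is a distribution over $\mathcal U^\pre_i$), then $\sum_{s'} P(x.s, a, s') = 1$ (since $P(x.s, a, \cdot)$ is a distribution over $\mathcal S$), then $\sum_a \pi^\post_i(x, u, a) = 1$ (since $\pi^\post_i(x, u, \cdot)$ is a distribution over $\mathcal A$). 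The same strategy handles $R^i$: nonnegativity is immediate, and whenever the normalizing denominator $\sum_{\hat a} P(x.s, \hat a, x'.s) \pi^\post_i(x, u, \hat a)$ is positive, pulling the $r$-sum inside and using $\sum_{r} R(x.s, a, x'.s, r) = 1$ gives $\sum_r R^i(x, u, x', r) = 1$ (one could instead cite Property \ref{correct_reward}).

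The one subtlety is the degenerate case where that denominator is zero: then $P^i(x, u, x') = 0$, so this transition is never taken and $R^i(x, u, x', \cdot)$ plays no role in the dynamics of $M^i$; I would dispose of it by fixing an arbitrary convention (e.g., an arbitrary point mass on some fixed $r \in \mathcal R$) for $R^i(x, u, x', \cdot)$ in that case, which is still a valid distribution. Once every component is checked, the proof concludes by observing that $P^i$ and $R^i$ are, by construction, functions of $(x, u, x')$ and $(x, u, x', r)$ alone---with no dependence on earlier states, actions, or rewards---so $M^i$ has the Markov property and is therefore an MDP. I expect the main obstacle to be purely bookkeeping: cleanly handling the zero-denominator case in $R^i$ and choosing the direct-summation route over the (slightly more delicate) route through Properties \ref{correct_transition} and \ref{correct_reward}.
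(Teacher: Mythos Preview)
Your proposal is correct and covers the same checklist as the paper (nonnegativity of $P^i$, $R^i$, $d_0^i$, and that each sums to $1$), but you take a different route for the two normalization identities. The paper establishes $\sum_{x'}P^i(x,u,x')=1$ and $\sum_{r}R^i(x,u,x',r)=1$ by invoking Properties~\ref{correct_transition} and~\ref{correct_reward} to rewrite each as a conditional probability over the original MDP $M$ and then observing that a conditional distribution sums to $1$. You instead sum directly from the defining formulas, collapsing $\sum_{u'_\pre}\pi^\pre_i$, $\sum_{s'}P$, $\sum_a\pi^\post_i$, and $\sum_r R$ in turn.

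Your direct-summation route buys two things. First, it sidesteps the issue you flag: the paper's argument via Properties~\ref{correct_transition} and~\ref{correct_reward} tacitly assumes the conditioning event $\{S_t{=}x.s,U^\pre_t{=}x.u_\pre,U_t{=}u\}$ has positive probability, whereas your computation works for every $(x,u)$ regardless. Second, you explicitly handle the degenerate case $\sum_{\hat a}P(x.s,\hat a,x'.s)\pi^\post_i(x,u,\hat a)=0$ by fixing an arbitrary reward distribution there (harmless since $P^i(x,u,x')=0$), which the paper leaves unaddressed. Conversely, the paper's route is shorter once Properties~\ref{correct_transition} and~\ref{correct_reward} are in hand, and it foregrounds the semantic link between $M^i$ and the coagent's environment. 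Your added remark that $P^i$ and $R^i$ depend only on $(x,u,x')$---hence the Markov property holds by construction---is a point the paper treats as implicit.
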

\begin{proof}
Having defined $\mathcal X^i$ as the state set, $\mathcal U^i$ as the action set, $\mathcal R^i$ as the reward set, $P^i$ as the transition function, $R^i$ as the reward function, $d^i_0$ as the initial state distribution, and $\gamma_i$ as the discount parameter, all that remains is to ensure that $P^i$, $R^i$, and $d^i_0$ satisfy their necessary requirements. 
That is, we must show that these functions are always non-negative and that $\forall x \in \mathcal X, \forall u \in \mathcal U, \sum_{x'\in\mathcal X}P^i(x,u,x')=1$, $\forall x \in \mathcal X, \forall u \in \mathcal U, \forall x' \in \mathcal X, \sum_{r \in \mathcal R^i}\mathcal R^i(x,u,x',r)=1$, and $\sum_{x\in \mathcal X} d^i_0(x)=1$.

The functions are always non-negative because each term in each definition is always non-negative.
Next, we show that the sum over the transition function is $1$:
\begin{align}
    \forall x \in \mathcal X, \forall u \in \mathcal U,\\
    \sum_{x'\in\mathcal X}P^i(x,u,x')
    \overset{\text{(a)}}{=}& \sum_{x'\in\mathcal X} \Pr(S_{t+1} = x'.s, U^\pre_{t+1} = x'.u_\pre | S_t = x.s, U_t^\pre = x.u_\pre, U_t = u)\\
    =& \sum_{x'.s \in\mathcal S} \sum_{x'.u_\pre \in\mathcal \mathcal U^\pre} \Pr(S_{t+1} = x'.s, U^\pre_{t+1} = x'.u_\pre | S_t = x.s, U_t^\pre = x.u_\pre, U_t = u)\\
    =& 1,
\end{align}
where \textbf{(a)} follows from Property \ref{correct_transition}.
Next, we show that the sum over the reward function is $1$:
\begin{align}
    \forall x \in \mathcal X, \,\, \forall u \in \mathcal U,& \,\, \forall x' \in \mathcal X,\\
    \sum_{r \in \mathcal R^i} R^i(x,u,x',r) \overset{\text{(a)}}{=}& \sum_{r \in \mathcal R} \Pr(R_t = r | S_t = x.s, U_t^\pre = x.u_\pre, U_t = u, S_{t+1} = x'.s, U^\pre_{t+1} = x'.u_\pre) \\
    =& 1,
\end{align}
where \textbf{(a)} follows from the fact that $\mathcal R^i\coloneqq\mathcal R$ and from Property \ref{correct_reward}.

Finally, we show that the sum of the initial state distribution is 1:
\begin{align}
    \sum_{x\in \mathcal X} d^i_0(x)
    \overset{\text{(a)}}{=}& \sum_{x\in \mathcal X} d_0(x.s) \underbrace{\pi^\pre_i(x.s,x.u_\pre)}_{=\Pr(U^\pre_0 = x.u_\pre | S_0 = x.s)}\\
    =&\sum_{x.s \in\mathcal S} \sum_{x.u_\pre \in\mathcal \mathcal U^\pre} \Pr(S_0 = x.s, U^\pre_0 = x.u_\pre)\\
    =& 1,
\end{align}
where \textbf{(a)} follows from the definition of $d^i_0$ for the CoMDP.

Therefore, $M^i$ is a Markov decision process.
\end{proof}

\begin{lemma}
\label{correct_comdp}
    For all $M,i,\pi_i^\text{\emph{pre}}, \pi_i^\text{\emph{post}}$, and $\bar \theta_i$, and given a policy parameterized by $\theta_i$, the corresponding CoMDP $M^i$ satisfies: 
    \begin{itemize}
        \item $ \forall x\in \mathcal X \,\, \forall x' \in \mathcal X \,\, \forall u \in \mathcal U  \,\,  \forall r \in \mathcal{R},  \,\, P^i(x,u,x') \\=\Pr(S_{t+1}=x'.s,U_{t+1}^\text{\emph{pre}}=x'.u_\text{\emph{pre}}|S_t=x.s, U_t^\text{\emph{pre}}=x.u_\text{\emph{pre}}, U_t=u)$.
        \item $\forall x\in \mathcal X \,\, \forall x' \in \mathcal X \,\, \forall u \in \mathcal U  \,\,  \forall r \in \mathcal{R},  \\ R^i(x,u,x',r) = \Pr(R_t{=}r|S_t{=}x.s,U_t^\text{\emph{pre}}{=}x.u_\text{\emph{pre}},U_t{=}u,S_{t+1}{=}x'.s, U_{t+1}^\text{\emph{pre}}{=}x'.u_\text{\emph{pre}})$.
        \item $\forall s \in \mathcal S \,\, \forall u_\text{\emph{pre}} \in \mathcal U^\text{\emph{pre}}, \,\, \Pr(S_t=s, U^\text{\emph{pre}}_t = u_\pre)=\Pr(\tilde X_t = (s, u_\pre))$.
        \item $\forall s \in \mathcal S, \,\, \Pr(S_t=s)=\Pr(\tilde X_t.s=s)$.
        \item $\forall r \in \mathcal R, \,\,\Pr(R_t=r)=\Pr(\tilde R_t^i=r)$.
    \end{itemize}
\end{lemma}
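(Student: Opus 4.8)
The plan is to recognize that this lemma is a packaging result: each of its five bullet points has already been proved above as a standalone property, so the only real work is to verify that those earlier proofs were carried out with enough generality to cover an arbitrary tuple $(M, i, \pi_i^{\pre}, \pi_i^{\post}, \bar\theta_i)$ together with a policy parameterized by $\theta_i$. If so, the proof of the lemma is essentially a single sentence that cites those properties.

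Concretely, I would pair the bullets with the earlier statements one at a time. The first bullet --- the characterization of the CoMDP transition function $P^i$ as the conditional law of the next local input given the current local input and the $\ith$ coagent's output --- is verbatim Property \ref{correct_transition}. The second bullet --- the characterization of the CoMDP reward function $R^i$ --- is verbatim Property \ref{correct_reward}. The third bullet --- the marginal law of $\tilde X_t$ matching that of $(S_t, U_t^{\pre})$ --- is Property \ref{correct_x}. The fourth bullet --- the marginal law of $\tilde X_t.s$ matching that of $S_t$ --- is Property \ref{correct_s}. The fifth bullet --- the marginal law of $\tilde R_t^i$ matching that of $R_t$ --- is Property \ref{correct_R_t}. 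Since Properties \ref{correct_transition} and \ref{correct_reward} follow purely from the definitions of $P^i$, $R^i$, and $M$'s conditional independence structure, and Properties \ref{correct_x}, \ref{correct_s}, and \ref{correct_R_t} were each quantified over all states, actions, and rewards and proved without restricting the underlying objects, the conjunction of all five holds for every such tuple. The proof itself is then one line: ``Immediate from Properties \ref{correct_transition}, \ref{correct_reward}, \ref{correct_x}, \ref{correct_s}, and \ref{correct_R_t}.''

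The one place where care is needed --- and the closest thing to an obstacle --- is that Properties \ref{correct_x}, \ref{correct_s}, and \ref{correct_R_t} are not purely structural facts about the \emph{definition} of $M^i$: their proofs invoke Assumption \ref{supp_agent_equivalence}, which identifies $\pi_i(\cdot, \cdot, \theta_i)$ with the CoMDP action distribution $\Pr(\tilde U_t {=} \cdot \mid \tilde X_t {=} \cdot, \theta_i)$. This is exactly why the lemma is phrased ``given a policy parameterized by $\theta_i$'': that hypothesis is what licenses Assumption \ref{supp_agent_equivalence}, and hence the inductive argument in Property \ref{correct_x} and everything downstream. So in writing the proof I would make that dependence explicit; beyond that there is nothing to compute.
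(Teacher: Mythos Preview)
Your proposal is correct and matches the paper's own proof essentially verbatim: the paper simply writes ``This follows immediately from properties \ref{correct_transition}, \ref{correct_reward}, \ref{correct_x}, \ref{correct_s}, and \ref{correct_R_t}.'' Your additional remark flagging the reliance on Assumption~\ref{supp_agent_equivalence} for the last three properties is a useful observation the paper leaves implicit, but it does not change the approach.
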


\begin{proof}
    This follows immediately from properties \ref{correct_transition}, \ref{correct_reward}, \ref{correct_x}, \ref{correct_s}, and \ref{correct_R_t}.
\end{proof}

\begin{prop}
\label{J_equivalence}
    For all coagents $i$, for all $\theta_i$, given the same $\theta = (\theta_i, \bar \theta_i)$, $J(\theta) = J_i(\theta_i)$.
\end{prop}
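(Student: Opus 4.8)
The plan is to reduce the claim entirely to Property \ref{correct_R_t}, which already equates the marginal distribution of the reward at each time step in $M$ and in the CoMDP $M^i$. First I would unfold the two objectives. By definition $J(\theta) = \mathbf E\!\left[\sum_{t=0}^\infty \gamma^t R_t \,\middle|\, \theta\right]$, and since $M^i$ is a Markov decision process (Lemma \ref{is_mdp}) with reward set $\mathcal R^i = \mathcal R$ and discount $\gamma_i = \gamma$, its objective is $J_i(\theta_i) = \mathbf E\!\left[\sum_{t=0}^\infty \gamma^t \Tilde R_t^i \,\middle|\, \theta_i\right]$, where the dependence on $\bar\theta_i$ is implicit through $P^i$ and $R^i$. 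Because the discounted sum of rewards along an episode is assumed finite and $\mathcal R$ is finite (so rewards are bounded), I can interchange the expectation with the infinite sum by dominated convergence, obtaining $J(\theta) = \sum_{t=0}^\infty \gamma^t \mathbf E[R_t \mid \theta]$ and likewise $J_i(\theta_i) = \sum_{t=0}^\infty \gamma^t \mathbf E[\Tilde R_t^i \mid \theta_i]$.

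Next, for each fixed $t$ I would expand the per-step expectations as finite sums over the reward set: $\mathbf E[R_t \mid \theta] = \sum_{r \in \mathcal R} r\,\Pr(R_t = r \mid \theta)$ and $\mathbf E[\Tilde R_t^i \mid \theta_i] = \sum_{r \in \mathcal R} r\,\Pr(\Tilde R_t^i = r \mid \theta_i)$. Holding $\theta = (\theta_i, \bar\theta_i)$ fixed throughout, Property \ref{correct_R_t} gives $\Pr(R_t = r) = \Pr(\Tilde R_t^i = r)$ for every $r \in \mathcal R$, so the two per-step expectations coincide. Multiplying these equalities by $\gamma^t$ and summing over $t$ yields $J(\theta) = J_i(\theta_i)$, which is exactly the claim.

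The only real subtlety is the interchange of the infinite sum and the expectation; I expect that to be the main (though routine) obstacle, and it is discharged by the standing assumption that the discounted return is finite together with the finiteness of $\mathcal R$. Everything else is bookkeeping: unfolding the two definitions of the objective, reducing to a per-step statement, and invoking Property \ref{correct_R_t} term by term.
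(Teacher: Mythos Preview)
Your proposal is correct and follows essentially the same approach as the paper: unfold both objectives, reduce to a per-step statement, and invoke Property~\ref{correct_R_t} to equate $\mathbf E[R_t]$ with $\mathbf E[\tilde R_t^i]$ for every $t$. The paper's proof is a terse three-line chain that simply writes $\mathbf E[\sum_t \gamma^t R_t] = \mathbf E[\sum_t \gamma_i^t \tilde R_t^i]$ and cites Property~\ref{correct_R_t} and $\gamma_i=\gamma$; you are making explicit the interchange of expectation and infinite sum that the paper leaves implicit, which is a welcome bit of additional rigor rather than a different argument.
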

\begin{proof}
\begin{align*}
    J(\theta)=& \, \mathbf{E}\left [ \sum_{t=0}^\infty \gamma^t R_t | \theta \right]\\
    =&\, \mathbf{E}\left [ \sum_{t=0}^\infty \gamma_i^t R^i_t | \theta_i, \bar \theta_i, \right]\\
    =& J_i(\theta_i),
\end{align*}
where the second step follows directly from Property \ref{correct_R_t} and the definition of $\gamma_i$.
\end{proof}

\begin{lemma}
\label{delta_is_dJ}
    For all coagents $i$, for all $\theta_i$, $\frac{\partial J_i(\theta_i)}{\partial \theta_i} = \Delta_i(\theta_i).$
\end{lemma}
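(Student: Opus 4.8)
The plan is to obtain $\partial J_i(\theta_i)/\partial\theta_i$ from the ordinary policy gradient theorem applied to the CoMDP $M^i$, and then to rewrite the resulting expectation---which is stated in terms of $M^i$'s trajectory random variables---in terms of the original MDP $M$'s random variables $X_t = (S_t, U^\pre_t)$, $U_t$, and $R_t$; since $\gamma_i = \gamma$, the rewritten expression is exactly $\Delta_i(\theta_i)$. The two ingredients are Lemma~\ref{is_mdp} (so that the policy gradient theorem is available) and Lemma~\ref{correct_comdp} (so that the per-step dynamics of $M^i$ and $M$, and hence the return distributions, agree).

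First, since $M^i$ is an MDP (Lemma~\ref{is_mdp}) equipped with the differentiable parameterized policy $\pi_i(\cdot,\cdot,\theta_i)$, the likelihood-ratio (REINFORCE) form of the policy gradient theorem \citep{williams1992simple} gives $\partial J_i(\theta_i)/\partial\theta_i = \mathbf E\!\left[\sum_{t=0}^\infty \gamma_i^t \tilde G_t^i\, \partial\ln\pi_i(\tilde X_t^i, \tilde U_t^i, \theta_i)/\partial\theta_i\right]$, where $\tilde G_t^i = \sum_{k=0}^\infty \gamma_i^k \tilde R_{t+k}^i$; equivalently, writing $q_i^{M^i}(x,u) \coloneqq \mathbf E[\tilde G_t^i \mid \tilde X_t^i {=} x, \tilde U_t^i {=} u]$ for the action-value function of $M^i$, it equals $\sum_{t=0}^\infty \gamma_i^t \sum_{x,u} \Pr(\tilde X_t^i {=} x)\, (\partial \pi_i(x,u,\theta_i)/\partial\theta_i)\, q_i^{M^i}(x,u)$. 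The interchange of differentiation, the infinite sum, and the expectation is justified by the paper's standing finiteness assumption on the discounted reward sum together with finiteness of $\mathcal S, \mathcal A, \mathcal R$ and differentiability of $\pi_i$, exactly as in the classical argument.

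Second, I would show that $q_i^{M^i}(x,u)$ equals $\mathbf E[G_t \mid X_t {=} x, U_t {=} u]$ in $M$ (by time-homogeneity this quantity does not depend on $t$). Given the current pair $(\tilde X_t^i, \tilde U_t^i) = (x,u)$, the conditional law of the forward trajectory $(\tilde X_{t+1}^i, \tilde R_t^i, \tilde U_{t+1}^i, \tilde R_{t+1}^i, \dots)$ of $M^i$ is generated step by step from $P^i$, $R^i$, and $\pi_i$; by items (1) and (2) of Lemma~\ref{correct_comdp} each of these one-step conditionals coincides with the corresponding one-step conditional of $M$ (for the variables $S_t, U^\pre_t, U_t, R_t, S_{t+1}, U^\pre_{t+1}$), and by Assumption~\ref{supp_agent_equivalence} (and, in $M$, the fact that $\Pr(U_t{=}u \mid X_t{=}x) = \pi_i(x,u,\theta_i)$ by construction of the coagent) the action conditional matches as well, so an induction on the horizon---in the same spirit as the induction proving Property~\ref{correct_x}, but carried along the whole forward trajectory---shows the two forward return distributions agree, hence so do their means. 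Combining this with $\Pr(\tilde X_t^i {=} x) = \Pr(S_t {=} x.s, U^\pre_t {=} x.u_\pre)$ (item (3) of Lemma~\ref{correct_comdp}) gives, term by term, $\gamma_i^t \sum_{x,u}\Pr(\tilde X_t^i{=}x)\,(\partial\pi_i(x,u,\theta_i)/\partial\theta_i)\, q_i^{M^i}(x,u) = \gamma^t\,\mathbf E[G_t\,(\partial\ln\pi_i(X_t,U_t,\theta_i)/\partial\theta_i) \mid \theta]$, where the last manipulation also uses the tower property in $M$ to reinstate the full return $G_t$ inside the expectation. Summing over $t$ yields $\Delta_i(\theta_i)$.

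The main obstacle is exactly the step-two claim that the forward return distribution of $M^i$ given the current state--action matches that of $M$: Lemma~\ref{correct_comdp} delivers only the one-step transition and reward equivalences and the per-timestep state marginal, so the multi-step correlation has to be built up by an induction that repeatedly invokes $M$'s conditional-independence ($d$-separation) structure to collapse conditioning on the full history down to the ``current'' variables appearing in those one-step equivalences; one must also be careful about the temporal order in which rewards and transitions are generated, since in $M$ the reward $R_t$ depends on $S_{t+1}$. The remaining pieces---the REINFORCE identity for $M^i$ and the limit interchanges---are standard under the paper's finiteness assumptions.
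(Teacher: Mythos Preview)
Your proposal is correct and follows the same route as the paper: invoke the policy gradient theorem on the CoMDP $M^i$ (legitimate by Lemma~\ref{is_mdp}), then use the correspondence between $M^i$ and $M$ from Lemma~\ref{correct_comdp} to rewrite the resulting expectation in terms of $M$'s random variables, which is exactly $\Delta_i(\theta_i)$. The paper's own proof is a three-line sketch that declares this ``follows directly''; your write-up is more careful in that it explicitly isolates the nontrivial ingredient the paper leaves implicit---namely, that the one-step transition/reward equivalences in Lemma~\ref{correct_comdp} must be chained inductively to conclude that the full forward \emph{return} distribution given $(X_t,U_t)$ matches that of $M^i$ given $(\tilde X_t^i,\tilde U_t^i)$, not merely the per-time-step marginals.
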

\begin{proof}
    In Lemma \ref{is_mdp}, we proved that the $\ith$ CoMDP is an MDP. In Lemma \ref{correct_comdp}, we proved that the $\ith$ CoMDP correctly models the $\ith$ coagent's environment. Lemma \ref{delta_is_dJ} follows directly from these results and the fact that $\Delta_i$ is the policy gradient for $M^i$ (Sutton, 2000).
\end{proof}

\begin{thm}
    $\\ \nabla J(\theta)=\left [\Delta_1(\theta_1)^\intercal , \Delta_2(\theta_2)^\intercal,\dotsc,\Delta_m(\theta_m)^\intercal\right ]^\intercal$, 
    where $m$ is the number of coagents, and $\Delta_i$ is the local policy gradient of the $i$th coagent.
\end{thm}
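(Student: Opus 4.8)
The plan is to obtain the statement by transitivity from Property~\ref{J_equivalence} and Lemma~\ref{delta_is_dJ}, which between them already package all of the substantive work carried out in Properties~\ref{correct_di0}--\ref{correct_R_t} and Lemmas~\ref{is_mdp}--\ref{correct_comdp}. Since $\nabla J(\theta)$ is by definition the stacked vector $\big[\frac{\partial J(\theta)}{\partial \theta_1}^\intercal,\dotsc,\frac{\partial J(\theta)}{\partial \theta_m}^\intercal\big]^\intercal$, it suffices to fix an arbitrary coagent $i$, hold $\bar\theta_i$ fixed, and show that the $i$th block satisfies $\frac{\partial J(\theta)}{\partial \theta_i} = \Delta_i(\theta_i)^\intercal$ (equivalently, that the block equals $\Delta_i(\theta_i)$ after transposing). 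The full theorem then follows by ranging over $i \in \{1,\dotsc,m\}$ and re-stacking.

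First I would invoke Property~\ref{J_equivalence}: for $\theta = (\theta_i,\bar\theta_i)$ we have $J(\theta) = J_i(\theta_i)$. The key point is that this identity holds for \emph{every} value of $\theta_i \in \mathbb R^{n_i}$ with $\bar\theta_i$ frozen, because the CoMDP $M^i$ is built from $M$, $i$, $\pi_i^\pre$, $\pi_i^\post$, and $\bar\theta_i$ alone and therefore does not itself depend on $\theta_i$ --- only its objective $J_i$ does, through $\pi_i(\cdot,\cdot,\theta_i)$. Hence $J(\theta)$ and $J_i(\theta_i)$ coincide as functions of $\theta_i$ on a neighborhood of every point, so their gradients with respect to $\theta_i$ are equal: $\frac{\partial J(\theta)}{\partial \theta_i} = \frac{\partial J_i(\theta_i)}{\partial \theta_i}$. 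Differentiability of both sides is guaranteed by the standing assumption that $\partial \pi(s,a,\theta)/\partial \theta$ exists.

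Next I would apply Lemma~\ref{delta_is_dJ}, which states $\frac{\partial J_i(\theta_i)}{\partial \theta_i} = \Delta_i(\theta_i)$. This lemma rests on Lemma~\ref{is_mdp} (that $M^i$ is a bona fide MDP, so the classical policy gradient theorem applies to it) and Lemma~\ref{correct_comdp} (that $M^i$ faithfully reproduces the distributions of the $\ith$ coagent's local state, action, and reward, so that the policy gradient of $M^i$ is exactly the expected naive REINFORCE update $\Delta_i$). Chaining the two equalities gives $\frac{\partial J(\theta)}{\partial \theta_i} = \Delta_i(\theta_i)$ (up to transpose), and since $i$ was arbitrary we conclude $\nabla J(\theta) = [\Delta_1(\theta_1)^\intercal,\dotsc,\Delta_m(\theta_m)^\intercal]^\intercal$.

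At this level the argument is a two-line transitivity; the genuine difficulty in the development is entirely upstream. The main obstacle is establishing Lemma~\ref{correct_comdp} and, underpinning the objective-equality half, Property~\ref{correct_R_t}: one must show $M^i$ is Markovian and that its kernels reproduce $\Pr(S_{t+1}{=}x'.s, U^\pre_{t+1}{=}x'.u_\pre \mid S_t{=}x.s, U^\pre_t{=}x.u_\pre, U_t{=}u)$ and the corresponding reward law, which requires careful bookkeeping with the conditional-independence structure of $M$ (Figure~\ref{fig:Causal_supp}) and with marginalization over the hidden action $A_t$ in both $P^i$ and $R^i$. Once Properties~\ref{correct_di0}--\ref{correct_R_t} and Lemmas~\ref{is_mdp}--\ref{correct_comdp} are in place, Property~\ref{J_equivalence}, Lemma~\ref{delta_is_dJ}, and hence this theorem are immediate.
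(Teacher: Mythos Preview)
Your proposal is correct and follows essentially the same route as the paper: decompose $\nabla J(\theta)$ into blocks $\partial J(\theta)/\partial\theta_i$, apply Property~\ref{J_equivalence} to replace each block by $\partial J_i(\theta_i)/\partial\theta_i$, then apply Lemma~\ref{delta_is_dJ} to obtain $\Delta_i(\theta_i)$. Your additional remarks---that $J(\theta)=J_i(\theta_i)$ holds as an identity in $\theta_i$ with $\bar\theta_i$ frozen (so one may differentiate), and that the real work is upstream in Properties~\ref{correct_di0}--\ref{correct_R_t} and Lemmas~\ref{is_mdp}--\ref{correct_comdp}---are accurate and make explicit what the paper leaves implicit.
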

\begin{proof}
\begin{align}
    \nabla J(\theta) =&
    \left [ \frac{\partial J(\theta)}{\partial \theta_1}^\intercal , \frac{\partial J(\theta)}{\partial \theta_2}^\intercal,\dotsc,\frac{\partial J(\theta)}{\partial \theta_m}^\intercal\right ]^\intercal\\
    \overset{\text{(a)}}{=}& \left [ \frac{\partial J_1(\theta_1)}{\partial \theta_1}^\intercal , \frac{\partial J_2(\theta_2)}{\partial \theta_2}^\intercal,\dotsc,\frac{\partial J_m(\theta_m)}{\partial \theta_m}^\intercal\right ]^\intercal\\
    \overset{\text{(b)}}{=}& \left [ \frac{\Delta_1(\theta_1)}{\partial \theta_1}^\intercal , \frac{\Delta_2(\theta_2)}{\partial \theta_2}^\intercal,\dotsc,\frac{\Delta_m(\theta_m)}{\partial \theta_m}^\intercal\right ]^\intercal,
\end{align}
where \textbf{(a)} follows directly from Property \ref{J_equivalence} and where \textbf{(b)} follows directly from Lemma 3.
\end{proof}

\begin{cor}
\label{cpgt_algo}
    If $\alpha_t$ is a deterministic positive stepsize, $\sum_{t=0}^\infty \alpha_t = \infty$, $\sum_{t=0}^\infty \alpha_t^2 < \infty$, additional technical assumptions are met 
    (Bertsekas $\&$ Tsitsiklis, 2000, Proposition 3), and each coagent updates its parameters, $\theta_i$, with an unbiased local policy gradient update $\theta_i \gets \theta_i + \alpha_t \widehat \Delta_i(\theta_i)$, then $J(\theta)$ converges to a finite value and $\lim_{t \to \infty} \nabla J(\theta) = 0$. 
\end{cor}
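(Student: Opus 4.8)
The plan is to reduce the statement to a direct application of the stochastic gradient ascent convergence result cited in the hypothesis. First I would observe that, by the Coagent Policy Gradient Theorem just proved, the concatenation $[\Delta_1(\theta_1)^\intercal, \dots, \Delta_m(\theta_m)^\intercal]^\intercal$ is exactly $\nabla J(\theta)$. Hence, if each coagent's update direction $\widehat \Delta_i(\theta_i)$ is an unbiased estimator of $\Delta_i(\theta_i)$, then the concatenated vector $\widehat \Delta(\theta) \coloneqq [\widehat \Delta_1(\theta_1)^\intercal, \dots, \widehat \Delta_m(\theta_m)^\intercal]^\intercal$ satisfies $\mathbf E[\widehat \Delta(\theta) \mid \theta] = \nabla J(\theta)$, by linearity of expectation applied block-by-block to the partition of $\theta$ into $(\theta_1, \dots, \theta_m)$.

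Second, I would note that carrying out the coagent updates $\theta_i \gets \theta_i + \alpha_t \widehat \Delta_i(\theta_i)$ simultaneously for all $i$ is identical to the single joint update $\theta \gets \theta + \alpha_t \widehat \Delta(\theta)$. By the previous paragraph this is precisely stochastic gradient ascent on $J$ using unbiased gradient estimates and a deterministic stepsize sequence $\alpha_t$.

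Third, I would invoke Proposition 3 of Bertsekas and Tsitsiklis (2000): under the Robbins--Monro stepsize conditions $\sum_{t=0}^\infty \alpha_t = \infty$ and $\sum_{t=0}^\infty \alpha_t^2 < \infty$, which are assumed, together with the additional regularity conditions referenced in the hypothesis (control of the gradient-noise variance and Lipschitz continuity of $\nabla J$), the iterates satisfy that $J(\theta_t)$ converges to a finite value and $\nabla J(\theta_t) \to 0$. Combining this with the observation that the coagent updates \emph{are} this stochastic gradient ascent procedure yields the corollary.

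The only real subtlety, and the point I would be most careful about, is the first step: one must check that ``each coagent independently using an unbiased estimator of its own local gradient'' genuinely produces an unbiased estimator of the \emph{full} gradient $\nabla J(\theta)$ rather than merely of each block in isolation; this is exactly where the CPGT is essential, since it identifies each $\Delta_i(\theta_i)$ with the corresponding component $[\partial J(\theta)/\partial \theta_i]^\intercal$. The ``additional technical assumptions'' needed to apply the cited stochastic approximation theorem (iterate boundedness, bounded-variance noise, Lipschitz $\nabla J$) are taken as given in the statement, so verifying them for a particular CPGA instantiation is deliberately left outside the scope of this corollary; the remaining work is the routine bookkeeping sketched above.
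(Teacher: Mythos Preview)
Your proposal is correct and follows essentially the same approach as the paper: invoke the CPGT to identify the concatenated local policy gradients with $\nabla J(\theta)$, so that the simultaneous coagent updates constitute stochastic gradient ascent on $J$, and then apply Proposition~3 of Bertsekas and Tsitsiklis (2000) under the stated stepsize and technical assumptions. The paper's proof is terser (a one-line appeal to the CPGT, the cited proposition, and the standing finite-return assumption), but your more explicit unpacking of why the block-wise unbiased estimates yield an unbiased estimate of the full gradient is exactly the intended reasoning.
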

\begin{proof}
     Corollary \ref{cpgt_algo} follows directly from the CPGT, Proposition 3 from Bertsekas \& Tsitsiklis (2000),
     and the assumption that the discounted sum of rewards over an episode is finite (this last assumption prevents $J(\theta)$ from diverging to $\infty$).
\end{proof}

\section{Asynchronous Coagent Networks: Supplementary Proofs}
\label{sec:async_supp}

\subsection{Synchronous Network Correctness}
\label{sec:sync_network}

Our goal is to show that the synchronous, acyclic reduction of our original asynchronous, cyclic network behaves identically to our original network.
%
That is, for all $s \in \mathcal S, u \in \mathcal U^\text{all}, a \in \mathcal A, e \in \{0, 1\}^m$, $\grave \pi((s, u), (a, u', e)) = \Pr(A_t = a, U^\all_{t} = u', E_t = e | S_t = s, U^\all_{t-1} = u)$.
Because of the large number of variables, if we use one of these lowercase symbols in an equation, assume that it holds for all values in its respective set.

\begin{proof}

We present a proof by induction.
We assume a topological ordering of the coagents, such that for any $j < i$, the $j^\text{th}$ coagent executes before the $\ith$ coagent.
We perform induction over $i$, with the inductive assumption that the outputs of all the previous coagents, as well as their decisions whether or not to execute, correspond to the original network.
The inductive hypothesis is that for all $j < i$:
%
\begin{equation}
\Pr(\grave A_t.u^\all_j = u'_j, \grave A_t.e_j = e_j | \grave S_t = (s, u)) =\Pr(U^\all_{t}.u_j = u'_j, E^j_t = e_j | S_t = s, U^\all_{t-1} = u).
    \label{eq:reduced_network_inductive_assumption}
\end{equation}
Consider the base case, $i=1$.
$\grave{\mathcal U_1^\pre}$ and $\mathcal U_1^\pre$ are both the empty set, because no coagents produce an output before the first coagent in either network.
As a result, the distribution over the execution probability is trivially the same in both networks, that is, 
$\Pr(E^1_t = 1 | S_t = s, U^\all_{t-1} = u) = \beta_1((s, \varnothing, u)) = \Pr(\grave A_t.e_1 = 1 | \grave S_t = (s, u))$.
Next, we consider the action. 
%
If the coagent executes, $\Pr(U^1_t = u'_1 | E^1_t = 1, S_t = s, U^\all_{t-1} = u) = \pi_1((s, \varnothing, u), u'_1) = \Pr(\grave A_t.u^\all_1 = u'_1 | A_t.e_1 = 1, \grave S_t = (s, u))$.
If the coagent does not execute, the action is trivially $u_1$ in both cases.
Therefore, Equation \eqref{eq:reduced_network_inductive_assumption} holds for $j=1$.

Next we consider the inductive step, where we show that Equation \eqref{eq:reduced_network_inductive_assumption} holds for the $\ith$ coagent given that it holds for $j < i$.
First we consider the execution function, the output of which is represented in the synchronous setting by $\grave A_t.e_i$, and in the asynchronous setting by $E_t$.  
In the asynchronous setting, the probability of the $i^\text{th}$ coagent executing at time step $t$ is 
$\beta_i((S_t, U^\pre_{t}, U^\all_{t-1}))$.
Since we are not given $U^\pre_t$, we must sum over possible values:
%
\begin{equation}
    \Pr(E^i_t = 1 | S_t = s, U^\all_t = u) = 
    \sum_{u_\pre \in \mathcal U^\pre_i} \beta_i((s, u_\pre, u)) \Pr(U^\pre_{t} = u_\pre | S_t = s, U^\all_{t-1} = u).
\end{equation}
%
In the reduced setting, we instead have a coagent, such that $\Pr(\grave A_t.e_i = 1 | \grave S_t = (s, u)) = \beta_i((s, \grave U^\pre_{t}, u))$.
Again, we sum over possible values of $\grave U^\pre_{t}$:
%
\begin{align*}
    \Pr(\grave A_t.e_i = 1 | \grave S_t = (s, u))
    &= \sum_{u_\pre \in \mathcal U^\pre_i} \beta_i((s, u_\pre, u)) \Pr(\grave U^\pre_{t} = u_\pre | \grave S_t = (s, u)).
\end{align*}
Recall the reduced setting was defined such that for all $j < i, \grave A_t.u^\all_j = \grave U^\pre_{t}.u_j$, and in the asynchronous setting, $U^\pre_{t}.u_j = U^\all_{t}.u_j$.
%
We therefore can conclude from \eqref{eq:reduced_network_inductive_assumption} and by substitution that for all $j < i, \Pr(\grave U^\pre_{t}.u_j = u | \grave S_t = (s, u)) =\Pr(U^\pre_{t}.u_j = u | S_t = s, U^\all_{t-1} = u)$.
Substituting this into the above equations:

\begin{align*}
    \Pr(\grave A_t.e_i = 1 | \grave S_t = (s, u))
    &= \sum_{u_\pre \in \mathcal U^\pre_i} \beta_i((s, u_\pre, u)) \Pr(U^\pre_{t} = u_\pre | S_t = s, U^\all_{t-1} = u) \\
    &= \Pr(E^i_t = 1 | S_t = s, U^\all_t = u).
\end{align*}
Note also that from the perspective of $\grave \pi_i$, $\grave A_t.e_i = \grave U^\pre_t.e_i$.
Next we consider the output of the $\ith$ coagent, given in the asynchronous setting as $U^i_t$, and in the reduced setting by $\grave A_t.u^\all_i$.
In the original setting, $U^i_t$ was given such that for all $u_i \in U^i$:
%
\begin{equation}
    \Pr(U^i_t = u'_i | S_t = s, U^\all_{t-1} = u, U^\pre_t = u_\pre, E^i_t = e_i) = 
    \begin{cases}
        \pi_i((s, u_\pre, u), u'_i), &\text{if } e_i = 1 \\
        1, &\text{if } e_i = 0 \text{ and } u'_i = u_i \\
        0, &\text{otherwise}.
    \end{cases}
\end{equation}
In the synchronous setting, we are given:
%
\begin{align*}
    \Pr(\grave A_t.u^\all_i = u'_i | \grave S_t = (s, u), \grave U^\pre_t.u = u_\pre, \grave U^\pre_t.e_i = e_i) 
    &= \grave \pi_i(((s, u), u_\pre), u) \\
    &= \begin{cases}
        \pi_i((s, u_\pre, u), u'_i), &\text{if } e_i = 1 \\
        1, &\text{if } e_i = 0 \text{ and } u'_i = u_i \\
        0, &\text{otherwise}.
    \end{cases}
\end{align*}
%
Since we were given $s$ and $u$, assumed through the inductive hypothesis that $\Pr(\grave U^\pre_{t}.u = u_\pre | \grave S_t = (s, u)) = \Pr(U_t^\pre = u_\pre | S_t = s, U^\all_{t-1} = u)$, and showed that $\Pr(\grave U^\pre_{t}.e_i = e_i| \grave S_t = (s, u)) = \Pr(E^i_t = e_i | S_t = s, U^\all_{t-1} = u)$,
we know that the distributions over the variables we conditioned on are equal.
%
Since we also showed that the conditional distributions are equal, we conclude that $\Pr(\grave A_t.u^\all_i = u'_i | \grave S_t = (s, u)) = \Pr(U^i_t = u'_i | S_t = s, U^\all_{t-1} = u)$.

%
This completes the inductive proof that $\Pr(\grave A_t.u^\all = u', \grave A_t.e = e | \grave S_t = (s, u)) = \Pr(U^\all_t = u', E_t = e | S_t = s, U^\all_{t-1} = u)$.
We still must consider $\grave A_t.a$.
This is given by the output of some predefined subset of coagents, which is the same subset in both the synchronous and asynchronous network.
%
%
We showed that the distribution over outputs was the same for corresponding coagents in the two networks, and therefore can conclude immediately that $Pr(\grave A_t.a = a | \grave S_t = (s, u)) = \Pr(A_t = a | S_t = s, U^\all_{t-1} = u)$. Finally:
%
\begin{align*}
    \Pr(A_t = a, U^\all_t = u', E_t = e | S_t = s, U^\all_{t-1} = u) 
    &= \Pr(\grave A_t.a = a, \grave A_t.u^\all = u', \grave A_t.e = e | \grave S_t = (s, u)) \\
    &= \Pr(\grave A_t = (a, u', e) | \grave S_t = (s, u)) \\
    &= \grave \pi((s, u), (a, u', e)).
\end{align*}

\end{proof}

\subsection{Equivalence of Objectives}

In both settings, the network depends on the same parameter vector, $\theta$.
In this section, we show that for all settings of this parameter vector the resulting sum of rewards is equivalent in both settings.
That is,
$J(\theta) = \grave J(\theta)$.

\begin{proof}
    We begin by showing that the distribution over the ``true" states and actions is equal in both settings, that is, for all $s \in \mathcal S, a \in \mathcal A$, $Pr(\grave S_t.s = s, \grave A_t.a = a) =\Pr(S_t = s, A_t = a)$.
    Once this is shown, we show that the reward distributions are the same, that is, for all $r$, $\Pr(\grave R_t = r) = \Pr(R_t = r)$.
    Finally, we show $J(\theta) = \grave J(\theta)$.
    
    \subsubsection{Equivalence of State Distributions}
    
    First, we show that $\Pr(\grave S_t = (s, u)) = \Pr(S_t = s, U^\all_{t - 1} = u)$, by induction over time steps.
    The base case is the initial state, $\grave S_0$. 
    %
    %
    We assumed in the problem setup for the asynchronous setting that for all $i$ and $j$, the random variables $S_0$, $U^i_{-1}$, and $U^j_{-1}$ are independent.
    %
    For all $s$ and $u$:
    \begin{align*}
        \Pr(\grave S_0 = (s, u)) 
        &= \grave d_0((s,u)) \\
        &= d_0(s) \prod_{i = 1}^m h^i_0(u_i) \\
        &= \Pr(S_0 = s) \prod_{i = 1}^m \Pr(U^i_{-1} = u_i) \\
        &= \Pr(S_0 = s, U^\all_{-1} = u) \\
        &= \Pr(S_0 = s, U^\all_{-1} = u).
    \end{align*}
    %
    %
    Thus, we've proven the base case. Next we consider the inductive step: 
    %
    \begin{align*}
        &\Pr(\grave S_{t+1} = (s', u') | \grave S_t = (s, u)) \\
        &= \sum_{(a, u'', e) \in \grave A} \Pr(\grave S_{t+1} = (s', u') | \grave S_t = (s, u), \grave A_t = (a, u'', e)) \Pr(\grave A_t = (a, u'', e) | \grave S_t = (s, u)) \\
        &= \sum_{(a, u'', e) \in \grave A} \grave P((s, u), (a, u'', e), (s', u')) \grave \pi((s, u), (a, u'', e)) \\
        &= \sum_{a \in \mathcal A, u'' \in \mathcal U^\all, e \in \mathcal E} 
        \begin{cases}
            P(s, a, s') \grave \pi((s,u), (a, u'', e)) &\text{if } u' = u'' \\
            0 &\text{otherwise},
        \end{cases}
    \end{align*}
    The case statement comes from the definition of $\grave P$.
    Clearly, we can eliminate all of the parts of the summation where $u' \neq u''$.
    Therefore:
    \begin{align*}
        \Pr(\grave S_{t+1} = (s', u') | \grave S_t = (s, u)) = \sum_{a \in \mathcal A, e \in \mathcal E} P(s, a, s') \grave \pi((s, u), (a, u', e)).
    \end{align*}
    Next, we can apply the equivalence shown in section \ref{sec:sync_network} and the definition of $P$:
    \begin{align*}
        &\sum_{a \in \mathcal A, e \in \mathcal E} P(s, a, s') \grave \pi((s, u), (a, u', e)) \\
        = &\sum_{a \in \mathcal A, e \in \mathcal E} \Pr(S_{t+1} = s' | A_t = a, S_t = s) \Pr(A_t = a, U^\all_t = u', E_t = e | S_t = s, U^\all_{t-1} = u). \\
    \end{align*}
    By the law of total probability, we can eliminate $E_t$ from the expression:
    \begin{align*}
        \sum_{a \in \mathcal A} \Pr(S_{t+1} = s' | A_t = a, S_t = s) \Pr(A_t = a, U^\all_t = u' | S_t = s, U^\all_{t-1} = u). \\
    \end{align*}
    Next, $S_{t+1}$ is conditionally independent of $U_t^\all$ and $U_{t-1}^\all$ given $S_t$ and $A_t$, so we can rewrite the expression as a sum over a single probability:
    \begin{align*}
        &\sum_{a \in \mathcal A} \Pr(S_{t+1} = s' | A_t = a, U^\all_t = u', S_t = s, U^\all_{t-1} = u) \Pr(A_t = a, U^\all_t = u'| S_t = s, U^\all_{t-1} = u) \\
        = &\sum_{a \in \mathcal A} \Pr(S_{t+1} = s', A_t = a, U^\all_t = u' | S_t = s, U^\all_{t-1} = u).
    \end{align*}
    Finally, by the law of total probability, we eliminate $A_t$:
    \begin{align*}
        &\sum_{a \in \mathcal A} \Pr(S_{t+1} = s', A_t = a, U^\all_t = u' | S_t = s, U^\all_{t-1} = u) = \Pr(S_{t+1} = s', U^\all_t = u' | S_t = s, U^\all_{t-1} = u).
    \end{align*}
    Thus, the inductive hypothesis holds.
    We have therefore shown that for all $t$, $\Pr(\grave S_t = (s, u)) = \Pr(S_t = s, U^\all_{t-1} = u)$.

    \subsubsection{Equivalence of Reward Distributions}
    
    It follows immediately from the above equality and \ref{sec:sync_network} that $\Pr(\grave A_t = (a, u, e)) = \Pr(A_t = a, U^\all_t = u, E_t = e)$.
    We turn our attention to the reward distribution:
    
    
    \begin{align*}
        &\Pr(\grave R_t = r)  \\
        = & \sum_{(s, u) \in \grave{\mathcal S}} \sum_{(a, u', e) \in \grave{\mathcal A}} \sum_{(s', u'') \in \grave{\mathcal S}} \Pr(\grave R_t = r | \grave S_t = (s, u), \grave A_t = (a, u', e), \grave S_{t + 1} = (s', u'')) \\
        & \times \Pr(\grave S_t = (s, u), \grave A_t = (a, u', e), \grave S_{t + 1} = (s', u'')) \\
        =& \sum_{(s, u) \in \grave{\mathcal S}} \sum_{(a, u', e) \in \grave{\mathcal A}} \sum_{(s', u'') \in \grave{\mathcal S}} \grave{R}((s,u), (a, u', e), (s', u'')) \Pr(\grave S_t = (s, u), \grave A_t = (a, u', e), \grave S_{t + 1} = (s', u'')) \\
        =& \sum_{(s, u) \in \grave{\mathcal S}} \sum_{(a, u', e) \in \grave{\mathcal A}} \sum_{(s', u'') \in \grave{\mathcal S}} R(s, a, s') \Pr(\grave S_t = (s, u), \grave A_t = (a, u', e), \grave S_{t + 1} = (s', u'')) \\
        =& \sum_{(s, u) \in \grave{\mathcal S}} \sum_{(a, u', e) \in \grave{\mathcal A}} \sum_{(s', u'') \in \grave{\mathcal S}} R(s, a, s') \Pr(S_t = s, U^\all_{t-1} = u, A_t = a, U^\all_t = u', E_t = e, S_{t + 1} = s') \\
       =& \sum_{s \in \mathcal S} \sum_{a \in \mathcal A} \sum_{s' \in \mathcal S} R(s, a, s') \Pr(S_t = s, A_t = a, S_{t+1} = s') \\
       =& \sum_{s \in \mathcal S} \sum_{a \in \mathcal A} \sum_{s' \in \mathcal S} \Pr(R_t = r | S_t = s, A_t = a, S_{t+1} = s') \Pr(S_t = s, A_t = a, S_{t+1} = s') \\
       =& \Pr(R_t = r).
    \end{align*}
    
    \subsubsection{Equivalence of Objectives}
    
    Finally, we show the objectives are equal: 
    \begin{align*}
        \grave J(\theta) &= \mathbf E[\sum_{t=0}^T \grave \gamma^t \grave R_t]
        = \mathbf E[\sum_{t=0}^T \gamma^t R_t]
        = J(\theta),
    \end{align*}
    by linearity of expectation.
\end{proof}

\section{Experimental Details of Finite Difference Comparison}
\label{sec:finite_dif_details}

\begin{figure}
  \centering
  \includegraphics[width=.5\linewidth]{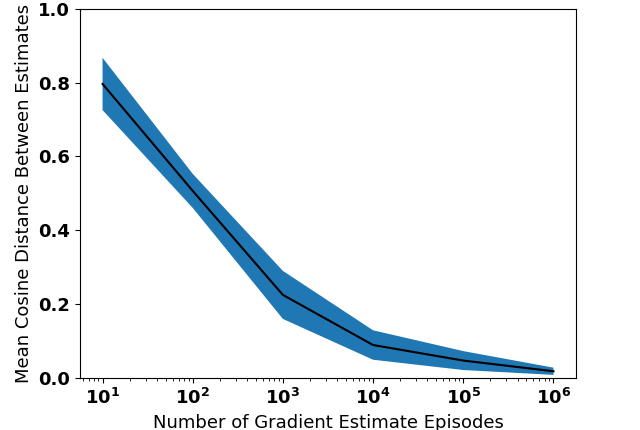}
  \caption{The average (across coagents) cosine distance between the gradient estimates of the finite difference method and the CPGT (vertical axis) versus the number of episodes used for the CPGT estimate (horizontal axis). This data is drawn from $20$ trials. Error bars represent standard error. $5\times10^{8}$ episodes are used for each finite difference estimate. As the amount of data used for the CPGT gradient estimate increases, the cosine distance approaches zero, indicating that the two gradient estimates converge to the same value as the amount of data increases.}
  \label{finite_dif}
\end{figure}

To empirically test the Asynchronous Coagent Policy Gradient Theorem (ACPGT), we compare the gradient ($\nabla J$) estimates of the ACPGT and a finite difference method.
Finite difference methods are a well-established technique for computing the gradient of a function from samples; they serve as a straightforward baseline to evaluate the gradients produced by our algorithm.
We expect these estimates to approach the same value as the amount of data used approaches infinity. For the purposes of testing the ACPGT, we use a simple toy problem and an asynchronous coagent network. The results are presented in Figure \ref{finite_dif}; this data provides empirical support for the ACPGT.

We use a simple $3 \times 3$ Gridworld.
The network structure used in this experiment consists of three coagents with tabular state-action value functions and softmax policies: Two coagents receive the tabular state as input, and each of those two coagents have a single tabular binary output to the third coagent, which in turn outputs the action (up, down, left, or right).  This results in two coagents with $18$ parameters each, and one coagent with $16$ parameters, resulting in a network with $52$ parameters.
The coagents asynchronously execute using a geometric distribution; the environment updates every step and each coagent has a 0.5 probability of executing each step. The gradient estimates appear to converge, providing empirical support of the CPGT.
The data is drawn from $20$ trials. $5\times10^{8}$ episodes were used for each finite difference estimate.
For each trial, five training episodes were conducted before the parameters were frozen and the two gradient estimation methods were run.
The coagents were trained with Sutton \& Barto's (2018) actor-critic with eligibility traces algorithm and shared a single critic. Note that the critic played no role in the gradient estimation methods, only in the initial training episodes.
Hyperparmaters used:
critic step size $= 0.024686$,
$\gamma = 1$,
input agent step size $= 0.02842$,
output agent step size $= 0.1598$, and
all agents' $\lambda = 0.8085$.

\begin{figure}
  \centering
  \includegraphics[width=.5\linewidth]{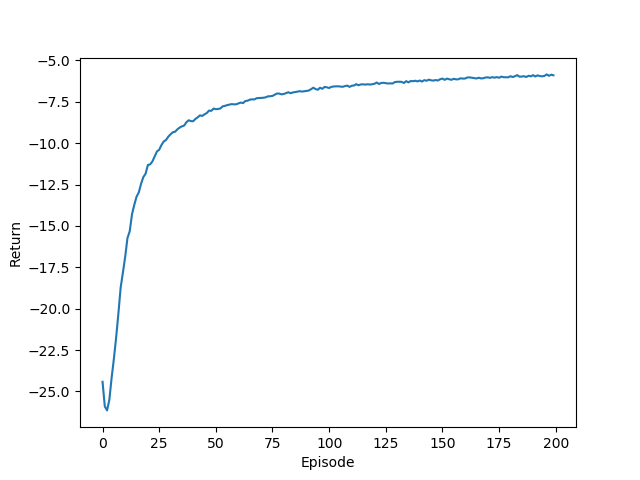}
  \caption{The mean learning curve from 10,000 trials of coagent network described in Section \ref{sec:finite_dif_details} (without freezing the parameters after 5 episodes). While a large amount of data is required to reduce the cosine distance to near 0, note that this fact does not reflect how long the network takes to learn near-optimal behavior.}
  \label{fig:coagents_convergence}
\end{figure}

Note that, while a large amount of data is required to reduce the cosine distance to near 0, this does not reflect how long
the network takes to learn near-optimal behavior. Figure \ref{fig:coagents_convergence} depicts the mean episodic return of 10,000 trials of 200
episodes each (the same environment, algorithms, network structure, hyperparameters, etc. described above).
Despite its handicap of only having a coagent execute with a 0.5 probability at each time step (a rather significant
handicap for this network structure in a gridworld), the network achieves near-optimal returns relatively quickly.

\section{Option-Critic}

\subsection{Option-Critic Complete Description}

\label{sec:option-critic-complete-description}

In this section, we adhere mostly to the notation given by Bacon et al. (2017)'s, with some minor changes used to enhance conceptual clarity regarding the inputs and outputs of each policy.
In the option-critic framework, the agent is given a set of \emph{options}, $\Omega$.
The agent selects an option, $\omega {\in} \Omega$, by sampling from a policy $\pi_\Omega: \mathcal S \times \Omega \to [0, 1]$.
An action, $a {\in} \mathcal A$, is then selected from a policy which considers both the state and the current option: $\pi_\omega: (\mathcal S \times \Omega) \times \mathcal A \to [0, 1]$.
A new option is not selected at every time step;
rather, an option is run until a \emph{termination function}, $\beta: (\mathcal S \times \Omega) \times \{0, 1\} \to [0, 1]$, selects the termination action, 0.
If the action 1 is selected, then the current option continues.
$\pi_\omega$ is parameterized by weights $\theta$, and $\beta$ by weights $\vartheta$.

\subsection{Option-Critic Gradient Equivalence}
\label{sec:option-critic-equiv}

The APCGN expression gives us 
$
    \frac{\partial J}{\partial \vartheta} = \sum_{x \in (\mathcal S \times \Omega)} d^\pi_\Omega(x) \sum_{u \in \{0, 1\}} \frac{\partial \beta(x, u)}{\partial \vartheta} Q_\beta(x, u).
$
We will show that this is equivalent to Bacon et al. (2017)'s expression for $\frac{\partial J}{\partial \vartheta}$.
Note that we only have two actions, whose probabilities must sum to one.
Therefore, the gradients of the policy are equal in magnitude but opposite in sign.
That is, for all $x \in (\mathcal S \times \Omega)$: 
$\beta(x, 0) + \beta(x, 1) = 1,$
so 
$\frac{\partial \beta(x, 0)}{\partial \vartheta} = -\frac{\partial \beta(x, 1)}{\partial \vartheta}.$
Additionally, we know that $Q_\beta(x, 1)$ is the expected value of continuing option $x.\omega$ in state $x.s$, given by $Q_\Omega(x.s, x.\omega)$.
$Q_\beta(x, 0)$ is the expected value of choosing a new action in state $x.s$, given by $V_\Omega(x.s)$,
and therefore, $Q_\beta(x, 1) - Q_\beta(x, 0) = A_\Omega(x.s, x.\omega)$.
The full derivation is:
\begin{align*}
    \frac{\partial J}{\partial \vartheta} &{=} \!\!\!\!\!\sum_{x \in (\mathcal S \times \Omega)} \!\!\!\!\!\! d^\pi_\Omega(x) \bigg[\frac{\partial \beta(x, 0)}{\partial \vartheta} Q_\beta(x, 0) {+} \frac{\partial \beta(x, 1)}{\partial \vartheta} Q_\beta(x, 1)\bigg] \\
    &{=} -\!\!\!\!\!\sum_{x \in (\mathcal S \times \Omega)} d^\pi_\Omega(x) \frac{\partial \beta(x, 0)}{\partial \vartheta} (Q_\beta(x, 1) - Q_\beta(x, 0)) \\
    &{=} -\!\!\!\!\!\sum_{x \in (\mathcal S \times \Omega)} d^\pi_\Omega(x) \frac{\partial \beta(x, 0)}{\partial \vartheta} A_\Omega(x.s, x.\omega).
\end{align*}
We see that the result is exactly equivalent to the expression for $\partial J / \partial \vartheta$ derived by Bacon et al. (2017).

\end{document}